%% file: draft-first-version.tex
\documentclass[11pt]{article}
\usepackage[inline]{enumitem}
\input{Begin0}

\input{Core}

\usepackage[left=3.5cm,right=3.5cm,top=4cm, bottom=4.5cm]{geometry}
\usepackage[utf8]{inputenc}
\usepackage[T1]{fontenc}
\usepackage{setspace}
\title{Error Analysis of Neural-Network-Based Engression}
\author{
Juntong Chen$^1$, \quad
Zijian Guo$^{2}$\thanks{Zijian Guo and Xinwei Shen contributed equally to this work and serve as co-corresponding authors.}, \quad
Xinwei Shen$^{3}$\footnotemark[1] \\[0.5cm]
$^1$School of Mathematical Sciences, Xiamen University \\
$^2$Center for Data Science, Zhejiang University\\
$^3$Department of Statistics, University of Washington
}
\date{}
\usepackage[utf8]{inputenc}
\usepackage[T1]{fontenc}
\usepackage{yhmath}
\usepackage{amsthm,amsmath,amssymb,amsbsy,bbm,mathrsfs,supertabular,
eurosym,graphicx,enumitem,xcolor}
\usepackage{bm}
\usepackage{mathtools}
\RequirePackage[colorlinks,citecolor=blue,urlcolor=blue]{hyperref}

\newcommand{\N}{\mathbb{N}}
\newcommand{\E}{\mathbb{E}}
\renewcommand{\P}{\mathbb{P}}

\newcommand{\coverN}{\mathcal{N}}

\usepackage[normalem]{ulem}
\usepackage[authoryear]{natbib}

\def\bbE{\mathbb{E}}

\def\cF{\mathcal{F}}
\def\cL{\mathcal{L}}
\def\cN{\mathcal{N}}
\def\cX{\mathcal{X}}

\theoremstyle{plain}
\newtheorem{theorem}{Theorem}
\newtheorem{proposition}[theorem]{Proposition}
\newtheorem{lemma}[theorem]{Lemma}

\theoremstyle{remark}

\begin{document}
\maketitle
\begin{abstract}
Engression \citep{shen2024engression} learns a conditional distribution by fitting a generative model $Y = f(X,\varepsilon)$ under the energy score, a strictly proper scoring rule. We provide a theoretical error analysis of engression implemented with deep neural networks. We decompose the excess risk into three components: the approximation error, the stochastic error, and the Monte Carlo error. Based on this decomposition, we establish convergence rates under the assumption that the target conditional generator admits a compositional smoothness structure.
\end{abstract}
\section{Introduction}
Given response variables $Y\in\R^p$ and a set of covariates or predictors $X\in\R^{d}$, a fundamental problem in statistics and machine learning is to model the dependence between $Y$ and $X$. Traditional regression paradigms predominantly focus on estimating the conditional mean or median. However, when the underlying conditional distribution exhibits multimodality, heteroscedasticity, or heavy-tailed asymmetry, these summary statistics are no longer adequate to capture the full spectrum of the relationship. To comprehensively understand how the response depends on the predictor, it becomes necessary to learn the entire conditional distribution $\P_{Y|X}$ over the $p$-dimensional sample space.

To this end, established statistical approaches provide foundational frameworks, including kernel conditional density estimation \citep{hall2004cross} and conditional quantile regression \citep{koenker2005quantile}. Subsequent work has derived theoretical guarantees for various nonparametric estimators; for example, \cite{pic2023} analyzed the $k$-NN and kernel methods, while \cite{peter2025} investigated the theoretical performance of local polynomial estimators. While these paradigms enjoy strong theoretical guarantees in low-dimensional settings, contemporary scientific problems, including representation learning \citep{bengio2013representation}, sufficient dimension reduction \citep{li1991sliced,cook1998regression}, and generative modeling, often arise in high-dimensional regimes where either $d$ or $p$ is large. This modern shift presents important opportunities to complement classical statistical tools with deep generative architectures, leveraging their expressive capacity to adaptively learn conditional distributions supported on nonlinear manifolds.

Recently, \cite{zhou2023deep} introduced a nonparametric generative approach for conditional distribution learning. Specifically, for a given predictor value $X=x$, the goal is to estimate a function $f^*(x,\varepsilon)$, where $\varepsilon$ is drawn from a simple reference distribution, such that the distribution of $f^*(x,\varepsilon)$ coincides with the conditional distribution of $Y$ given $X=x$. Building on this framework, \cite{song2026wasserstein} proposed augmenting the 1-Wasserstein objective with a regression regularization term, thereby jointly minimizing the distribution mismatch and the prediction error. Both methods are based on generative adversarial networks (GANs) \citep{goodfellow2014generative}, whose training is formulated as a min--max optimization problem that seeks a saddle point. Consequently, the optimization procedure can be computationally demanding and prone to instability.

\cite{shen2024engression} addressed this problem by leveraging an energy loss formulation and proposed a new framework, referred to as {\textit{engression}}. Unlike GAN-based approaches that require solving a saddle-point problem, engression only involves a standard minimization problem, leading to a more stable optimization procedure. Moreover, they demonstrated that engression exhibits superior extrapolation performance. Nevertheless, a finite-sample analysis of engression under deep neural network parameterizations remains limited. This paper addresses this problem. We also note the concurrent work of \citet{michigan2026}, which develops a broader theoretical analysis covering both engression and reverse Markov engression, while its single-step engression result is established over a standard H\"older smoothness class. When our compositional class specializes to the classical $\alpha$-H\"older smoothness class, our convergence rate has the same polynomial order as theirs, up to logarithmic factors, although the underlying assumptions differ. Another recent study by \citet{chai2026} also establishes convergence rates for engression under a hierarchical compositional structure, together with additional regularity conditions on the conditional distribution.

The article is structured as follows. In Section~\ref{setting-sec}, we introduce the statistical framework and some preliminaries of engression. Section \ref{dnns-section} presents the architecture of sparse deep neural networks and their fundamental properties. Under a compositional smoothness assumption on the true conditional generator, we establish non-asymptotic convergence rates for the expected excess risk in Section~\ref{rate}. The analysis is based on an error decomposition result, which is further elaborated in Section~\ref{rate}. Section~\ref{conclusion} presents the conclusions, while all proofs and extensions are deferred to Section~\ref{proofs-sec}. 

\textit{Notation}: Let $\mathbb{N}_0=\{0,1,\ldots\}$, $\mathbb{N}=\{1,2,\ldots\}$, and $\mathbb{R}_+=(0,\infty)$. For any $a \in \mathbb{R}$, $\lfloor a \rfloor$ denotes the largest integer strictly less than $a$. For any vector ${\bf z}=(z_1,\ldots,z_p)^\top\in\mathbb R^p$, define $\|\mathbf{z}\|_{2}
=\sqrt{(\sum_{i=1}^p z_i^2)/p}$, $\|\mathbf{z}\|_{\infty}
=
\max_{1\leq i\leq p}|z_i|$. For a vector-valued function
$f=(f_1,\ldots,f_p)^\top:\cD\to\mathbb R^p$, define $\|f\|_{L^\infty(\cD)}
=
\max_{1\leq j\leq p}\sup_{{\bs x}\in\cD}|f_j({\bs x})|$. When $\cD$ is clear from the context, we use the shorthand notation $\|\cdot\|_\infty$. For a $d_1 \times d_2$ matrix $M$, we define the entry-wise maximum norm $\|M\|_{\infty}=\max_{i\in[d_1],j\in[d_2]}|M_{i,j}|.$

\section{Statistical setting}\label{setting-sec}
In this paper, we consider the problem of conditional generative modeling. Specifically, let $\mathcal{D}_n = \{(X_i, Y_i)\}_{i=1}^n$ denote a dataset consisting of $n$ i.i.d. observations drawn from the product space $\mathcal{X} \times \mathcal{Y} \subseteq \mathbb{R}^{d} \times \mathbb{R}^{p}$. We assume that each observation pair is generated according to an underlying structural model defined by a measurable mapping $f^*:\mathcal{X} \times \mathcal{E} \rightarrow \mathcal{Y}$, such that
\begin{equation}\label{eq:statistical-setting}
Y_i = f^*(X_i, \varepsilon_i),
\end{equation}
where $\cE\subseteq\R^k$ denotes the noise space, and $\{\varepsilon_i\}_{i=1}^{n}$ are independent noise variables drawn from a known reference distribution $\mathbb{P}_{\varepsilon}$ (e.g., the uniform distribution), which are further assumed to be independent of the covariates $\{X_i\}_{i=1}^n$. For simplicity, we take $\cX\times\cE=[0,1]^{d+k}$ and assume that the distribution of $(X,\varepsilon)$ is supported
on this domain.

The generative formulation in \eqref{eq:statistical-setting} is motivated by the noise-outsourcing lemma stated as below. This result can be found, for example, in Theorem 5.10 of \cite{kallenberg} or Lemma 3.1 of \cite{austin2015}. We state it below for completeness.
\begin{lemma}\label{outsourcing-lemma}
Fix standard Borel spaces $\cX$, $\cY$ and let $X\in\cX$, $Y\in\cY$ be random vectors. Then there exists a measurable function $f^*:\cX\times[0,1]\rightarrow\cY$ and a random variable $\varepsilon\sim\operatorname{Unif}[0,1]$ such that $\varepsilon$ is independent of $X$ and $$(X,Y)=\big(X,f^*(X,\varepsilon)\big),\quad \mbox{almost surely}.$$  
\end{lemma}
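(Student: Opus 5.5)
We prove the noise-outsourcing lemma by a conditional quantile transform, first for real $Y$ and then for general standard Borel $\cY$. The result is classical (Kallenberg, Theorem 5.10), so the plan just follows that argument.

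\textbf{Step 1: reduce to $\cY=\R$.} Since $\cY$ is standard Borel, there is a Borel isomorphism $\iota:\cY\to B$ onto a Borel subset $B\subseteq[0,1]$. This is the Kuratowski theorem; if $\cY$ is countable we use an injection into $[0,1]$. So it suffices to treat the real variable $\tilde Y=\iota(Y)$. If we construct $\tilde f$ with $\tilde Y=\tilde f(X,\varepsilon)$ a.s., then $\tilde f(X,\varepsilon)\in B$ a.s. We redefine $\tilde f$ on the null set where it leaves $B$ by sending it to a fixed point $b_0\in B$; this redefinition is measurable because $B$ is Borel. Then $f^*=\iota^{-1}\circ\tilde f$ is measurable and works.

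\textbf{Step 2: construct $\varepsilon$ on the scalar case.} Let $\mu(x,\cdot)$ be a regular conditional distribution of $\tilde Y$ given $X=x$. It exists because $\R$ is standard Borel. Let $F(x,y)=\mu(x,(-\infty,y])$, which is jointly measurable: measurable in $x$ for each $y$ and right-continuous in $y$. Define the conditional quantile function
\[
f^*(x,u)=\inf\{y\in\mathbb{Q}: F(x,y)\ge u\}.
\]
It is measurable in $(x,u)$ because the infimum is over rationals and right-continuity lets us restrict to $\mathbb{Q}$. For every $(x,u)$ we have the equivalence $f^*(x,u)\le y\iff u\le F(x,y)$.

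Next take an auxiliary $U\sim\operatorname{Unif}[0,1]$, enlarging the probability space if needed, and set
\[
\varepsilon=F(X,\tilde Y^-)+U\bigl(F(X,\tilde Y)-F(X,\tilde Y^-)\bigr),
\]
which is the randomized probability integral transform.

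\textbf{Step 3: check the two properties.}
\begin{itemize}
\item \emph{$\varepsilon$ is uniform and independent of $X$.} Conditionally on $X=x$, $\tilde Y\sim\mu(x,\cdot)$, and the randomized PIT of a real variable with c.d.f.\ $F(x,\cdot)$ is exactly $\operatorname{Unif}[0,1]$. Because this conditional law does not depend on $x$, $\varepsilon$ is independent of $X$ and uniform.
\item \emph{$f^*(X,\varepsilon)=\tilde Y$ a.s.} By construction $F(X,\tilde Y^-)\le\varepsilon\le F(X,\tilde Y)$. The equivalence from Step 2 then gives $f^*(X,\varepsilon)\le\tilde Y$. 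Strict inequality would require $\varepsilon\le F(X,y)$ for some $y<\tilde Y$. That forces $\varepsilon=F(X,\tilde Y^-)$, and $\tilde Y$ lies in a flat region of $F(X,\cdot)$ left of $\tilde Y$ or at the boundary. Conditionally on $X$, this event has probability zero: either $U=0$ at a jump, or $\tilde Y$ falls in a $\mu(x,\cdot)$-null set, since the union of left endpoints of flat stretches and of atoms with $U=0$ is null.
\end{itemize}

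\textbf{Main obstacle.} The delicate part is the last bullet together with the joint measurability. Concretely, we must show that the exceptional set $\{f^*(X,\varepsilon)\ne\tilde Y\}$ is null uniformly in $x$, and then integrate over $X$ via Fubini using the regular conditional distribution. We also need to note that $\varepsilon$ may live on an extension of the original space; the lemma's "there exists $\varepsilon$" permits this.
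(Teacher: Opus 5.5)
Your argument is correct and is essentially the classical proof behind the references the paper cites; the paper itself gives no proof, only pointing to Theorem 5.10 of Kallenberg and Lemma 3.1 of Austin, and your randomized probability integral transform is an explicit construction of the coupling that upgrades the distributional transfer statement to the almost-sure one. The one thing to repair is the justification in your last bullet: the relevant null set is not the set of left endpoints of flat stretches (these can be atoms, e.g.\ for $\tfrac12\delta_0+\tfrac12\,\mathrm{Unif}[1,2]$) but $\{z:\mu(x,(y,z])=0 \text{ for some } y<z\}$, which is a countable union of $\mu(x,\cdot)$-null intervals; you can also avoid this bookkeeping by noting that $f^*(X,\varepsilon)\le\tilde Y$ while $f^*(X,\varepsilon)$ and $\tilde Y$ have the same law (because $\varepsilon$ is uniform and independent of $X$), which forces equality almost surely, and you should state explicitly that $U$ is independent of $(X,Y)$.
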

The requirement that the underlying spaces be standard Borel is mild and covers a broad range of commonly encountered settings. In practice, one may also draw $\varepsilon$ from alternative reference distributions $\mathbb{P}_{\varepsilon}$ on $\cE$ that are easy to sample, such as the Gaussian distribution on $\mathbb{R}^k$ (see, e.g., \cite{shen2024engression} and \cite{song2026wasserstein}). Corresponding extensions of the noise-outsourcing lemma to the Gaussian setting can be found in \cite{zhou2023deep} and \cite{sharma2023bayesian}. Essentially, Lemma~\ref{outsourcing-lemma} implies that for $\mathbb{P}_X$-almost every $x \in \mathcal{X}$,
\begin{equation}\label{con-generator}
f^*(x,\varepsilon) \sim \mathbb{P}_{Y \mid X = x},
\quad\mbox{where}\quad
\varepsilon \sim \mathbb{P}_{\varepsilon}. \end{equation}
Thus, for any fixed \(x\), the conditional distribution \(\mathbb{P}_{Y \mid X=x}\) can be obtained as the pushforward of \(\mathbb{P}_{\varepsilon}\) through the mapping \(f^*(x,\cdot)\). Such a function \(f^*\) need not be unique, even when the distribution of the noise \(\varepsilon\) is specified. Moreover, the dimension of the noise \(\varepsilon\) need not coincide with that of the responses \(Y\). In the sequel, we refer to \(f^*\) in \eqref{eq:statistical-setting} as the {\em conditional generator}.

The goal is to recover the conditional distribution \(\mathbb{P}_{Y \mid X=x}\) induced by the model \eqref{eq:statistical-setting}. We assume that the true generative mapping \(f^*\) can be well approximated by a pre-specified function class \(\mathcal{F}\). In this work, \(\mathcal{F}\) is instantiated as a class of deep neural networks (see Section~\ref{dnns-section} for details), and the estimator \(\widehat{f} \in \mathcal{F}\) is obtained by minimizing an appropriate empirical loss, defined as follows.

To evaluate the performance of a given estimator, we employ the \textit{energy loss} introduced in the \textit{engression} framework \citep{shen2024engression}. This loss serves as a proper scoring rule derived from energy distance theory. For a data pair $(x, y) \in \mathcal{X} \times \mathcal{Y}$ and a candidate generator $f: \mathcal{X} \times \mathcal{E} \to \mathcal{Y}$, we define
\begin{equation}\label{L-exp-epsilon}
\mathcal{L}(f; x, y) = \mathbb{E}_{\varepsilon}\left[ \|y - f(x, \varepsilon)\|_2 \right] - \frac{1}{2} \mathbb{E}_{\varepsilon, \varepsilon'}\left[ \|f(x, \varepsilon) - f(x, \varepsilon')\|_2 \right],
\end{equation}
where $\varepsilon$ and $\varepsilon'$ are i.i.d. draws from the noise distribution \(\mathbb{P}_{\varepsilon}\). In the idealized design, the resulting estimator $\widehat{f}_{\infty}$ is defined as the empirical minimizer of the loss,
\begin{equation}\label{loss-1}
\widehat{f}_{\infty}\in \argmin_{f \in \mathcal{F}} \frac{1}{n} \sum_{i=1}^{n} \mathcal{L}(f; X_i, Y_i).
\end{equation}
In practical applications, however, evaluating the expectation over the noise distribution $\varepsilon$ in \eqref{L-exp-epsilon} can be computationally expensive. Instead, one commonly employs an empirical approximation by drawing \(m\) i.i.d. noise samples \(\{\varepsilon_j\}_{j=1}^m\) from \(\mathbb{P}_{\varepsilon}\). Denoting ${\bm\varepsilon}=(\varepsilon_1,\ldots,\varepsilon_m)^{\top}$, where $m\geq 2$, this leads to the following $m$-sample Monte Carlo surrogate of the loss in \eqref{L-exp-epsilon},
\begin{equation}\label{fm-loss-def}
\widehat\cL_m(f;x,y,{\bm\varepsilon}) 
= \frac{1}{m}\sum_{j=1}^{m}\|y - f(x,\varepsilon_j)\|_2 
- \frac{1}{2m(m-1)}\sum_{j \neq j'}^{m}\|f(x,\varepsilon_j) - f(x,\varepsilon_{j'})\|_2,    
\end{equation}
which yields the estimator
\begin{equation}\label{loss-2}
\widehat f_{m}\in\argmin_{f\in\cF}\frac{1}{n}\sum_{i=1}^{n}\widehat{\mathcal{L}}_m\big(f;X_i,Y_i,{\bm\varepsilon}^{(i)}\big).   \end{equation}
Here, each \(\bs{\varepsilon}^{(i)} = (\varepsilon^{(i)}_{1}, \ldots, \varepsilon^{(i)}_{m})^{\top} \in \mathcal{E}^m\) is a vector consisting of $m$ i.i.d. noise samples. 

At the population level, both \eqref{loss-1} and \eqref{loss-2} are intended to approximate the population risk $$\mathcal{R}(f)= \mathbb{E}[\mathcal{L}(f; X, Y)],$$ where the expectation is taken with respect to the joint distribution of \((X, Y)\). In the following, we focus on analyzing the theoretical performance of the Monte Carlo surrogate \eqref{loss-2}. The performance of the estimator in \eqref{loss-1} can be analyzed similarly; details are provided in Section~\ref{proofs-sec}. Heuristically, the objective in \eqref{loss-2} converges to that in \eqref{loss-1} as \(m \to \infty\), thereby linking the corresponding estimators. 

The performance of any estimator \(f:\mathcal{X}\times\mathcal{E}\rightarrow\mathcal{Y}\) is quantified by the \textit{excess risk}, defined as
\begin{equation}\label{def-excess-risk}
\mathcal{R}(f) - \mathcal{R}(f^*)
=
\mathbb{E}[\mathcal{L}(f; X, Y)]
-
\mathbb{E}[\mathcal{L}(f^*; X, Y)],
\end{equation}
where \(f^*\) denotes the true conditional generator in \eqref{eq:statistical-setting}. The excess risk associated with the energy loss in \eqref{def-excess-risk} admits a natural interpretation as a grounded distributional discrepancy. In the univariate setting ($p=1$), so that $Y\in\R$, the excess risk is equivalent to the integrated squared difference between the corresponding cumulative distribution functions (CDFs) \citep{baringhaus2004new,szekely2013energy}, known as the squared $L_2$-Cramér distance
\begin{equation}\label{excess-to-cramer}
\mathcal{R}(f) - \mathcal{R}(f^*) = \mathbb{E}_X\left[\int_{\mathbb{R}}\big(F_f(y|X) - F_{f^*}(y|X)\big)^2 \, dy\right],
\end{equation}
where $F_{f^*}(y|x) = \mathbb{P}_{\varepsilon}(f^*(x,\varepsilon) \leq y)$ denotes the true conditional CDF, and $F_f(y|x) = \mathbb{P}_{\varepsilon}(f(x,\varepsilon) \leq y)$ is the conditional CDF induced by the candidate generator $f$. For the multivariate case ($p > 1$), the excess risk generalizes via the energy distance. Recall that the squared energy distance between two distributions $P$ and $Q$ is defined as
\begin{equation}
D_E^2(P,Q) =\sqrt{p}\cro{2\mathbb{E}\|Y - Z\|_2 - \mathbb{E}\|Y - Y'\|_2 - \mathbb{E}\|Z - Z'\|_2},
\end{equation}
where $Y, Y' \sim P$ and $Z, Z' \sim Q$ are independent copies. Let $\mathbb{P}_{f|X=x}$ denote the conditional distribution of $Y$ given $X=x$, induced by the mapping $f(x, \varepsilon)$ under the noise distribution $\mathbb{P}_{\varepsilon}$. It then follows that
\begin{equation}\label{connect-energy-distance}
\mathcal{R}(f) - \mathcal{R}(f^*) = \frac{1}{2\sqrt{p}} \mathbb{E}_X\left[ D_E^2\big.(\mathbb{P}_{f^*|X}, \mathbb{P}_{f|X}\big) \right],
\end{equation}
which shows that the excess risk is proportional to the expected squared energy distance between the oracle and candidate conditional distributions.

\section{Deep neural network classes}\label{dnns-section}
The proposed estimation framework is built upon a class of deep ReLU neural networks. This section details their architectural configurations and states their fundamental properties relevant to the subsequent error analysis.

Let $\bs{p} = (p_0, p_1, \ldots, p_{L+1}) \in \N^{L+2}$ denote a vector, where $p_0 = d+k$ and $p_{L+1}=p$. The rectified linear unit (ReLU) activation function is defined as $$\operatorname{ReLU}(u) = \max\{u,0\},\quad u\in\R.$$ We define $\cF{(L,\bs{p})}$ as the class of deep ReLU neural networks with depth $L \in \N_0$ and width vector $\bs{p}$, consisting of all functions $f:\R^{d+k}\rightarrow\R^{p}$ of the form
\begin{equation}\label{neural-network}
f({\bm{x}})=A_{L}\circ\operatorname{ReLU}\circ \hspace{2pt}A_{L-1}\circ\cdots\circ\operatorname{ReLU}\circ\hspace{2pt}A_{0}({\bs{x}}),
\end{equation}
where, for $\ell=0,\ldots,L$, $$A_{\ell}({\bs{y}})=M_{\ell}{\bs{y}}+b_{\ell}.$$ Here, $M_{\ell}$ is a $p_{\ell+1}\times p_{\ell}$ weight matrix, $b_{\ell}$ is a bias vector of size $p_{\ell+1}$, and the $\operatorname{ReLU}$ activation function is applied component-wise to any given vector. Throughout this paper, we assume that all entries of the weight matrices $M_{\ell}$ and bias vectors $b_{\ell}$ are bounded within $[-1,1]$. In practice, sparsity in neural networks is often encouraged through techniques such as regularization or specialized architectures \citep{goodfellow2016deep}. A notable example is dropout, which promotes sparse activation patterns by randomly deactivating units during training, thereby ensuring that each neuron is active only for a small subset of the training data \citep{2014dropout}. In line with the framework introduced by \cite{Schmidt-Hieber}, we explicitly enforce parameter sparsity by restricting the network to utilize only a limited number of non-zero parameters. Specifically, let $\|M_{\ell}\|_0$ and $|b_{\ell}|_0$ denote the number of non-zero entries in the weight matrix $M_{\ell}$ and bias vector $b_{\ell}$, respectively. For any $\cB> 0$, let $\sigma_\cB: \mathbb{R}^p \to [-\cB,\cB]^p$ denote the component-wise truncation operator defined as 
\[
\sigma_{\cB}({\bs x}) = \big((x_1 \vee (-\cB)) \wedge \cB, \ldots, (x_p \vee (-\cB)) \wedge \cB\big)^\top,
\]
for ${\bs x}= (x_1, \ldots, x_p)^\top \in \mathbb{R}^p$. Then, the class of $s$-sparse deep ReLU networks restricted to $[-\cB,\cB]^p$ is defined as
\begin{equation*}
\cF(L,{\bs p},s,\cB) = \left\{ \sigma_{\cB} \circ f : f \in \cF(L,{\bs p}), \, \sum_{\ell=0}^L \left( \|M_{\ell}\|_0 + |b_{\ell}|_0 \right) \leq s \right\}.
\end{equation*}
When truncation to $[-\cB,\cB]$ is not required in a given context, we write $\cF(L,{\bs p},s,\infty)$ for the corresponding class. Figure~\ref{fig:dnn-graph} provides a schematic illustration of the network architecture. As mentioned in Section~\ref{setting-sec}, to estimate the conditional distribution $\mathbb{P}_{f^*|X}$, we consider a suitable network class $\cF(L,{\bs p},s,\cB)$ with appropriately chosen depth $L$, width ${\bs p}$, and sparsity level $s$ for the true conditional generator $f^*$, and seek a good candidate within the class. In what follows, we use $\cF_{\rm NN}$ to denote a generic network
class of the form $\cF(L,{\bs p},s,\cB)$, for simplicity.

It is worth emphasizing that, for a given network class $\cF_{\operatorname{NN}}$, we do not assume that $f^*\in\cF_{\operatorname{NN}}$. In the misspecified setting where $f^*\notin\cF_{\operatorname{NN}}$, it follows from the characterization \eqref{connect-energy-distance} and the triangle inequality of the energy distance that, for any $f\in\cF_{\operatorname{NN}}$,
\begin{equation}\label{decom-dis}
\cR(f)-\cR(f^*)\leq \frac{1}{\sqrt{p}}\cro{\mathbb{E}_X D_E^2\left(\mathbb{P}_{f^*|X}, \mathbb{P}_{\bar f|X}\right)+ \mathbb{E}_XD_E^2\left(\mathbb{P}_{\bar f|X}, \mathbb{P}_{f|X}\right)},
\end{equation}
where $\bar f$ is the best approximator among $\cF_{\operatorname{NN}}$, in the sense that its induced conditional distribution $\P_{\bar f|X}$ minimizes the expected squared energy distance to the true conditional distribution $\P_{f^*|X}$ over $\cF_{\operatorname{NN}}$.
\begin{figure}[ht]
\centering
\includegraphics[width=0.5\linewidth]{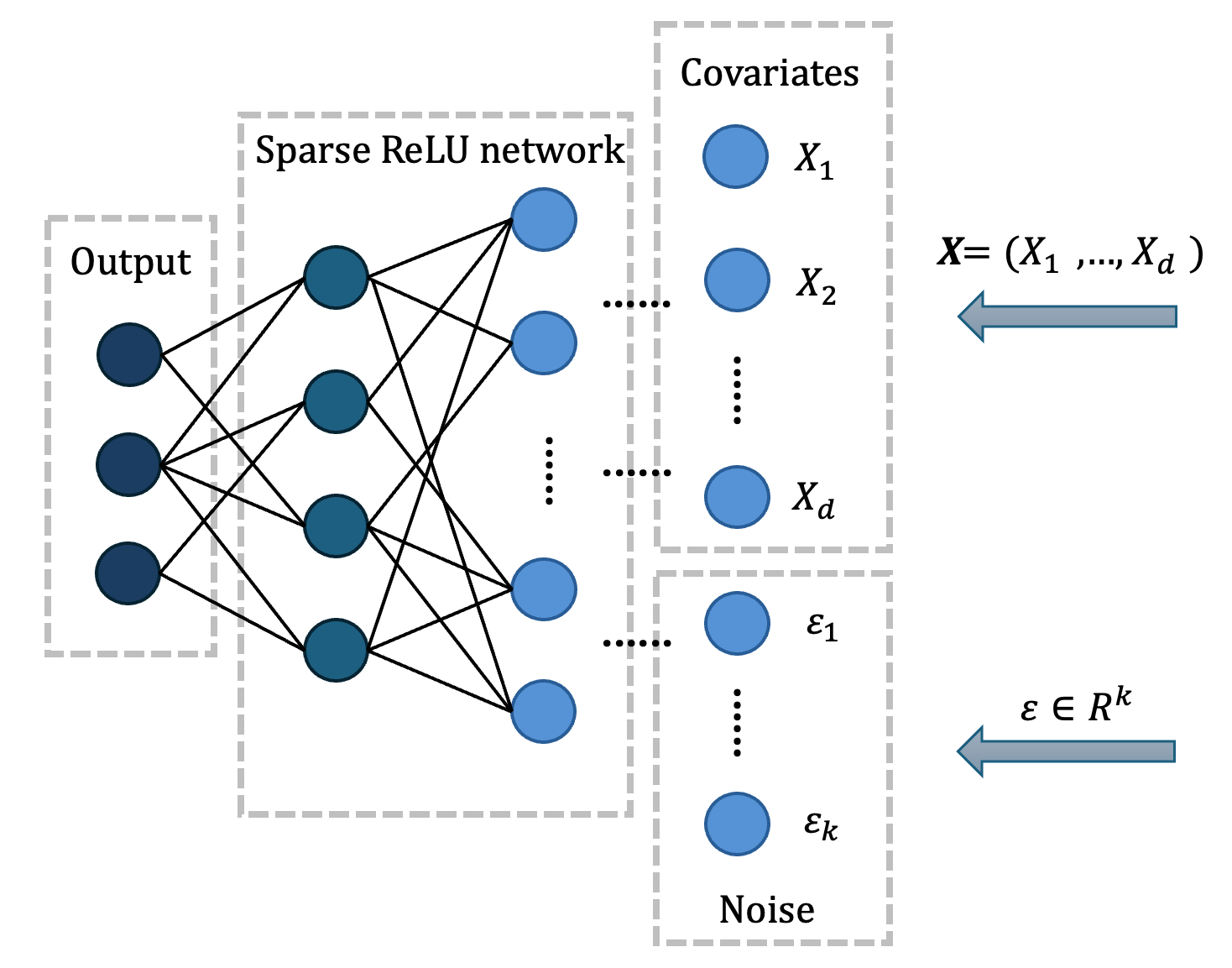}
\caption{A schematic illustration of the deep sparse ReLU network, taking both covariates and noise variables as inputs and producing a $p$-dimensional output vector.}
\label{fig:dnn-graph}
\end{figure}

In particular, \eqref{decom-dis} highlights the importance of choosing an appropriate neural network class $\cF(L,{\bs p},s,\cB)$ for the implementation of \eqref{loss-2}, with a balance between approximation capacity and model complexity. To quantify the complexity of the vector-valued function class $\cF$, we employ its metric entropy. For any two functions $f = (f_1, \ldots, f_p)^\top$ and $g = (g_1, \ldots, g_p)^\top$ mapping from $\mathcal{X}\times\cE$ to $\cY\subseteq \mathbb{R}^p$, define the $L^\infty$-distance (or sup-norm distance) as
\[
\|f - g\|_{L^\infty(\mathcal{X}\times\cE)} = \max_{1 \leq j \leq p} \|f_j - g_j\|_{L^\infty(\mathcal{X}\times\cE)}.
\]
For any $\delta>0$, a set $\mathcal{F}_{\delta}$ is called a $\delta$-cover of $\mathcal{F}$ with respect to the $L^{\infty}$-norm if, for every $f\in\mathcal{F}$, there exists a function $g\in\mathcal{F}_{\delta}$ such that $$\|f - g\|_{L^\infty(\mathcal{X}\times\cE)}\leq\delta.$$ The $\delta$-covering number of $\cF$, denoted by $\mathcal{N}(\delta, \mathcal{F},\|\cdot\|_{L^\infty(\mathcal{X}\times\cE)})$, is defined as the minimum cardinality among all valid $\delta$-covers of $\cF$. If no finite $\delta$-cover exists, the covering number is taken to be infinite. The metric entropy (or simply entropy) of $\mathcal{F}$ at scale $\delta>0$ is then formally defined as $\log\mathcal{N}\big(\delta, \mathcal{F}, \|\cdot\|_{L^\infty(\mathcal{X}\times\cE)}\big)$. The following result provides an upper bound for the metric entropy of $\mathcal{F}(L,\bs{p},s,\cB)$.
\begin{proposition}\label{cover-bound}
Let $V=\prod_{\ell=0}^{L+1} (p_\ell + 1)$. For any $0<\delta\leq 1$,
\begin{align*}
&\log\mathcal{N}\big(\delta,\mathcal{F}(L, {\bs p}, s,\cB),\|\cdot\|_{L^\infty(\mathcal{X}\times\cE)}\big)\leq(s+1)\log\left(\frac{2(L+1)V^2}{\delta}\right).
\end{align*} 
\end{proposition}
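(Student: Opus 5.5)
The plan is to follow the argument of Lemma 5 in \cite{Schmidt-Hieber}. First, truncation is $1$-Lipschitz in sup-norm: $\|\sigma_\cB\circ f-\sigma_\cB\circ g\|_\infty\le\|f-g\|_\infty$. Hence a $\delta$-cover of $\cF(L,\bs p,s,\infty)$, after composition with $\sigma_\cB$, is a $\delta$-cover of $\cF(L,\bs p,s,\cB)$. It therefore suffices to bound the covering number of the untruncated sparse class on $[0,1]^{d+k}$.

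The key step is a parameter-perturbation bound. Take two networks $f,g\in\cF(L,\bs p)$ with the same sparsity pattern, all parameters in $[-1,1]$, and parameters within $\eta$ of each other in $\|\cdot\|_\infty$. For $\ell=1,\ldots,L+1$, write $f^{(\ell)}$ for the map $\bs x\mapsto\operatorname{ReLU}\circ A_{\ell-1}\circ\cdots$ (the first $\ell$ affine layers with ReLU in between, the last one without ReLU if $\ell=L+1$), and $f_{(\ell)}$ for the remaining layers, so that $f=f_{(\ell+1)}\circ A_\ell\circ f^{(\ell)}$ up to the final layer.

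Two elementary estimates drive the argument:
\begin{itemize}
\item Bounded weights give $\|f^{(\ell)}(\bs x)\|_\infty\le\prod_{j<\ell}(p_j+1)$ on $[0,1]^{d+k}$, since each layer multiplies the sup-norm by at most $p_j$ and adds at most $1$.
\item Each tail $f_{(\ell)}$ is Lipschitz in $\|\cdot\|_\infty$ with constant $\prod_{j\ge\ell}p_j$.
\end{itemize}
Telescope by swapping one layer of $f$ for the corresponding layer of $g$ at a time. This gives $L+1$ terms, each bounded by
\[
\eta\Big(\prod_{j<\ell}(p_j+1)\Big)(p_\ell+1)\prod_{j>\ell}p_j\le \eta V .
\]
Summing the $L+1$ terms yields $\|f-g\|_\infty\le (L+1)V\eta$.

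The counting step then proceeds as follows.
\begin{itemize}
\item The total number of parameters is $T=\sum_\ell(p_{\ell+1}p_\ell+p_{\ell+1})\le V$.
\item Choose which at most $s$ parameters are active. There are $\sum_{s'\le s}\binom{T}{s'}\le (T+1)^s\le V^{s}$ choices, using $V\ge T+1$, which holds since $V$ is a product of factors $\ge2$.
\item For each pattern, discretize each active parameter on a grid in $[-1,1]$ of mesh $2\eta$. This needs at most $\lceil 1/\eta\rceil\le 2/\eta$ points per parameter (for $\eta\le1$), so at most $(2/\eta)^s$ grid networks per pattern.
\item Set $\eta=\delta/((L+1)V)$. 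Then every sparse network is within $\delta$ of a grid network with the same pattern.
\end{itemize}
This gives
\[
\cN\le V^{s}\Big(\frac{2(L+1)V}{\delta}\Big)^{s}\le\Big(\frac{2(L+1)V^2}{\delta}\Big)^{s+1},
\]
which is the claimed entropy bound. The extra power absorbs slack, for instance if the pattern count is taken as $(T+1)^{s+1}$ instead.

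The main obstacle is the bookkeeping in the perturbation lemma. Factors of $p_\ell+1$ versus $p_\ell$ must be tracked so that the product of the forward-magnitude bound and the backward-Lipschitz bound stays within $V$ rather than $V^2$. Here $p_0=d+k$ plays the role of the input bound, and the factor $p_{L+1}+1$ in $V$ provides slack for the output layer. I would first prove the per-layer inequality by induction on $\ell$, then plug it into the telescoping sum.
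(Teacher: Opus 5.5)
Your proposal is correct and follows essentially the same route as the paper. The paper also reduces to the untruncated class using the $1$-Lipschitz property of $\sigma_\cB$ in sup-norm, and then invokes the layer-wise perturbation bound $\|f-g\|_\infty\le (L+1)V\eta$ together with the sparsity-pattern and grid count from Lemma~5 of \cite{Schmidt-Hieber}. Your explicit bookkeeping even gives exponent $s$ in place of $s+1$: the pattern count is $(T+1)^s\le V^s$, and each active parameter needs at most $2/\eta$ grid points, provided the grid points are clipped into $[-1,1]$ so the grid networks stay in the class.
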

\begin{proof}
The proof adapts the argument from Lemma 5 of \cite{Schmidt-Hieber} to accommodate $p$-dimensional outputs, leveraging the fact that $\left\|\sigma_{\cB}\circ f - \sigma_{\cB}\circ g \right\|_{L^\infty(\cX\times\cE)} \leq \|f-g\|_{L^\infty(\cX\times\cE)}$.
\end{proof}
Deeper and wider ReLU neural networks equipped with more active parameters naturally possess greater expressive power, thereby enabling a more accurate approximation of the target conditional generator $f^*$ (see Lemma~\ref{dnn-approx-lemma} for details). Concomitantly, as stated in Proposition~\ref{cover-bound}, such expansive network classes entail a larger entropy bound, leading to heightened model complexity. A central insight from statistical learning theory is that the stochastic component of the estimation error is governed crucially by this measure of capacity \citep{van-exp,Bartlett_2005,Schmidt-Hieber}.

\section{Convergence rates and error analysis}\label{rate}
In this section, we study the convergence rates of the expected excess risk under compositional smoothness assumptions on the underlying conditional generator $f^*$. Specifically, we suppose that $f^*$ admits the representation $f^* = f_q^* \circ f_{q-1}^* \circ \cdots \circ f_0^*$, where each $f_i^*=(f_{i,1}^*,\ldots,f_{i,d_{i+1}}^*)^\top$ maps $[a_i,b_i]^{d_i}$ to $[a_{i+1},b_{i+1}]^{d_{i+1}}$. Moreover, for every $j\in\{1,\ldots,d_{i+1}\}$, the component function $f_{i,j}^*$ is assumed to be $\alpha_i$-H\"older smooth and to depend on at most $t_i$ components. The above setting is sufficiently general to cover several classical models considered in the existing literature. For instance, if $q=0$ and $p=1$, it reduces to the standard H\"older smooth setting studied in \cite{Stone1982} and \cite{Tsybakov2009}; if $q=1$, $p=1$ and each $f_{0,j}^*$ depends only on $x_j$, and
$f_1^*({\bf z})=\sum_{j=1}^d z_j$, then the model reduces to the
standard additive model considered in \cite{HuangHorowitzWei2010}. These examples illustrate the broad applicability of the compositional framework. 

We now provide a precise definition of the target function class. Recall that for any $\alpha>0$, a function $f$ is said to be $\alpha$-H\"older smooth if all partial derivatives
up to order $\lfloor \alpha \rfloor$ exist and are bounded, and all partial derivatives of order $\lfloor \alpha \rfloor$
are $(\alpha-\lfloor \alpha \rfloor)$-H\"older continuous. For $t\in\mathbb{N}$ and $\alpha>0$, we define the $\alpha$-H\"older ball on $\mathcal{D}\subseteq\mathbb{R}^t$ with radius $K\geq0,$ denoted by $\cH_t^\alpha(\cD,K),$ as the class of functions $f:\cD\to\mathbb{R}$ satisfying
$$\sum_{\substack{{\bs{\beta}}=(\beta_{1},\ldots,\beta_{t})\in\N_0^{t}\\ \sum_{j=1}^{t}\beta_{j}<\alpha}}\|\partial^{\bs{\beta}}f\|_{\infty}+\sum_{\substack{{\bs{\beta}}\in\N_0^{t}\\ \sum_{j=1}^{t}\beta_{j}=\lfloor\alpha\rfloor}}\sup_{\substack{{\bs{x}},{\bs{y}}\in\cD\\{\bs{x}}\not={\bs{y}}}}\frac{\left|\partial^{\bs{\beta}}f({\bs{x}})-\partial^{\bs{\beta}}f({\bs{y}})\right|}{|{\bs{x}}-{\bs{y}}|_{\infty}^{\alpha-\lfloor\alpha\rfloor}}\leq K,$$ where for any multi-index ${\bs{\beta}}=(\beta_{1},\ldots,\beta_{t})\in\N_0^{t}$, we write $$\partial^{\bs{\beta}}=\partial^{\beta_{1}}\cdots\partial^{\beta_{t}}.$$

We assume that each constituent function satisfies $f^*_{i,j} \in \mathcal{H}^{\alpha_i}_{t_i}([a_i, b_i]^{t_i}, K)$. Accordingly, the underlying class of compositional functions is defined as
\begin{align}
\cG(q,{\bs{d}},\ &{\bs{t}},{\bm{\alpha}},K)=\left\{g_{q}\circ\cdots\circ g_{0}:\;g_{i}=(g_{i,j})_{j}:\cro{a_{i},b_{i}}^{d_{i}}\rightarrow\cro{a_{i+1},b_{i+1}}^{d_{i+1}},\right.\nonumber\\
&\quad\quad\quad\quad\quad\quad\quad\quad\left.g_{i,j}\in\cH^{\alpha_{i}}_{t_{i}}(\cro{a_{i},b_{i}}^{t_{i}},K),\;\mbox{for some } |a_{i}|, |b_{i}|\leq K\right\},\label{composite-def}
\end{align}
where $q\in\mathbb{N}_0$, ${\bs{d}}=(d_{0},\ldots,d_{q+1})\in\mathbb{N}^{q+2}$ with $d_{0}=d+k$ and $d_{q+1}=p$, ${\bs{t}}=(t_{0},\ldots,t_{q})\in\mathbb{N}^{q+1}$, ${\bm{\alpha}}=(\alpha_{0},\ldots,\alpha_{q})\in\R_{+}^{q+1}$, and $K\geq0$. Here, $t_i\leq d_i$ represents the effective input dimension of each component function $g_{i,j}$, $j=1,\ldots,d_{i+1}$, at the $i$-th layer.

For any function $f=f_{q}\circ\cdots\circ f_{0}\in\cG(q,{\bs{d}},{\bs{t}},{\bm{\alpha}},K)$, the overall smoothness of $f$ is governed by the smoothness of its constituent mappings. The classical $t$-variate H\"older class arises as a special case corresponding to $q=0$, $t_0=t$ and $p=1$. By contrast, when $q = 1$ with $\alpha_0,\alpha_1 \leq 1$ and $d_0 = d_1 = t_0 = t_1 =p= 1$, the composition $f = g_1 \circ g_0$ possesses smoothness of order $\alpha_0\alpha_1$, a phenomenon well documented in the literature \citep{2017regularity,juditsky2009nonparametric,baraud2014estimating}. To capture the smoothness structure of more general compositional hierarchies, define the effective smoothness parameters, for $i=0,\ldots,q-1$, by
\begin{equation}\label{effect-smooth}
\alpha_i^*=\alpha_i \prod_{\ell=i+1}^q (\alpha_\ell \wedge 1),    
\end{equation}
with $\alpha_q^*=\alpha_q$. These effective smoothness indices play a central role in characterizing the convergence rates of the proposed network estimator.

In fact, sparse deep ReLU networks with suitably chosen architectures can approximate the compositional H\"older class $\cG(q,{\bs{d}},{\bs{t}},{\bm{\alpha}},K)$ arbitrarily well.
\begin{lemma}\label{dnn-approx-lemma}
Let $t_i \in \mathbb{N}$ and $\alpha_i\in\R_{+}$, for $i = 0, \ldots, q$ and let $\alpha_i^*$ be defined as in \eqref{effect-smooth}. Set $m_i=\lceil (\alpha_i+t_i)\log_2 n/(2\alpha_i^*+t_i) \rceil$, $L'_i = 8 + (m_i+5)(1 + \lceil \log_2(t_i \vee \alpha_i) \rceil)$, and $s_i= 141(t_i + \alpha_i + 1)^{3+t_i}(m_i + 6).$ Let $\varphi^{*} \in \cG(q, \bs{d}, \bs{t}, \bm{\alpha}, K)$ and let $Q_0=1$, $Q_i=(2K)^{\alpha_i}$ for $i \in [q-1]$, and $Q_q=K(2K)^{\alpha_q}$.
For any $N_i \in \mathbb{N}$ such that $N_i \geq (\alpha_i+1)^{t_i} \vee (Q_i+1)e^{t_i}$, there exists $h \in \cF\big(\overline{L}, (d+k, 6rN, \ldots, 6rN, p), s,\infty\big)$ with 
$$\overline L=3q + \sum_{i=0}^q L_i',\quad r= \max_{i=0,\ldots,q} d_{i+1}(t_i + \lceil \alpha_i \rceil),\quad N=\max_{i=0,\ldots,q}N_i,\quad s\leq\sum_{i=0}^qd_{i+1}(s_iN+4),$$
and a positive constant $C$ depending only on $q,{\bs d},{\bs t},{\bs\alpha},K$, such that for all $j=1,\ldots,p$,
\[
\sup_{{\bs y}\in[0,1]^{d+k}}|\varphi^{*}_j({\bs{y}}) - h_j({\bs{y}})|\leq C\cro{\sum_{i=0}^{q}\left(N_i^{-\frac{\alpha_i}{t_i}}+N_in^{-\frac{\alpha_i+t_i}{2\alpha_i^*+t_i}}\right)^{\prod_{\ell=i+1}^{q}(\alpha_{\ell}\wedge1)}}.
\] 
\end{lemma}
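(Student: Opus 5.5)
This is essentially Theorem 1 / Lemma 5-type approximation from Schmidt-Hieber (2020, Theorem 5 and Lemma A.2–A.5), adapted to vector outputs. I will follow that architecture. The plan is to approximate each coordinate function $f^*_{i,j}$ by a sparse ReLU network, stack these in parallel for each layer $i$, and compose the layer networks. The composition error is then propagated through the Hölder continuity of the later layers.

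\textbf{Step 1: rescaling.} Following Schmidt-Hieber, I first rewrite $\varphi^*=g_q\circ\cdots\circ g_0$ as $h_q\circ\cdots\circ h_0$, where each $h_i$ maps $[0,1]^{d_i}$ to $[0,1]^{d_{i+1}}$. This uses the affine maps $u\mapsto (u-a_i)/(b_i-a_i)$. After rescaling, $h_{i,j}\in\cH^{\alpha_i}_{t_i}([0,1]^{t_i},Q_i)$, with constants $Q_0=1$, $Q_i=(2K)^{\alpha_i}$ and $Q_q=K(2K)^{\alpha_q}$, exactly as in the statement.

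\textbf{Step 2: approximating one Hölder function.} For each $h_{i,j}$ I invoke Schmidt-Hieber's Theorem 5. For any $m_i\ge1$ and $N_i\ge(\alpha_i+1)^{t_i}\vee(Q_i+1)e^{t_i}$, it provides a network $\tilde h_{i,j}$ with:
\begin{itemize}
\item depth $L_i'$;
\item width $6(t_i+\lceil\alpha_i\rceil)N_i$;
\item sparsity $s_iN_i$;
\item error $(2Q_i+1)(1+t_i^2+\alpha_i^2)6^{t_i}N_i2^{-m_i}+Q_i3^{\alpha_i}N_i^{-\alpha_i/t_i}$.
\end{itemize}
With the stated choice $m_i=\lceil(\alpha_i+t_i)\log_2 n/(2\alpha_i^*+t_i)\rceil$, we get $2^{-m_i}\le n^{-(\alpha_i+t_i)/(2\alpha_i^*+t_i)}$. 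The error is therefore at most $C(N_i^{-\alpha_i/t_i}+N_in^{-(\alpha_i+t_i)/(2\alpha_i^*+t_i)})$.

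Each $\tilde h_{i,j}$ is followed by two extra layers that clip to $[0,1]$. This costs at most $4$ parameters per coordinate and keeps the composition inside the domain of the next layer. Clipping can only reduce the error, since $h_{i,j}\in[0,1]$.

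Stacking the $d_{i+1}$ coordinate networks in parallel gives the layer network $\tilde h_i$, with:
\begin{itemize}
\item width at most $6rN$;
\item sparsity at most $d_{i+1}(s_iN_i+4)$.
\end{itemize}
Here I embed the $t_i$ selected coordinates through the first weight matrix. To enlarge all widths to $6rN$, I pad with inactive neurons. To equalize depths, I use identity ReLU layers, which I would fold into the $3q$ term as Schmidt-Hieber does.

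\textbf{Step 3: composing the layers.} The final network is $h=\tilde h_q\circ\cdots\circ\tilde h_0$, with the affine rescalings absorbed into the weights. At the top layer, rescaling back to $[a_{q+1},b_{q+1}]$ introduces the $K$ factor already contained in $Q_q$.

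For the composition error I use Schmidt-Hieber's Lemma 3. For $\alpha\wedge1$-Hölder maps it gives
\[
\|h_q\circ\cdots\circ h_0-\tilde h_q\circ\cdots\circ\tilde h_0\|_\infty\le K_q\sqrt{d_q}\cdots\sum_{i=0}^q\|h_i-\tilde h_i\|_\infty^{\prod_{\ell=i+1}^q(\alpha_\ell\wedge1)}.
\]
Substituting the bound from Step 2 yields the displayed rate, with $C$ depending only on $(q,\bs d,\bs t,\bs\alpha,K)$. This bound holds for every output coordinate $j$.

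\textbf{Main obstacle.} The main obstacle is the architecture bookkeeping rather than the error analysis. Three things must be checked:
\begin{enumerate}
\item that the parameter entries stay in $[-1,1]$ after absorbing the rescalings, which requires splitting large constants over several layers as in Schmidt-Hieber;
\item that the depth count is exactly $\overline L=3q+\sum_i L_i'$;
\item that the sparsity count is $\sum_i d_{i+1}(s_iN+4)$ with $N=\max_i N_i$.
\end{enumerate}
For the $p$-dimensional output, nothing extra is needed, since the final layer is a parallel stack of $p$ coordinate networks.
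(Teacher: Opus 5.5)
Your route is the one the paper relies on. The lemma comes from Schmidt-Hieber's proof of his Theorem~1 (p.~1891, as corrected by Schmidt-Hieber and Vu): his Theorem~5 applied coordinatewise, clipping, parallel stacking, and his Lemma~3 for the composition error. Your use of $m_i$ to get $2^{-m_i}\le n^{-(\alpha_i+t_i)/(2\alpha_i^*+t_i)}$ is correct.

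The genuine problem is Step~1. Normalizing with $[a_i,b_i]$ gives $h_{i,j}(u)=\bigl(g_{i,j}(a_i+(b_i-a_i)u)-a_{i+1}\bigr)/(b_{i+1}-a_{i+1})$. Its derivatives carry the factor $(b_i-a_i)^{|\beta|}/(b_{i+1}-a_{i+1})$, so its H\"older radius is of order $K(2K)^{\alpha_i}/(b_{i+1}-a_{i+1})$, not $Q_i$. Since $\mathcal{G}$ only requires $|a_i|,|b_i|\le K$, this radius is unbounded over the class. For example, $g_0(x)=\epsilon\sin(x/\sqrt{\epsilon})$ has $\mathcal{H}^2$-norm at most $3$ for every $\epsilon\le 1$ and range $[-\epsilon,\epsilon]$ for small $\epsilon$. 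Yet $(g_0+\epsilon)/(2\epsilon)$ has $\mathcal{H}^2$-norm of order $1/\epsilon$.

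This breaks the argument in two places. First, Schmidt-Hieber's Theorem~5 cannot be invoked under the hypothesis $N_i\ge(\alpha_i+1)^{t_i}\vee(Q_i+1)e^{t_i}$. Second, the constants from his Theorem~5 and Lemma~3 would depend on $\min_i(b_i-a_i)^{-1}$, so $C$ would not depend only on $(q,\bs d,\bs t,\bs\alpha,K)$.

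The stated $Q_i$ come instead from normalizing with the fixed box $[-K,K]$: $h_0=g_0/(2K)+1/2$, $h_i=g_i(2K\cdot-K)/(2K)+1/2$ for $1\le i\le q-1$, and $h_q=g_q(2K\cdot-K)$. The chain rule then produces the factor $1/(2K)$ for $h_0$, factors at most $(2K)^{\alpha_i-1}$ for $1\le i\le q-1$, and at most $(2K)^{\alpha_q}$ for $h_q$. Using $K\ge1$ and the shift $1/2$, this gives exactly $Q_0=1$, $Q_i=(2K)^{\alpha_i}$ and $Q_q=K(2K)^{\alpha_q}$. This normalization evaluates $g_i$ on all of $[-K,K]^{d_i}$, which is how the cited construction treats it.

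The same normalization also resolves the bookkeeping you left open. Because $h_q$ is not divided by $2K$, the top network is neither clipped nor rescaled back. Only $\tilde h_0,\dots,\tilde h_{q-1}$ receive the two clipping layers. Each of the $q$ compositions adds one further layer (Schmidt-Hieber's composition rule), which is exactly the $3q$ in $\overline L$. No depth equalization is needed, since the $d_{i+1}$ coordinate networks of layer $i$ all have depth $L_i'$. In your version, clipping the top layer and then mapping back to $[a_{q+1},b_{q+1}]$ makes the depth exceed $\overline L$. The factor $b_{q+1}-a_{q+1}$ may also exceed the weight bound $1$.
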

The above result follows from a slight modification of Lemma~13 in \cite{chen2026semi}, itself derived from Theorem~1 in \cite{Schmidt-Hieber} (page~1891), upon incorporating the corrections proposed in \cite{Vu2024Correction}. With Lemma~\ref{dnn-approx-lemma} in place, we establish the following convergence rate for the estimator defined in \eqref{loss-2} based on deep sparse neural networks.

\begin{theorem}\label{thm-rate}
Suppose $f^{*}\in\cG(q,{\bs{d}},{\bs{t}},{\bm{\alpha}},K)$ with $K\geq1$. Set $N_i=\lceil n^{t_i/(t_i+2\alpha_i^*)}\rceil$, $N=\max_{i=0,\ldots,q}N_i$, and let $\widehat f_{m}$ denote the estimator obtained from \eqref{loss-2} over the network class $\mathcal{F}(L_n,\bs{p}_n, s_n,\cB)$, whose architecture satisfies
\begin{enumerate}[label=(\roman*), itemsep=0pt, topsep=2pt, parsep=0pt]
\item $L_n=C_{1} \log_2 n$;
\item ${\bs{p}}_n=\left(d+k,C_{2}N,\ldots,C_{2}N,p\right)$;
\item $s_n=C_{3} N\log_2 n$;
\item $\cB\geq K$.
\end{enumerate}
Then, for all sufficiently large $n$, 
\begin{align*}
\mathbb{E}_{\cD_n,\cE_n}\bigl[\cR(\widehat f_m)-\cR(f^*)\bigr]\leq C_{4}\max_{i=0,\ldots,q}n^{-\frac{\alpha_i^*}{2\alpha_i^*+t_i}}\log^{3/2} n.
\end{align*} 
Here, $C_1,C_2,C_3$ and $C_4$ are positive numerical constants independent of $n$.
\end{theorem}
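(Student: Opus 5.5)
The plan is an error decomposition into approximation, stochastic and Monte Carlo terms, followed by balancing through the choice of $N_i$. Write $\widehat\cR_m(f)=\frac1n\sum_i\widehat\cL_m(f;X_i,Y_i,\bs\varepsilon^{(i)})$ and let $\bar f\in\cF(L_n,\bs p_n,s_n,\cB)$ be the truncated network approximant $\sigma_\cB\circ h$ from Lemma~\ref{dnn-approx-lemma}. Since $f^*$ takes values in $[-K,K]^p$ and $\cB\geq K$, truncation does not increase the sup-norm error. For any fixed $f$, the surrogate $\widehat\cL_m$ is an unbiased estimator of $\cL(f;x,y)$: the first sum is an average of i.i.d. terms, and the second is a U-statistic. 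Hence $\E[\widehat\cR_m(f)]=\cR(f)$. Because $\widehat f_m$ minimizes $\widehat\cR_m$, we get
\[
\cR(\widehat f_m)-\cR(f^*)\le \bigl[\cR(\bar f)-\cR(f^*)\bigr]+\bigl[\cR(\widehat f_m)-\widehat\cR_m(\widehat f_m)\bigr]-\bigl[\cR(\bar f)-\widehat\cR_m(\bar f)\bigr].
\]
The last bracket has mean zero, so only the first two terms remain to be controlled.

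\textbf{Approximation term.} The map $g\mapsto\cL(g;x,y)$ is Lipschitz in sup norm, since $\bigl|\|y-g\|_2-\|y-g'\|_2\bigr|\le\|g-g'\|_\infty$ under the normalized $\|\cdot\|_2$. Therefore
\[
\cR(\bar f)-\cR(f^*)\le 2\|\bar f-f^*\|_\infty .
\]
We now insert $N_i=\lceil n^{t_i/(t_i+2\alpha_i^*)}\rceil$ into Lemma~\ref{dnn-approx-lemma}. Then $N_i^{-\alpha_i/t_i}$ and $N_in^{-(\alpha_i+t_i)/(2\alpha_i^*+t_i)}$ are both of order $n^{-\alpha_i/(2\alpha_i^*+t_i)}$. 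Raising to the power $\prod_{\ell>i}(\alpha_\ell\wedge1)$ gives $n^{-\alpha_i^*/(2\alpha_i^*+t_i)}$. The required architecture has depth $O(\log n)$, width $O(N)$ and sparsity $O(N\log n)$, which matches (i)--(iii) for suitable constants. A network of smaller depth can be padded with identity layers at the cost of a few extra parameters.

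\textbf{Stochastic plus Monte Carlo term.} The losses are bounded: $|\widehat\cL_m|\lesssim \|y\|_\infty+\cB$, with $\|Y\|_\infty\le K$. Since $\cB$ is fixed and independent of $n$, everything is bounded by a constant. Take a $\delta$-cover $\cF_\delta$ of the network class in $L^\infty$ using Proposition~\ref{cover-bound}. For each $g\in\cF_\delta$, the summands $\widehat\cL_m(g;X_i,Y_i,\bs\varepsilon^{(i)})-\cR(g)$ are i.i.d. over $i$, centered and bounded. Hoeffding's inequality with a union bound then gives
\[
\E\sup_{g\in\cF_\delta}\bigl(\cR(g)-\widehat\cR_m(g)\bigr)\lesssim\sqrt{\log|\cF_\delta|/n}.
\]
Passing from $\widehat f_m$ to its cover element costs $4\delta$, again by the Lipschitz property, which holds for $\widehat\cL_m$ as well. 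With $\delta=1/n$, Proposition~\ref{cover-bound} yields
\[
\log|\cF_\delta|\lesssim s_n\log(nL_nV^2)\lesssim N\log n\cdot \log^2 n,
\]
because $\log V\lesssim L_n\log N\lesssim\log^2 n$. This gives a stochastic term of order
\[
\sqrt{N\log^3 n/n}=\max_i n^{-\alpha_i^*/(2\alpha_i^*+t_i)}\log^{3/2}n,
\]
since $N/n=\max_i n^{-2\alpha_i^*/(2\alpha_i^*+t_i)}$. The Monte Carlo randomness is absorbed because we treat $(X_i,Y_i,\bs\varepsilon^{(i)})$ as the i.i.d. sample. Combining the two terms gives the claimed rate.

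\textbf{Main obstacle.} The delicate point is the exponent: the rate is $n^{-\alpha^*/(2\alpha^*+t)}$, not its square, so this crude square-root Hoeffding argument already suffices and no variance-excess-risk (Bernstein) condition is needed. I expect the main technical care to lie in verifying boundedness and the Lipschitz property of the U-statistic part uniformly. I would also check that the $\log V$ factor indeed contributes only $\log^2 n$, so that the total power is $\log^{3/2}n$. If $\log V$ were larger, I would instead use $\delta=1/n$ together with the cruder bound $V\le (C_2N+1)^{L_n+2}$ to confirm this.
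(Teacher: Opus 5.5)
Your proposal is correct, but it takes a genuinely different and more elementary route than the paper. The paper starts from the three-term split of Lemma~\ref{prop:error-decomp-m}. It bounds the stochastic error conditionally on $\cE_n$ by symmetrization and Dudley's entropy integral (Lemma~\ref{lem:emp-stoch-error}). It bounds the Monte Carlo error $\sup_g|\cR(g)-\cR^{(m)}(g)|$ separately (Lemma~\ref{lem:expect-mc}), which requires splitting off the U-statistic part, decoupling it via de la Pe\~na--Gin\'e, reducing the off-diagonal sum by cyclic permutations, and then symmetrizing and chaining again. Only at the end, in Theorem~\ref{thm:expect-main-m} with $\delta_1=\delta_2=2/n$, are the integrals collapsed to the single-scale quantity $\sqrt{\log\cN(1/n,\cF,\|\cdot\|_\infty)/n}$.

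You instead note that the triples $W_i=(X_i,Y_i,\bs\varepsilon^{(i)})$ are i.i.d.\ and that $\widehat\cL_m$ is unbiased for $\cL$. Sampling and Monte Carlo randomness therefore form one i.i.d.\ empirical process with bounded summands, and the U-statistic dependence sits inside each $W_i$, so no decoupling is needed. Because your comparator $\bar f$ is deterministic, only a one-sided supremum appears. Because an $L^\infty$ cover is data-independent, Hoeffding plus a finite maximal inequality replaces symmetrization and chaining. Since the paper evaluates its entropy integrals at a single scale anyway, nothing is lost for Theorem~\ref{thm-rate}.

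What the paper's heavier machinery buys is the explicit $1/\sqrt{mn}$ scaling of the Monte Carlo error, together with entropy-integral bounds that could be sharper for other function classes. Your bound shows no benefit from increasing $m$.

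Two small points should be stated explicitly. First, the requirement $N_i\ge(\alpha_i+1)^{t_i}\vee(Q_i+1)e^{t_i}$ in Lemma~\ref{dnn-approx-lemma} holds only for large $n$, which is where ``sufficiently large $n$'' enters. Second, the cover elements must be bounded for Hoeffding to apply; you can take the cover inside the truncated class, or note that $\|g\|_\infty\le\cB+\delta$.
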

The proof of Theorem~\ref{thm-rate} is deferred to Section~\ref{proof-sec-decom}. Here, we briefly comment on the result. First, the convergence rate holds uniformly over all noise distributions supported on $[0,1]^k$. Second, the rate depends only on the intrinsic compositional structure of the conditional generator function, rather than the ambient dimension, thereby effectively reducing the curse of dimensionality. Ultimately, the overall rate is determined by the bottleneck \(\max_{i=0,\dots,q} n^{-\alpha_i^*/(2\alpha_i^*+t_i)}\), up to logarithmic factors. In particular, when the target conditional generator reduces to the $\alpha$-H\"older smoothness class, the rate obtained here agrees with that derived in Theorem~1 of \cite{michigan2026}, up to logarithmic factors, despite slight differences in the noise assumptions.

It is worth noting that the rate obtained here differs from that established in Corollary~4.2 of \cite{chai2026}. Among other regularity conditions, the authors assume that the conditional quantile function admits a hierarchical compositional structure as in \citet{sophie2021}. They further require the conditional distribution function to satisfy \begin{equation}\label{add-condition} \left| \P(Y\leq y_1\mid X=\bs x)-\P(Y\leq y_2\mid X=\bs x)\right| \leq c_0|y_1-y_2| \end{equation} for every $\bs x\in[0,1]^d$ and $y_1,y_2\in[-1,1]$, where $c_0>0$ is a constant. Thus, the conditional distribution function is required to be uniformly Lipschitz continuous in its response argument. Under these conditions, \citet{chai2026} derive a faster convergence rate. Condition~\eqref{add-condition}, in particular, rules out conditional
distributions with atoms, even when the underlying conditional generator is
smooth. For example, consider $Y=g(X)+\varepsilon$, $\varepsilon\sim\operatorname{Bernoulli}(1/2)$, where $g:[0,1]^d\to[-1/2,0]$ is a smooth function. This model admits the smooth conditional generator $f^*(\bs{x},\varepsilon)=g(\bs{x})+\varepsilon$. However, conditionally on $X=\bs{x}$, the response $Y$ takes the values
$g(\bs{x})$ and $g(\bs{x})+1$, each with probability $1/2$. Its conditional
distribution function therefore has jumps at these two points and fails to
satisfy the Lipschitz condition in \eqref{add-condition}.

\subsection{Error decomposition}
The derivation of Theorem~\ref{thm-rate} relies on an analysis of the error decomposition for $\widehat f_{m}$, which we present in detail in this subsection.

Recall that we denote the $n$ i.i.d. observations by $\mathcal{D}_n = \{Z_i\}_{i=1}^n$, where $Z_i=(X_i, Y_i)\in\cX\times\cY.$ Let $Z = (X, Y)$ denote a generic random observation drawn from the same distribution specified in \eqref{eq:statistical-setting}. For any measurable function $f:\cX\times\cE\rightarrow\cY$, define the population risk and its empirical counterpart, respectively, by $\mathcal{R}(f)= \mathbb{E}[\mathcal{L}(f; X,Y)]$, and $\mathcal{R}_n(f)= \frac{1}{n} \sum_{i=1}^n \mathcal{L}(f; X_i,Y_i).$ Additionally, define the population and empirical risks associated with the
$m$-sample loss by
\begin{align}
\mathcal{R}^{(m)}(f)=
\mathbb{E}_{Z}
\left\{
\frac{1}{n}\sum_{i=1}^n
\widehat{\mathcal{L}}_m
\big(f; Z,\bm{\varepsilon}^{(i)}\big)
\right\},\ 
\mathcal{R}_n^{(m)}(f)=
\frac{1}{n}\sum_{i=1}^n
\widehat{\mathcal{L}}_m
\big(f; X_i,Y_i,\bm{\varepsilon}^{(i)}\big).
\label{def-cRm}
\end{align}
Here
\[
\bm{\varepsilon}^{(i)}
=
(\varepsilon^{(i)}_{1}, \ldots, \varepsilon^{(i)}_{m})^\top .
\]
The collection
\[
\{\varepsilon_j^{(i)}:
i=1,\ldots,n,\ j=1,\ldots,m\}
\]
consists of mutually independent samples from $\P_\varepsilon$
and is independent of $\cD_n$. Observe that
$\mathcal{R}^{(m)}(f)$ remains random, since it depends on the noise vectors
$\bm{\varepsilon}^{(1)},\ldots,\bm{\varepsilon}^{(n)}$. 

Writing $\cE_n=(\bm{\varepsilon}^{(1)},\ldots,\bm{\varepsilon}^{(n)})$, it follows that for every fixed $f$
$$\mathcal{R}(f)=\E_{\cE_n}\cro{\mathcal{R}^{(m)}(f)}\quad\mbox{and}\quad\mathcal{R}(f)=\E_{\cD_n,\cE_n}\cro{\mathcal{R}_{n}^{(m)}(f)}.$$

We begin by presenting a fundamental error decomposition result for the excess risk of the estimator $\widehat f_m$.
\begin{lemma}
\label{prop:error-decomp-m}
Let $\widehat f_m$ be the estimator defined in \eqref{loss-2} over the class $\cF_{\operatorname{NN}}$ of functions mapping $\cX\times\cE$ to $\cY$. Then, for any $f \in \mathcal{F}_{\operatorname{NN}}$,
\begin{align*}
&\mathcal{R}(\widehat f_m) - \mathcal{R}(f^*)\\ &\leq\underbrace{\mathcal{R}(f) - \mathcal{R}(f^*)}_{\text{approximation error}}+\underbrace{2\sup_{g \in \mathcal{F}_{\operatorname{NN}}} \Bigl| \mathcal{R}^{(m)}(g) - \mathcal{R}_{n}^{(m)}(g) \Bigr|}_{\text{stochastic error}}+\underbrace{2\sup_{g \in \mathcal{F}_{\operatorname{NN}}}\Bigl| \mathcal{R}(g) - \mathcal{R}^{(m)}(g) \Bigr|.}_{\text{Monte Carlo error}}
\end{align*}
\end{lemma}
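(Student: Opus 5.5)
The plan is a standard telescoping argument that uses only the optimality of $\widehat f_m$ for the empirical $m$-sample objective \eqref{loss-2}. I will insert the intermediate quantities $\mathcal{R}^{(m)}$ and $\mathcal{R}_n^{(m)}$, evaluated at both $\widehat f_m$ and an arbitrary fixed $f\in\cF_{\operatorname{NN}}$. Concretely, I will write
\begin{align*}
\mathcal{R}(\widehat f_m)-\mathcal{R}(f^*)
&=\bigl[\mathcal{R}(\widehat f_m)-\mathcal{R}^{(m)}(\widehat f_m)\bigr]
+\bigl[\mathcal{R}^{(m)}(\widehat f_m)-\mathcal{R}_n^{(m)}(\widehat f_m)\bigr]
+\bigl[\mathcal{R}_n^{(m)}(\widehat f_m)-\mathcal{R}_n^{(m)}(f)\bigr]\\
&\quad+\bigl[\mathcal{R}_n^{(m)}(f)-\mathcal{R}^{(m)}(f)\bigr]
+\bigl[\mathcal{R}^{(m)}(f)-\mathcal{R}(f)\bigr]
+\bigl[\mathcal{R}(f)-\mathcal{R}(f^*)\bigr].
\end{align*}
This identity holds pointwise, that is, for every realization of $(\cD_n,\cE_n)$.

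The middle bracket $\mathcal{R}_n^{(m)}(\widehat f_m)-\mathcal{R}_n^{(m)}(f)$ is nonpositive. The reason is that $\widehat f_m$ minimizes $\mathcal{R}_n^{(m)}$ over $\cF_{\operatorname{NN}}$, since the objective in \eqref{loss-2} is exactly $\mathcal{R}_n^{(m)}$ with the same noise vectors $\bm\varepsilon^{(i)}$, and $f\in\cF_{\operatorname{NN}}$. So I will simply drop that term.

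The second and fourth brackets are each bounded by $\sup_{g\in\cF_{\operatorname{NN}}}|\mathcal{R}^{(m)}(g)-\mathcal{R}_n^{(m)}(g)|$, because both $\widehat f_m$ and $f$ lie in $\cF_{\operatorname{NN}}$. Together they give the stochastic error with factor $2$. In the same way, the first and fifth brackets are each bounded by $\sup_{g}|\mathcal{R}(g)-\mathcal{R}^{(m)}(g)|$, which gives the Monte Carlo error with factor $2$. The last bracket is the approximation error.

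The only subtlety is that $\widehat f_m$ is data-dependent. The quantities $\mathcal{R}(\widehat f_m)$ and $\mathcal{R}^{(m)}(\widehat f_m)$ must therefore be understood as the functionals $g\mapsto\mathcal{R}(g)$ and $g\mapsto\mathcal{R}^{(m)}(g)$, evaluated at $g=\widehat f_m$ after fixing the realization; in particular, $\mathcal{R}^{(m)}$ integrates only over a fresh $Z$ while $\cE_n$ is held fixed. With this reading, each bound holds pointwise, and passing to the suprema is legitimate. The resulting inequality is deterministic given $(\cD_n,\cE_n)$, which is all the lemma claims.
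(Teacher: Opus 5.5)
Your proposal is correct and follows the paper's proof: you use the same six-term telescoping decomposition, drop the nonpositive term $\mathcal{R}_n^{(m)}(\widehat f_m)-\mathcal{R}_n^{(m)}(f)$ because $\widehat f_m$ is optimal, and bound the remaining pairs by the two suprema. You also point out that the inequality holds for each realization of $(\cD_n,\cE_n)$, with $\mathcal{R}^{(m)}$ integrating only over a fresh $Z$; this is a useful clarification the paper leaves implicit.
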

\begin{proof}
Observe that for any $f \in \mathcal{F}_{\operatorname{NN}}$,
\begin{align*}
&\mathcal{R}(\widehat f_m) - \mathcal{R}(f^*)\\
&= \Bigl(\mathcal{R}(\widehat f_m) - \mathcal{R}^{(m)}(\widehat f_m)\Bigr) 
   + \Bigl(\mathcal{R}^{(m)}(\widehat f_m) - \mathcal{R}_{n}^{(m)}(\widehat f_m)\Bigr)\\
&\quad + \Bigl(\mathcal{R}_{n}^{(m)}(\widehat f_m) - \mathcal{R}_{n}^{(m)}(f)\Bigr) 
   + \Bigl(\mathcal{R}_{n}^{(m)}(f) - \mathcal{R}^{(m)}(f)\Bigr)\\
&\quad + \Bigl(\mathcal{R}^{(m)}(f) - \mathcal{R}(f)\Bigr) 
   + \Bigl(\mathcal{R}(f) - \mathcal{R}(f^*)\Bigr). 
\end{align*}
Since $\widehat{f}_m$ minimizes the empirical risk $\mathcal{R}_{n}^{(m)}$ over $\cF_{\operatorname{NN}}$, the difference $\mathcal{R}_{n}^{(m)}(\widehat{f}_m) - \mathcal{R}_{n}^{(m)}(f)$ is non-positive for any $f \in \mathcal{F}_{\operatorname{NN}}$. The conclusion then follows by applying uniform bounds to the terms in the decomposition
\begin{equation*}
\Bigl(\mathcal{R}(\widehat f_m) - \mathcal{R}^{(m)}(\widehat f_m)\Bigr)+\Bigl(\mathcal{R}^{(m)}(f) - \mathcal{R}(f)\Bigr)\leq 2\sup_{g \in \mathcal{F}_{\operatorname{NN}}}\bigl| \mathcal{R}(g) - \mathcal{R}^{(m)}(g) \Bigr|,   
\end{equation*}
\begin{equation*}
\Bigl(\mathcal{R}^{(m)}(\widehat f_m) - \mathcal{R}_{n}^{(m)}(\widehat f_m)\Bigr)+\Bigl(\mathcal{R}_{n}^{(m)}(f) - \mathcal{R}^{(m)}(f)\Bigr)\leq 2\sup_{g \in \mathcal{F}_{\operatorname{NN}}}\Bigl| \mathcal{R}^{(m)}(g) - \mathcal{R}^{(m)}_n(g) \Bigr|.   
\end{equation*}
\end{proof}
Lemma~\ref{prop:error-decomp-m} shows that the excess risk of $\widehat f_m$ is decomposed into three components: (i) the approximation bias, which arises from the expressive capacity of the neural network class $\cF_{\operatorname{NN}}$; (ii) the stochastic error, which stems from the finite sample size $n$ and is evaluated through the lens of the Monte Carlo surrogate loss; (iii) the Monte Carlo error, representing the discrepancy introduced by approximating the population expectation with a finite average over $m$ noise samples. Notably, we do not require the true conditional generator $f^*$ to belong to $\cF_{\operatorname{NN}}$. In this misspecified setting, the approximation error is controlled by $\inf_{f\in\cF_{\operatorname{NN}}}
\bigl\{\cR(f)-\cR(f^*)\bigr\}$.

The following result establishes a Lipschitz-type property of the loss function, which facilitates the control of the error terms in the decomposition from Lemma~\ref{prop:error-decomp-m}.

\begin{proposition}\label{loss-Lip-2}
For any two measurable functions $g_1, g_2:\cX\times\cE\rightarrow\cY$ and any fixed $(x,y)\in\cX\times\cY$, 
\[
|\cL(g_1; x, y) - \cL(g_2; x, y)| \leq 2 \, \mathbb{E}_\varepsilon\cro{\|g_1(x, \varepsilon) - g_2(x, \varepsilon)\|_2}.
\]    
\end{proposition}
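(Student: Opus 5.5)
The plan is to split $\cL$ into its two terms and bound the change in each using only the triangle inequality for $\|\cdot\|_2$. This is a genuine norm: it is the Euclidean norm rescaled by $1/\sqrt{p}$, so the triangle and reverse triangle inequalities hold. Write
\[
\cL(g; x, y) = A(g) - \tfrac12 B(g), \qquad A(g)=\E_\varepsilon\cro{\|y-g(x,\varepsilon)\|_2}, \qquad B(g)=\E_{\varepsilon,\varepsilon'}\cro{\|g(x,\varepsilon)-g(x,\varepsilon')\|_2}.
\]
The goal is to show $|A(g_1)-A(g_2)|\le \E_\varepsilon\|g_1(x,\varepsilon)-g_2(x,\varepsilon)\|_2$ and $|B(g_1)-B(g_2)|\le 2\,\E_\varepsilon\|g_1(x,\varepsilon)-g_2(x,\varepsilon)\|_2$. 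Combining the two, $|\cL(g_1;x,y)-\cL(g_2;x,y)|$ is then at most $1+\tfrac12\cdot 2=2$ times $\E_\varepsilon\|g_1-g_2\|_2$, which is the claimed bound.

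For the first term, I will apply the reverse triangle inequality pointwise in $\varepsilon$:
\[
\bigl|\|y-g_1(x,\varepsilon)\|_2-\|y-g_2(x,\varepsilon)\|_2\bigr|\le \|g_1(x,\varepsilon)-g_2(x,\varepsilon)\|_2,
\]
then take expectations and use $|\E U|\le \E|U|$.

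For the second term, the reverse triangle inequality applied to the vectors $g_1(x,\varepsilon)-g_1(x,\varepsilon')$ and $g_2(x,\varepsilon)-g_2(x,\varepsilon')$ gives the pointwise bound
\[
\|g_1(x,\varepsilon)-g_2(x,\varepsilon)\|_2+\|g_1(x,\varepsilon')-g_2(x,\varepsilon')\|_2 .
\]
Taking expectations over the i.i.d. pair $(\varepsilon,\varepsilon')$, both summands have the same mean $\E_\varepsilon\|g_1(x,\varepsilon)-g_2(x,\varepsilon)\|_2$, so the bound is twice that quantity.

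The only real obstacle is integrability. If $\E_\varepsilon\|g_1-g_2\|_2=\infty$, the inequality holds trivially. Otherwise I will assume, as implicitly required for $\cL$ to be defined, that the expectations in \eqref{L-exp-epsilon} are finite for $g_1$ and $g_2$. Under that assumption, the differences of expectations are differences of finite numbers and the steps above are legitimate; for the networks in $\cF(L,\bs p,s,\cB)$, which are bounded, this is automatic. Measurability of the integrands follows from the measurability of $g_1$ and $g_2$.
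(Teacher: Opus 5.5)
Your proposal is correct and follows essentially the same route as the paper. Both split $\cL$ into the term $\E_\varepsilon\|y-g(x,\varepsilon)\|_2$ and the halved pairwise term. The first is bounded by the reverse triangle inequality plus $|\E U|\le \E|U|$, and the second by $\bigl|\|a-b\|_2-\|c-d\|_2\bigr|\le\|a-c\|_2+\|b-d\|_2$ together with the i.i.d.\ structure of $(\varepsilon,\varepsilon')$. Your integrability remark is extra care that the paper leaves implicit.
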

The proof is deferred to Section~\ref{proof-sec-decom}. Proposition~\ref{loss-Lip-2} establishes that, for any fixed
\((x,y)\in\mathcal X\times\mathcal Y\), the functional \(g\mapsto \mathcal L(g;x,y)\) is \(2\)-Lipschitz. This property, in particular, implies that the approximation error in the decomposition of Lemma~\ref{prop:error-decomp-m} satisfies
\begin{equation}\label{approx-error}
\mathcal{R}(f)-\mathcal{R}(f^*)\leq 2\bbE_{X,\varepsilon}[\|f(X,\varepsilon)-f^*(X,\varepsilon)\|_2]\leq2\|f-f^*\|_{L^{\infty}(\cX\times\cE)}. \end{equation}

Next, we bound the stochastic error term in Lemma~\ref{prop:error-decomp-m} via the Rademacher complexity of the network class $\cF_{\operatorname{NN}}$. There are several ways to characterize this complexity, for instance, using the Vapnik–Chervonenkis dimension (see, e.g., \cite{peter1,peter2}). Here, we quantify the complexity of $\cF_{\operatorname{NN}}$ via covering numbers, as introduced in Section~\ref{dnns-section}.

\begin{lemma}\label{lem:emp-stoch-error}
Suppose there exists a finite constant $\cB$ such that $\|f\|_{L^\infty(\cX\times\cE)}\leq \cB$ for all $f \in \cF_{\operatorname{NN}} \cup \{f^*\}$. Conditionally on $\cE_n$, we have, almost surely,
\begin{align*}
&\mathbb{E}_{\cD_n}\left[\sup_{f \in \mathcal{F}_{\operatorname{NN}}} \Bigl| \mathcal{R}^{(m)}(f) - {\mathcal{R}}_n^{(m)}(f) \Bigr| \right] \\&\leq\inf_{0<\delta<2\cB}\left(8\delta+\frac{24}{\sqrt{n}}\int_{\delta}^{2\cB}\sqrt{\log\cro{2\coverN\big(\tau/2,\cF_{\operatorname{NN}},\|\cdot\|_{\infty}\big)}}\, d\tau\right).
\end{align*}
\end{lemma}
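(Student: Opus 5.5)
Conditionally on $\cE_n$, the summands $\widehat\cL_m(f;X_i,Y_i,\bm\varepsilon^{(i)})$, $i=1,\ldots,n$, are independent but not identically distributed, because each uses its own noise vector $\bm\varepsilon^{(i)}$. Their means are $\E_Z\widehat\cL_m(f;Z,\bm\varepsilon^{(i)})$, which average to $\cR^{(m)}(f)$. So $\cR^{(m)}(f)-\cR^{(m)}_n(f)$ is a centered average of independent terms. The plan is: symmetrize, then run Dudley chaining on the Rademacher process with an $L^\infty$ cover.

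\textbf{Symmetrization.} Introduce independent ghost observations $Z_i'=(X_i',Y_i')$ and Rademacher signs $\sigma_i$, keeping the same $\bm\varepsilon^{(i)}$ for $Z_i$ and $Z_i'$. The standard argument (Jensen, then sign-swapping, valid because $Z_i$ and $Z_i'$ are exchangeable given $\bm\varepsilon^{(i)}$) gives
\[
\E_{\cD_n}\sup_{f}\bigl|\cR^{(m)}(f)-\cR^{(m)}_n(f)\bigr|\le 2\,\E\sup_f\Bigl|\frac1n\sum_{i=1}^n\sigma_i\,\widehat\cL_m(f;Z_i,\bm\varepsilon^{(i)})\Bigr|.
\]
It is convenient to center first at $f^*$, replacing $\widehat\cL_m(f;\cdot)$ by $h_f:=\widehat\cL_m(f;\cdot)-\widehat\cL_m(f^*;\cdot)$. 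This leaves the difference $\cR^{(m)}(f)-\cR^{(m)}_n(f)$ unchanged up to an $f$-independent term, and that term can be absorbed, or alternatively one can work with the uncentered losses directly.

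\textbf{Lipschitz and boundedness step.} As in Proposition~\ref{loss-Lip-2}, but now pointwise in $\bm\varepsilon$, the triangle inequality gives
\[
|\widehat\cL_m(f;z,\bm\varepsilon)-\widehat\cL_m(g;z,\bm\varepsilon)|\le \frac1m\sum_j\|f-g\|_2+\frac1{2m(m-1)}\sum_{j\ne j'}2\|f-g\|_\infty\le 2\|f-g\|_{\infty},
\]
using $\|\cdot\|_2\le\|\cdot\|_\infty$ under the normalized norm. Two consequences follow. An $L^\infty$ cover of $\cF_{\rm NN}$ at scale $\tau/2$ induces a cover of the loss class at scale $\tau$ in the empirical sup metric, so the loss-class covering number at scale $\tau$ is at most $\coverN(\tau/2,\cF_{\rm NN},\|\cdot\|_\infty)$. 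Moreover, since $Y=f^*(X,\varepsilon)$ and all functions are bounded by $\cB$, the centered class $\{h_f\}$ has sup-diameter at most $2\cdot 2\cB$. So at scale $2\cB$ in the normalized metric a single element suffices, which fixes the upper limit $2\cB$ of the integral.

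\textbf{Chaining.} Apply the standard Dudley entropy integral bound for the (sub-Gaussian) Rademacher process, conditionally on $\cD_n,\cE_n$ and in the form with a truncation level $\delta$:
\[
\E_\sigma\sup_f\Bigl|\frac1n\sum_i\sigma_ih_f\Bigr|\le 4\delta+\frac{12}{\sqrt n}\int_\delta^{2\cB}\sqrt{\log\bigl(2\coverN(\tau/2,\cF_{\rm NN},\|\cdot\|_\infty)\bigr)}\,d\tau .
\]
The factor $2$ inside the logarithm accounts for the absolute value, since it amounts to covering $\{h_f\}\cup\{-h_f\}$. Multiplying by the symmetrization factor $2$ gives the constants $8$ and $24$. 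Taking the infimum over $\delta$ concludes the proof.

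\textbf{Main obstacle.} The delicate part is the bookkeeping. First, symmetrization has to be carried out correctly for non-identically distributed summands conditional on $\cE_n$. Second, the radius of the class must be pinned down at exactly $2\cB$, which is where centering at $f^*$ and the bound $\|f^*\|_\infty\le\cB$ are used. The constants then have to be matched to the stated ones.
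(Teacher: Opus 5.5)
This is essentially the paper's proof. It uses symmetrization with a ghost sample that shares the noise vectors $\bs{\varepsilon}^{(i)}$, the pointwise $2$-Lipschitz bound for $\widehat\cL_m$ (Proposition~\ref{fm-loss-2}) to turn $\tau/2$-covers of $\cF_{\operatorname{NN}}$ in $L^\infty$ into $\tau$-covers of the loss vectors, and Dudley chaining (Lemma~\ref{dudly-int}) on the symmetric completion $\check{\cG}$. Chaining gives $4\delta+12n^{-1/2}\int_\delta^{2\cB}\cdots$, and the symmetrization factor $2$ then produces $8$ and $24$.

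The one thing to change is to drop the centering at $f^*$, which is unnecessary and does not do what you claim. The radius $2\cB$ (i.e.\ $K=4\cB$ in Lemma~\ref{dudly-int}) comes from $|\widehat\cL_m(f;\cdot)|\le 2\cB$ for the uncentered losses, and that is exactly where $\|f^*\|_\infty\le\cB$ enters, through $\|y\|_2\le\cB$. With centering, the symmetric completion of $\{h_f\}$ is only known to lie in $[-3\cB,3\cB]^n$, and the discarded term $|\cR^{(m)}(f^*)-\cR^{(m)}_n(f^*)|$ would have to be paid for separately.
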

The proof of Lemma~\ref{lem:emp-stoch-error} is deferred to Section~\ref{proof-sec-decom}. It indicates that the stochastic error is controlled by the metric entropy of the neural network class $\cF_{\operatorname{NN}}$. In particular, for a neural network class $\cF_{\operatorname{NN}}=\cF(L,{\bs p},s,\cB)$ with sufficiently large $L,{\bs p}$ and $s$, choosing $\delta=1/n$ yields $\mathbb{E}_{\cD_n}\left[\sup_{f \in \mathcal{F}_{\operatorname{NN}}} | \mathcal{R}^{(m)}(f) - {\mathcal{R}}_n^{(m)}(f)| \right]\lesssim \cB\sqrt{sL\log(np_{\operatorname{max}}L)/n}$, where $p_{\operatorname{max}}=\max_{i=0,\ldots,L+1}p_i$. Next, we derive an upper bound for the Monte Carlo error term appearing in Lemma~\ref{prop:error-decomp-m}.
\begin{lemma}
\label{lem:expect-mc}
Suppose there exists a finite constant $\cB$ such that $\|f\|_{L^\infty(\cX\times\cE)}\leq \cB$ for all $f \in \cF_{\operatorname{NN}} \cup \{f^*\}$. Then, 
\begin{align*}
&\mathbb{E}_{\cE_n}\cro{\sup_{f\in\cF_{\operatorname{NN}}}\left|\cR(f)-\cR^{(m)}(f)\right|}\\
&\leq14\inf_{0<\delta<2\cB}\left(4\delta+\frac{24}{\sqrt{mn}}\int_{\delta}^{2\cB}\sqrt{\log\cro{2\coverN\big(\tau/2,\cF_{\operatorname{NN}},\|\cdot\|_{\infty}\big)}}\, d\tau\right).
\end{align*}
\end{lemma}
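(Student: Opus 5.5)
The plan is to reduce the Monte Carlo error to an empirical process indexed by $\cF_{\operatorname{NN}}$ whose effective sample size is of order $mn$. Then I would run the same symmetrization and chaining argument that gives Lemma~\ref{lem:emp-stoch-error}. Integrating out $Z=(X,Y)$, which is independent of $\cE_n$, I write $\cR^{(m)}(f)=\frac1n\sum_{i=1}^n h_f(\bm\varepsilon^{(i)})$ with
\[
h_f(\bm e)=\frac1m\sum_{j=1}^m a_f(e_j)-\frac{1}{2m(m-1)}\sum_{j\neq j'} b_f(e_j,e_{j'}),
\]
where
\[
a_f(e)=\E_Z\|Y-f(X,e)\|_2,\qquad b_f(e,e')=\E_X\|f(X,e)-f(X,e')\|_2 .
\]
Since $\E\, a_f(\varepsilon)=\E_{Z,\varepsilon}\|Y-f(X,\varepsilon)\|_2$ and $\E\, b_f(\varepsilon,\varepsilon')$ is the interaction term of $\cR(f)$, we get $\E_{\cE_n}\cR^{(m)}(f)=\cR(f)$. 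The triangle inequality then splits $\sup_f|\cR(f)-\cR^{(m)}(f)|$ into a linear part and a U-statistic part.

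\textbf{Linear part.} The quantity $\frac{1}{nm}\sum_{i,j}a_f(\varepsilon^{(i)}_j)$ is an average over $nm$ i.i.d.\ variables. By the assumed bound $\|f\|_\infty\le\cB$ (also for $f^*$, so $|Y|$ is controlled), $a_f$ takes values in an interval of length at most $2\cB$ after centering. Moreover $|a_f(e)-a_g(e)|\le\|f-g\|_\infty$ by the reverse triangle inequality for $\|\cdot\|_2$, which is normalized by $\sqrt p$ and hence dominated by $\|\cdot\|_\infty$. I would apply symmetrization, and then Dudley's entropy integral with a $\tau/2$-cover of $\cF_{\operatorname{NN}}$ as in Lemma~\ref{lem:emp-stoch-error}. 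The sample size $n$ is replaced by $nm$, which gives the bound $8\delta+\frac{24}{\sqrt{nm}}\int_\delta^{2\cB}\sqrt{\log[2\coverN(\tau/2)]}\,d\tau$.

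\textbf{U-statistic part.} This is the main obstacle, since the summands $b_f(\varepsilon_j^{(i)},\varepsilon_{j'}^{(i)})$ are dependent within each block $i$. I would use Hoeffding's permutation representation of a U-statistic of order two. For each $i$,
\[
\frac{1}{m(m-1)}\sum_{j\neq j'}b_f(e_j,e_{j'})=\frac{1}{m!}\sum_{\pi}\frac{1}{\lfloor m/2\rfloor}\sum_{l=1}^{\lfloor m/2\rfloor} b_f\bigl(e_{\pi(2l-1)},e_{\pi(2l)}\bigr),
\]
where each inner average involves independent pairs. Taking the supremum inside the average over $\pi$ (Jensen), and using exchangeability so that all permutations give the same expectation, reduces the problem to the supremum of a centered average of $n\lfloor m/2\rfloor\ge nm/3$ i.i.d.\ pair variables. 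The class $\{b_f\}$ is again $2$-Lipschitz in $f$ in sup norm, because $|b_f-b_g|\le 2\|f-g\|_\infty$, and it is bounded by $2\cB$. The same symmetrization and chaining then gives a bound of the same form, with $\sqrt{nm}$ replaced by $\sqrt{nm/3}$ and $\delta$ and the cover radius rescaled by the factor $2$ together with the prefactor $1/2$.

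\textbf{Combining.} Adding the two contributions and taking the infimum over $\delta$ gives the claimed bound. Collecting the constants ($\sqrt3$, the Lipschitz factor $2$, and the factor $1/2$) yields a bound of the form $c\,(4\delta+\frac{24}{\sqrt{mn}}\int\cdots)$. I expect $c=14$ to be a crude but valid constant; I would not try to optimize it.
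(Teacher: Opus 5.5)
Your proof is correct. It shares the paper's skeleton (split into a linear part and a U-statistic part, symmetrize, then chain with sup-norm covers), but it differs at two points.

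First, you integrate out $Z$ at the start, writing $\cR^{(m)}(f)=\frac1n\sum_i h_f(\bm\varepsilon^{(i)})$ with deterministic $a_f,b_f$, so every summand is an i.i.d.\ function of the noise. The paper instead pulls $\E_{\cD_n}$ outside the supremum by Jensen and then works conditionally on $\cD_n$. It therefore bounds the larger in-sample quantity $\E\sup_f\bigl|\frac1n\sum_i(\cL(f;Z_i)-\widehat\cL_m(f;Z_i,\bm\varepsilon^{(i)}))\bigr|$, whose summands are independent but not identically distributed.

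Second, and more substantively, you handle the U-statistic differently. The paper applies the de la Pe\~na--Gin\'e decoupling inequality with constant $C_2=12$, replacing the second noise argument by an independent copy $\tilde{\bm\varepsilon}^{(i)}$. It then splits the off-diagonal sum into $m-1$ cyclic shifts, each distributed like a diagonal sum of $m$ independent pairs. This keeps the effective sample size at $nm$ but pays the factor $12$, which is exactly where $14=2+12$ comes from.

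Your Hoeffding averaging representation is more elementary and needs no decoupling theorem. Note that you may use one permutation $\pi$ shared across all $n$ blocks, since $\frac1n\sum_i U_i(f)=\frac1{m!}\sum_\pi\frac1n\sum_i W^{(i)}_f(\pi)$, and exchangeability within blocks makes every $\pi$-term equal in expectation. The only cost is a factor $\sqrt3$ on the entropy term, from $\lfloor m/2\rfloor\ge m/3$.

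Carrying your constants through gives a linear part of at most $8\delta+\frac{24}{\sqrt{mn}}\int_\delta^{2\cB}\cdots$ and half the U-part of at most $4\delta+\frac{12\sqrt3}{\sqrt{mn}}\int_\delta^{2\cB}\cdots$. So your route proves the lemma with $3$ in place of $14$. In exchange, the paper's route controls the Monte Carlo error on the observed sample itself, conditionally on the data, which is a stronger statement than the lemma requires.
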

The proof of Lemma~\ref{lem:expect-mc} is provided in Section~\ref{proof-sec-decom}. Essentially, in contrast to Lemma~\ref{lem:emp-stoch-error}, Lemma~\ref{lem:expect-mc} demonstrates that at the population level, the Monte Carlo error does not alter the convergence order in $n$;
for fixed $m$, it is of the same order as the stochastic error up to
a factor of $m^{-1/2}$. However, while the statistical impact is negligible, a small sample of noise in \eqref{loss-2} results in a high-variance empirical loss landscape. This increased variance can lead to jagged gradients, thereby increasing the computational complexity and optimization cost when using gradient-based methods to find a global minimizer. 

Substituting Proposition~\ref{loss-Lip-2}, Lemma~\ref{lem:emp-stoch-error} and Lemma~\ref{lem:expect-mc} into the error decomposition of Lemma~\ref{prop:error-decomp-m} yields the following upper bound for the expected excess risk of $\widehat f_m$.
\begin{theorem}
\label{thm:expect-main-m}
Suppose there exists a finite constant $\cB$ such that $\|f\|_{L^\infty(\cX\times\cE)}\leq \cB$ for all $f \in \cF_{\operatorname{NN}} \cup \{f^*\}$. Then, the expected excess risk of the Monte Carlo estimator $\widehat f_m$ satisfies
\begin{align*}
&\mathbb{E}_{\cD_n,\cE_n}\cro{ \mathcal{R}(\widehat f_m) - \mathcal{R}(f^*)}\\
&\leq 2\inf_{f\in\mathcal{F}_{\mathrm{NN}}}\hspace{-3pt}\|f-f^*\|_{L^\infty(\cX\times\cE)}+16\hspace{-2pt}\inf_{0<\delta_1<2\cB}\hspace{-1pt}\left(\delta_1+\frac{3}{\sqrt{n}}\int_{\delta_1}^{2\cB}\hspace{-10pt}\sqrt{\log\cro{2\coverN\big(\tau/2,\cF_{\operatorname{NN}},\|\cdot\|_{\infty}\big)}}\, d\tau\right)\\
&\quad + 112\inf_{0<\delta_2<2\cB}\left(\delta_2+\frac{6}{\sqrt{mn}}\int_{\delta_2}^{2\cB}\hspace{-10pt}\sqrt{\log\cro{2\coverN\big(\tau/2,\cF_{\operatorname{NN}},\|\cdot\|_{\infty}\big)}}\, d\tau\right).
\end{align*}
\end{theorem}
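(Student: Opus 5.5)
The plan is to take expectations in the deterministic decomposition of Lemma~\ref{prop:error-decomp-m} and bound each of the three resulting terms by the earlier results. Fix an arbitrary $f\in\cF_{\operatorname{NN}}$. Lemma~\ref{prop:error-decomp-m} holds pointwise for every realization of $(\cD_n,\cE_n)$, so applying $\mathbb{E}_{\cD_n,\cE_n}$ to both sides gives
\begin{align*}
\mathbb{E}_{\cD_n,\cE_n}\bigl[\cR(\widehat f_m)-\cR(f^*)\bigr]
&\leq \cR(f)-\cR(f^*)
+2\,\mathbb{E}_{\cE_n}\mathbb{E}_{\cD_n}\Bigl[\sup_{g\in\cF_{\operatorname{NN}}}\bigl|\cR^{(m)}(g)-\cR^{(m)}_n(g)\bigr|\Bigr]\\
&\quad+2\,\mathbb{E}_{\cE_n}\Bigl[\sup_{g\in\cF_{\operatorname{NN}}}\bigl|\cR(g)-\cR^{(m)}(g)\bigr|\Bigr].
\end{align*}
The approximation error is deterministic. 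We take the last term to depend only on $\cE_n$, because $\cR$ and $\cR^{(m)}$ have the covariates integrated out.

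The approximation term is handled by \eqref{approx-error}, which is the consequence of Proposition~\ref{loss-Lip-2}: it gives $\cR(f)-\cR(f^*)\le 2\|f-f^*\|_{L^\infty(\cX\times\cE)}$. For the stochastic term, we condition on $\cE_n$. The independence of $\cE_n$ and $\cD_n$ lets us write the joint expectation as an iterated one, $\mathbb{E}_{\cE_n}\mathbb{E}_{\cD_n}$, via Fubini. This makes the measurability of the supremum the main technical point, and we handle it in the usual way: the class is parametrized by a compact parameter set and the functions are continuous in the parameters, so the supremum can be taken over a countable dense subset. Lemma~\ref{lem:emp-stoch-error} then bounds the inner expectation almost surely by a quantity that does not depend on $\cE_n$, so the outer expectation leaves it unchanged. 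Multiplying by $2$ and factoring out the constant $16$ gives $16\inf_{\delta_1}(\delta_1+\frac{3}{\sqrt n}\int\cdots)$, since $2\cdot 8=16$ and $2\cdot 24=48=16\cdot 3$.

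For the Monte Carlo term we apply Lemma~\ref{lem:expect-mc} directly and multiply by $2$. This gives $28\inf_{\delta_2}(4\delta_2+\frac{24}{\sqrt{mn}}\int\cdots)=112\inf_{\delta_2}(\delta_2+\frac{6}{\sqrt{mn}}\int\cdots)$. Finally, the left-hand side does not depend on $f$, so we take the infimum over $f\in\cF_{\operatorname{NN}}$, and the infima over $\delta_1,\delta_2$ can be taken independently. The step I expect to need the most care is justifying the conditioning and Fubini argument for the stochastic term, that is, the measurability of the suprema, since the rest is constant bookkeeping.
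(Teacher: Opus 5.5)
Your proposal is correct and follows the paper's own argument: you take expectations in the pointwise decomposition of Lemma~\ref{prop:error-decomp-m}, bound the approximation term via \eqref{approx-error}, bound the stochastic term by twice Lemma~\ref{lem:emp-stoch-error} (giving $16$ and $3/\sqrt{n}$), bound the Monte Carlo term by twice Lemma~\ref{lem:expect-mc} (giving $28\cdot 4=112$ and $28\cdot 24/112=6$), and then take the infimum over $f$. The paper leaves the conditioning on $\cE_n$ and the measurability of the suprema implicit, and your justification via the compact, continuously parametrized network class is adequate.
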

Substituting the explicit approximation error of $f^*$ and the complexity bounds of $\cF_{\operatorname{NN}}$ yields the convergence rate stated in Theorem~\ref{thm-rate}.

\section{Conclusion}\label{conclusion}
In this paper, we study the statistical error of learning conditional generators using deep sparse ReLU neural networks within the framework of engression. The excess risk is decomposed into bias, Monte Carlo error, and stochastic error components. In particular, under a compositional smoothness assumption on the conditional generator, we establish non-asymptotic convergence rates for the expected excess risk. The resulting rate depends only on the intrinsic compositional structure of the underlying model, rather than the ambient dimension, thereby highlighting the benefit of exploiting hierarchical low-dimensional structure in high-dimensional learning problems. Moreover, the generative modeling framework generalizes the classical regression setting, demonstrating the broader applicability of deep sparse neural networks to more flexible modeling scenarios.

The present analysis relies on the assumption that the noise variables have compact support, for instance on $[0,1]^k$. In more general settings, for example when the noise follows a multivariate Gaussian distribution, we conjecture that an extension of the present analysis, together with the truncation approach introduced in \cite{Huang2022GANerror}, may yield a characterization of the unbounded case.

\section{Proofs and additional results}\label{proofs-sec}
\subsection{Auxiliary results}
We first establish several preliminary results required to prove the main theorem.
\begin{proposition}\label{bound-value-cL}
Suppose that there exists a finite constant $\cB$ such that $\|f\|_{\infty}\leq \cB$ for all $f \in \cF_{\operatorname{NN}}$. Then, for any $(x,y)\in\cX\times\cY$ satisfying $\|y\|_{\infty}\leq \cB$ and any $f \in \cF_{\operatorname{NN}}$, it holds that 
\begin{equation}\label{cl-bound}
\big|\cL(f; x, y)\big| \leq 2\cB,
\end{equation}
and for any ${\bs{\varepsilon}}=(\varepsilon_1,\ldots,\varepsilon_m)^{\top}\in\cE^m$, \begin{equation}\label{clm-bound}
\big|\widehat\cL_m\big(f;x,y,{\bs{\varepsilon}}\big)\big|\leq2\cB,\quad\mbox{almost surely}.
\end{equation}
\end{proposition}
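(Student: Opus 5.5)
The plan is to bound the two nonnegative pieces of each loss separately, using that the normalized Euclidean norm is dominated by the sup-norm. By the paper's convention $\|\mathbf z\|_2=\sqrt{(\sum_{i=1}^p z_i^2)/p}$, so for every $\mathbf z\in\R^p$ we have $\|\mathbf z\|_2\le \|\mathbf z\|_\infty$, since each $z_i^2\le\|\mathbf z\|_\infty^2$.

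\textbf{Step 1 (pointwise bounds on the norms).} For $f\in\cF_{\operatorname{NN}}$, any $x\in\cX$, any $\varepsilon,\varepsilon'\in\cE$, and $\|y\|_\infty\le\cB$, the triangle inequality for $\|\cdot\|_\infty$ together with $\|f\|_\infty\le\cB$ gives
\[
\|y-f(x,\varepsilon)\|_2\le \|y\|_\infty+\|f(x,\varepsilon)\|_\infty\le 2\cB,
\qquad
\|f(x,\varepsilon)-f(x,\varepsilon')\|_2\le 2\cB .
\]

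\textbf{Step 2 (the loss $\cL$).} Write $\cL(f;x,y)=A-\tfrac12 D$, where
\[
A=\E_\varepsilon\|y-f(x,\varepsilon)\|_2,
\qquad
D=\E_{\varepsilon,\varepsilon'}\|f(x,\varepsilon)-f(x,\varepsilon')\|_2 .
\]
By Step 1, $0\le A\le 2\cB$ and $0\le D\le 2\cB$. Hence
\[
-\cB\le -\tfrac12 D\le \cL(f;x,y)\le A\le 2\cB,
\]
which yields $|\cL(f;x,y)|\le 2\cB$, i.e.\ \eqref{cl-bound}.

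\textbf{Step 3 (the Monte Carlo loss $\widehat\cL_m$).} The same argument applies with expectations replaced by averages. The first term of $\widehat\cL_m(f;x,y,\bs\varepsilon)$ is an average of $m$ quantities in $[0,2\cB]$, so it lies in $[0,2\cB]$. The second term is $\tfrac{1}{2m(m-1)}$ times a sum over the $m(m-1)$ ordered pairs $j\ne j'$, each summand lying in $[0,2\cB]$, so it lies in $[0,\cB]$. The difference therefore lies in $[-\cB,2\cB]$ for every $\bs\varepsilon\in\cE^m$, which gives \eqref{clm-bound} almost surely (indeed for every $\bs\varepsilon$).

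\textbf{Main obstacle.} There is no real obstacle. The only points needing care are using the normalized $\|\cdot\|_2$, so that no factor $\sqrt p$ appears, and the implicit requirement $\|y\|_\infty\le\cB$, which holds on the support because $y=f^*(x,\varepsilon)$ with $\|f^*\|_\infty\le\cB$.
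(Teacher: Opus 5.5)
Your proposal is correct and follows essentially the same route as the paper: bound the attraction term and the repulsion term separately by the triangle inequality (using $\|\cdot\|_2\le\|\cdot\|_\infty$ for the normalized norm) to get $-\cB\le\cL(f;x,y)\le 2\cB$. The paper handles $\widehat\cL_m$ only ``by a similar argument,'' and your Step~3 spells out that case, including the count of $m(m-1)$ ordered pairs that gives the $[0,\cB]$ range for the repulsion term.
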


\begin{proof}
We prove \eqref{cl-bound} only; the proof of \eqref{clm-bound} follows by a similar argument. By the triangle inequality and the assumption that $\|f\|_{\infty}\leq \cB$ so that $\|y\|_2\leq \cB$, we have
\begin{equation*}
\|y - f(x, \varepsilon)\|_2 \leq \|y\|_2 + \|f(x, \varepsilon)\|_2 \leq 2\cB.
\end{equation*}
Similarly, it follows that $\|f(x, \varepsilon) - f(x, \varepsilon')\|_2 \leq \|f(x, \varepsilon)\|_2 + \|f(x, \varepsilon')\|_2 \leq2\cB$. Recalling the definition of the energy distance loss in \eqref{L-exp-epsilon}, we observe $-\cB\leq\cL(f; x, y) \leq  2\cB.$ The result follows immediately.
\end{proof}

Analogous to Proposition~\ref{loss-Lip-2} in the main text, the $m$-sample Monte Carlo loss $\widehat\cL_m$ also exhibits a Lipschitz-type property.
\begin{proposition}\label{fm-loss-2}
For any functions $g_1, g_2:\cX\times\cE\rightarrow\cY$, fixed $(x,y)\in\cX\times\cY$, and noise vector ${\bs{\varepsilon}}=(\varepsilon_1,\ldots,\varepsilon_m)^{\top}\in\cE^{m}$, it holds that
\[
\left|\widehat\cL_m(g_1;x,y,{\bs{\varepsilon}}) - \widehat\cL_m(g_2; x, y,{\bs{\varepsilon}})\right| \leq 2 \|g_1 - g_2\|_{L^\infty(\cX\times\cE)}.
\]        
\end{proposition}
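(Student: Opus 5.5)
The plan is to bound the two pieces of $\widehat\cL_m$ in \eqref{fm-loss-def} separately and then add the bounds with the triangle inequality. Throughout I write $\Delta=\|g_1-g_2\|_{L^\infty(\cX\times\cE)}$.

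For any $z\in\cE$, each coordinate of $g_1(x,z)-g_2(x,z)$ is at most $\Delta$ in absolute value. Since $\|\cdot\|_2$ here is the normalized norm $\sqrt{(\sum_i z_i^2)/p}$, this gives
\[
\|g_1(x,z)-g_2(x,z)\|_2\leq \|g_1(x,z)-g_2(x,z)\|_\infty\leq\Delta .
\]
This comparison $\|\mathbf z\|_2\le\|\mathbf z\|_\infty$, which depends on the normalization, is the only point needing care. Without the normalization a factor $\sqrt p$ would appear.

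\textbf{First (attraction) term.} By the reverse triangle inequality,
\[
\bigl|\|y-g_1(x,\varepsilon_j)\|_2-\|y-g_2(x,\varepsilon_j)\|_2\bigr|\leq \|g_1(x,\varepsilon_j)-g_2(x,\varepsilon_j)\|_2\leq\Delta
\]
for each $j$. Averaging over $j=1,\ldots,m$, the difference of the first terms is at most $\Delta$ in absolute value.

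\textbf{Second (repulsion) term.} For each ordered pair $j\neq j'$, the reverse triangle inequality followed by the ordinary triangle inequality gives
\[
\bigl|\|g_1(x,\varepsilon_j)-g_1(x,\varepsilon_{j'})\|_2-\|g_2(x,\varepsilon_j)-g_2(x,\varepsilon_{j'})\|_2\bigr|\leq \|(g_1-g_2)(x,\varepsilon_j)\|_2+\|(g_1-g_2)(x,\varepsilon_{j'})\|_2\leq 2\Delta .
\]
There are $m(m-1)$ ordered pairs and the prefactor is $1/(2m(m-1))$, so the difference of the second terms is at most $\tfrac{1}{2}\cdot 2\Delta=\Delta$.

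Adding the two bounds gives $|\widehat\cL_m(g_1;x,y,\bs\varepsilon)-\widehat\cL_m(g_2;x,y,\bs\varepsilon)|\leq 2\Delta$, which is the claim. No step is a real obstacle; the argument mirrors the proof of Proposition~\ref{loss-Lip-2}, with the expectations over $\varepsilon$ replaced by finite averages and then bounded by the sup-norm.
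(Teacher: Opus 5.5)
Your proposal is correct and follows essentially the same route as the paper's proof: split $\widehat\cL_m$ into its two sums, apply $\bigl|\|a-b\|_2-\|c-d\|_2\bigr|\leq\|a-c\|_2+\|b-d\|_2$ termwise, and bound each averaged sum by $\|g_1-g_2\|_{L^\infty(\cX\times\cE)}$. Your remark that the normalized norm gives $\|\mathbf z\|_2\leq\|\mathbf z\|_\infty$ makes explicit a step the paper uses without comment.
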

\begin{proof}
Applying the definition of the $m$-sample loss $\widehat{\mathcal{L}}_m$ from \eqref{fm-loss-def} and the triangle inequality, we have
\begin{align*}
&\left|\widehat\cL_m(g_1;x,y,{\bs{\varepsilon}}) - \widehat\cL_m(g_2; x, y,{\bs{\varepsilon}})\right|\\ &\leq\left|\frac{1}{m}\sum_{j=1}^m\cro{\|y-g_1(x,\varepsilon_j)\|_2-\|y-g_2(x,\varepsilon_j)\|_2}\right|\\
&\quad+\left|\frac{1}{2m(m-1)}\sum_{j\not=j'}^m\left(\|g_1(x,\varepsilon_j)-g_1(x,\varepsilon_{j'})\|_2-\|g_2(x,\varepsilon_j)-g_2(x,\varepsilon_{j'})\|_2\right)\right|.
\end{align*}
Using the triangle inequality, $\big|\|a-b\|_2-\|c-d\|_2\big|\leq\|a-c\|_2+\|b-d\|_2$, we obtain \begin{align*} &\left| \widehat{\cL}_m(g_1;x,y,\bs{\varepsilon}) - \widehat{\cL}_m(g_2;x,y,\bs{\varepsilon}) \right|\\ &\leq \frac{1}{m}\sum_{j=1}^m \|g_1(x,\varepsilon_j)-g_2(x,\varepsilon_j)\|_2\\ &\quad+ \frac{1}{2m(m-1)} \sum_{\substack{1\leq j,j'\leq m\\ j\neq j'}} \Bigl( \|g_1(x,\varepsilon_j)-g_2(x,\varepsilon_j)\|_2 + \|g_1(x,\varepsilon_{j'})-g_2(x,\varepsilon_{j'})\|_2 \Bigr). \end{align*} Each of the two sums is bounded by $\|g_1-g_2\|_{L^\infty(\cX\times\cE)}$. Hence, \[ \left| \widehat{\cL}_m(g_1;x,y,\bs{\varepsilon}) - \widehat{\cL}_m(g_2;x,y,\bs{\varepsilon}) \right| \leq 2\|g_1-g_2\|_{L^\infty(\cX\times\cE)}, \] which completes the proof.
\end{proof}
To bound the stochastic error, we repeatedly use the refined Dudley inequality stated as below. It follows by a straightforward modification of the proof of Lemma~27.4 in \citet{shalevshwartz2014understanding}.
\begin{lemma}\label{dudly-int}
Let $\cA\subseteq\R^n$, and suppose that there exist
$\bs a_0\in\R^n$ and $K>0$ such that
\[
\sup_{\bs a\in\cA}
\|\bs a-\bs a_0\|_2
\leq K.
\]
Then, for any integer $M>0$,
\[
\frac{1}{n}
\E_{\bs\sigma}
\left[
\sup_{\bs a\in\cA}
\sum_{i=1}^n\sigma_i a_i
\right]
\leq
K2^{-M}
+
\frac{6K}{\sqrt n}
\sum_{m=1}^M
2^{-m}
\sqrt{
\log\cN
\left(
K2^{-m},
\cA,
\|\cdot\|_2
\right)
},
\]
where ${\bs\sigma}=(\sigma_1,\ldots,\sigma_n)^{\top}$ is a vector of i.i.d. Rademacher random variables.
\end{lemma}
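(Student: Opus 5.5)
The plan is to run the standard chaining argument for Rademacher averages, following Lemma~27.4 of \citet{shalevshwartz2014understanding}, but centred at $\bs a_0$ rather than at the origin. For $m=0,1,\ldots,M$ set $\epsilon_m=K2^{-m}$. Let $\cA_0=\{\bs a_0\}$; this is an $\epsilon_0$-cover of $\cA$ by hypothesis. For $m\ge1$ let $\cA_m$ be a minimal $\epsilon_m$-cover of $\cA$ in $\|\cdot\|_2$, so that $|\cA_m|=\cN(\epsilon_m,\cA,\|\cdot\|_2)$. We may assume these numbers are finite, since otherwise there is nothing to prove. For each $\bs a\in\cA$ choose $\bs b^{(m)}(\bs a)\in\cA_m$ with $\|\bs a-\bs b^{(m)}(\bs a)\|_2\le\epsilon_m$, and take $\bs b^{(0)}=\bs a_0$.

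We then telescope:
\[
\sum_i\sigma_i a_i=\sum_i\sigma_i a_{0,i}+\langle\bs\sigma,\bs a-\bs b^{(M)}\rangle+\sum_{m=1}^M\langle\bs\sigma,\bs b^{(m)}-\bs b^{(m-1)}\rangle .
\]
The first term does not depend on $\bs a$, so it can be pulled out of the supremum, and $\E\langle\bs\sigma,\bs a_0\rangle=0$. This centring is what allows $K$ to be a radius about $\bs a_0$ instead of a bound on $\sup\|\bs a\|_2$.

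The residual term is handled by Cauchy--Schwarz: $\langle\bs\sigma,\bs a-\bs b^{(M)}\rangle\le\|\bs\sigma\|_2\,\epsilon_M=\sqrt n\,K2^{-M}$. This holds pointwise in $\bs\sigma$, so it survives both the supremum and the expectation.

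For each link $m$, the vector $\bs b^{(m)}-\bs b^{(m-1)}$ takes at most $|\cA_m||\cA_{m-1}|\le|\cA_m|^2$ values, because the covering numbers are monotone in the scale. Each value has norm at most $\epsilon_m+\epsilon_{m-1}=3\epsilon_m$. Massart's finite-class lemma then gives
\[
\E\sup_{\bs a}\langle\bs\sigma,\bs b^{(m)}-\bs b^{(m-1)}\rangle\le 3\epsilon_m\sqrt{2\log|\cA_m|^2}=6\epsilon_m\sqrt{\log\cN(\epsilon_m,\cA,\|\cdot\|_2)} .
\]
Summing over the links, moving the supremum inside the sum, and dividing by $n$ yields
\[
K2^{-M}/\sqrt n+\frac{6K}{\sqrt n}\sum_{m=1}^M 2^{-m}\sqrt{\log\cN(K2^{-m},\cA,\|\cdot\|_2)} .
\]
The first term is at most $K2^{-M}$, so this is the claimed bound.

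The only step needing care is the constant bookkeeping. One must check that the $m=1$ link, which starts from the single point $\bs a_0$, is still covered by the $|\cA_m|^2$ counting; it is, since there it is at most $|\cA_1|$. One must also confirm that Massart with norm bound $3\epsilon_m$ on $|\cA_m|^2$ points produces exactly the factor $6$: $3\sqrt{2\cdot 2}=6$. Everything else is the routine Dudley chaining argument.
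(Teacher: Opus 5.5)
Your chaining skeleton is the right one. It matches what the paper means by a ``straightforward modification'' of Lemma~27.4 in \citet{shalevshwartz2014understanding}: centring at $\bs a_0$, minimal covers at scales $\epsilon_m=K2^{-m}$, at most $|\cA_m|^2$ link vectors of length at most $3\epsilon_m$, and Massart's lemma producing $3\sqrt{4}=6$. The problem is the normalisation of $\|\cdot\|_2$, which is exactly what that modification is about. In this paper $\|\bs u\|_2=(n^{-1}\sum_{i=1}^n u_i^2)^{1/2}$ is the \emph{normalised} Euclidean norm. This is fixed in the Notation paragraph, and the lemma is later applied with $K=4\cB$ to vectors whose coordinates are bounded by $2\cB$ in absolute value, which gives diameter $4\cB$ only in this norm.

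Two of your steps silently use the unnormalised norm $|\bs x|=(\sum_i x_i^2)^{1/2}$ instead: $\|\bs\sigma\|_2=\sqrt n$, and Massart with radius $3\epsilon_m$. In the paper's norm one has $\sum_i\sigma_ix_i\le|\bs\sigma|\,|\bs x|=n\|\bs x\|_2$. So the residual is at most $nK2^{-M}$ rather than $\sqrt n\,K2^{-M}$. Likewise, the link vectors have Euclidean length up to $3\sqrt n\,\epsilon_m$, so Massart gives $6\sqrt n\,\epsilon_m\sqrt{\log\cN(\epsilon_m,\cA,\|\cdot\|_2)}$. If instead you really work with $|\cdot|$ throughout, your own displays give $\frac{6K}{n}$, not $\frac{6K}{\sqrt n}$, after dividing by $n$.

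Either way the bookkeeping is inconsistent. Your intermediate bound $K2^{-M}/\sqrt n+\frac{6K}{\sqrt n}\sum_{m}2^{-m}\sqrt{\log\cN(K2^{-m},\cA,\|\cdot\|_2)}$ is actually false in the paper's norm. Take $\bs a_0=0$ and $\cA=\{\bs a:\|\bs a\|_2\le K2^{-M}\}$. Then every $\cN(K2^{-m},\cA,\|\cdot\|_2)$ with $m\le M$ equals $1$, while $\frac1n\E\sup_{\bs a\in\cA}\sum_i\sigma_ia_i=K2^{-M}$. Your final inequality comes out right only because the slips compensate: the loosening ``at most $K2^{-M}$'' and the unexplained factor $\frac{6K}{\sqrt n}$.

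The repair is a rescaling, and you can do it in either of two ways. First, you can redo the residual and Massart steps with the factors $n$ and $3\sqrt n\,\epsilon_m$ above. This yields exactly $K2^{-M}+\frac{6K}{\sqrt n}\sum_{m=1}^M2^{-m}\sqrt{\log\cN(K2^{-m},\cA,\|\cdot\|_2)}$, with no slack in the first term, as the example shows. Second, you can run your argument consistently in $|\cdot|$ to get the textbook form: under $\sup_{\bs a}|\bs a-\bs a_0|\le c$, $\frac1n\E\sup_{\bs a}\sum_i\sigma_ia_i\le c2^{-M}/\sqrt n+\frac{6c}{n}\sum_{m=1}^M2^{-m}\sqrt{\log\cN(c2^{-m},\cA,|\cdot|)}$. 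Then apply it with $c=K\sqrt n$, noting that $\cN(K\sqrt n\,2^{-m},\cA,|\cdot|)=\cN(K2^{-m},\cA,\|\cdot\|_2)$.
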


We also use the following inequality for decoupling $U$-statistics, obtained by taking $\Phi$ in Theorem~3.1.1 of \cite{delapena} as the identity function.
\begin{lemma}\label{delapena-decop}
For natural numbers $n \geq m$, let $\{X_i\}_{i=1}^n$ be $n$ independent random variables with values in a measurable space $(S,\mathcal{S})$, and let $\{X_i^{(k)}\}_{i=1}^n$, $k = 1,\ldots,m$, be $m$ independent copies of this sequence. 
Let $B$ be a separable Banach space and, for each $(i_1,\ldots,i_m) \in \mathcal{I}_m^n$, let $h_{i_1\ldots i_m}: S^m \to B$ be measurable functions such that $$\mathbb{E}(\|h_{i_1\ldots i_m}(X_{i_1},\ldots,X_{i_m})\|) < \infty.$$ Then,
\[
\mathbb{E}\cro{\left\| \sum_{\mathcal{I}_m^n} h_{i_1\ldots i_m}(X_{i_1},\ldots,X_{i_m}) \right\|}\leq \mathbb{E}\cro{C_m \left\| \sum_{\mathcal{I}_m^n} h_{i_1\ldots i_m}(X_{i_1}^{(1)},\ldots,X_{i_m}^{(m)}) \right\|},
\]
where $C_m = 2^m (m^m - 1)\cro{(m-1)^{(m-1)} - 1}\times \cdots \times 3$.
\end{lemma}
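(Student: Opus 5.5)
This statement is a direct specialization of Theorem~3.1.1 of \cite{delapena}, so the plan is to verify that its hypotheses hold and then read off the conclusion, rather than reprove decoupling from scratch. That theorem states the following. Let $\{X_i\}$ be independent, let $\{X_i^{(k)}\}$, $k=1,\ldots,m$, be independent copies, and let $h_{i_1\ldots i_m}$ be $B$-valued measurable kernels indexed by $\mathcal I_m^n$ (tuples of distinct indices) with the relevant moments finite. Then for every convex nondecreasing $\Phi:[0,\infty)\to[0,\infty)$,
\[
\E\,\Phi\Bigl(\Bigl\|\sum_{\mathcal I_m^n}h_{i_1\ldots i_m}(X_{i_1},\ldots,X_{i_m})\Bigr\|\Bigr)\le \E\,\Phi\Bigl(C_m\Bigl\|\sum_{\mathcal I_m^n}h_{i_1\ldots i_m}(X^{(1)}_{i_1},\ldots,X^{(m)}_{i_m})\Bigr\|\Bigr),
\]
with the constant $C_m$ given in the statement.

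The first step is to choose $\Phi(u)=u$. It is convex, nondecreasing and nonnegative on $[0,\infty)$, so it is admissible. The integrability assumption $\E\|h_{i_1\ldots i_m}(X_{i_1},\ldots,X_{i_m})\|<\infty$ is exactly the finiteness needed for $\Phi=\mathrm{id}$. The same assumption also gives finiteness of the decoupled expectation: each $h_{i_1\ldots i_m}(X^{(1)}_{i_1},\ldots,X^{(m)}_{i_m})$ has the same law as $h_{i_1\ldots i_m}(X_{i_1},\ldots,X_{i_m})$, because the indices are distinct and all variables involved are independent. Hence both sides are finite, and by linearity the right-hand side equals $C_m\,\E\|\sum h(X^{(1)}_{i_1},\ldots,X^{(m)}_{i_m})\|$, which is the claimed inequality.

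The only point needing care is measurability of the norms of the sums, which is where separability of $B$ enters. Separability makes $\|\cdot\|$ of a finite sum of measurable $B$-valued maps a measurable real random variable, so the expectations are well defined.

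If a self-contained argument were wanted, I would follow the standard route for the identity $\Phi$: introduce independent random partition indicators (or Rademacher-type selectors) and apply Jensen's inequality conditionally, inducting on $m$. The main obstacle in that route is tracking the constant $C_m$ through the induction. Since the paper only needs the cited form, I would simply invoke the theorem as described.
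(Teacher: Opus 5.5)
Your proposal is correct and follows the paper's route exactly: the paper also obtains this lemma by taking $\Phi$ to be the identity in Theorem~3.1.1 of \cite{delapena}. Your checks of admissibility of $\Phi(u)=u$, of finiteness of both sides (the decoupled summands have the same law because the indices are distinct), and of measurability via separability of $B$ are all that the specialization needs.
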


\subsection{Proofs in Section~\ref{rate}}\label{proof-sec-decom}
\begin{proof}[Proof of Proposition~\ref{loss-Lip-2}]
Define $A_1(f) = \mathbb{E}_{\varepsilon}[\|y - f(x, \varepsilon)\|_2]$, $A_2(f) = \mathbb{E}_{\varepsilon, \varepsilon'}[\|f(x, \varepsilon) - f(x, \varepsilon')\|_2]/2$,
so that, by the definition in \eqref{L-exp-epsilon}, $\mathcal{L}(f; x, y) = A_1(f) - A_2(f)$.
For any two measurable functions $g_1, g_2:\cX\times\cE\rightarrow\cY$, using the triangle inequality, we have
\begin{align}
\left|\mathcal{L}(g_1; x, y) - \mathcal{L}(g_2; x, y)\right| 
\leq\left|A_1(g_1) - A_1(g_2)\right| + \left|A_2(g_1) - A_2(g_2)\right|.\label{sep-1-2}
\end{align}

We now bound each term separately on the right-hand side of \eqref{sep-1-2}. For the first term,
\begin{align*}
|A_1(g_1) - A_1(g_2)| 
&= \Bigl| \mathbb{E}_{\varepsilon}[\|y - g_1(x, \varepsilon)\|_2] - \mathbb{E}_{\varepsilon}[\|y - g_2(x, \varepsilon)\|_2] \Bigr| \\
&\leq \mathbb{E}_{\varepsilon}\bigl[ \bigl| \|y - g_1(x, \varepsilon)\|_2 - \|y - g_2(x, \varepsilon)\|_2 \bigr| \bigr]\\
&\leq \mathbb{E}_{\varepsilon}\bigl[ \|g_1(x, \varepsilon) - g_2(x, \varepsilon)\|_2 \bigr],
\end{align*}
where the first inequality follows from Jensen’s inequality, and the second follows from the triangle inequality.

For the second term in \eqref{sep-1-2}, observe that
\begin{equation}\label{tool-inequa}
\big|
\|a-b\|_2-\|c-d\|_2
\big|
\leq
\|a-c\|_2+\|b-d\|_2.
\end{equation}
Then, applying \eqref{tool-inequa} and  Jensen’s inequality yields
\begin{align*}
\left|A_2(g_1) - A_2(g_2)\right| &\leq \frac{1}{2} \mathbb{E}_{\varepsilon, \varepsilon'}\bigl[ \bigl| \|g_1(x, \varepsilon) - g_1(x, \varepsilon')\|_2 - \|g_2(x, \varepsilon) - g_2(x, \varepsilon')\|_2 \bigr| \bigr]\\
&\leq \frac{1}{2} \mathbb{E}_{\varepsilon, \varepsilon'}\bigl[ \|g_1(x, \varepsilon) - g_2(x, \varepsilon)\|_2 + \|g_1(x, \varepsilon') - g_2(x, \varepsilon')\|_2 \bigr] \\
&= \frac{1}{2} \Bigl( \mathbb{E}_{\varepsilon}[\|g_1(x, \varepsilon) - g_2(x, \varepsilon)\|_2] + \mathbb{E}_{\varepsilon'}[\|g_1(x, \varepsilon') - g_2(x, \varepsilon')\|_2] \Bigr) \\
&= \mathbb{E}_{\varepsilon}[\|g_1(x, \varepsilon) - g_2(x, \varepsilon)\|_2].
\end{align*}

Putting the two parts together, we obtain
\begin{align*}
|\mathcal{L}(g_1; x, y) - \mathcal{L}(g_2; x, y)|&\leq |A_1(g_1) - A_1(g_2)| + |A_2(g_1) - A_2(g_2)| \\
&\leq \mathbb{E}_{\varepsilon}[\|g_1(x, \varepsilon) - g_2(x, \varepsilon)\|_2] + \mathbb{E}_{\varepsilon}[\|g_1(x, \varepsilon) - g_2(x, \varepsilon)\|_2] \\
&= 2 \, \mathbb{E}_{\varepsilon}[\|g_1(x, \varepsilon) - g_2(x, \varepsilon)\|_2],
\end{align*}
which completes the proof.
\end{proof}

\begin{proof}[Proof of Lemma~\ref{lem:emp-stoch-error}]
Let $\mathcal{D}'_n = \{(X_i', Y_i')\}_{i=1}^n$ denote an independent copy of $\mathcal{D}_n$. In the following analysis, we initially treat the noise samples $\mathcal{E}_n = (\bm{\varepsilon}^{(1)}, \ldots, \bm{\varepsilon}^{(n)})$ as fixed. Recall that, by definition \eqref{def-cRm}, $$\mathcal{R}^{(m)}(f) = \frac{1}{n} \sum_{i=1}^n \mathbb{E}_{\cD_n} \left[ \widehat{\mathcal{L}}_m(f; Z_i, \bm{\varepsilon}^{(i)}) \right].$$
It follows from a symmetrization argument applied to the independent (conditional on $\mathcal{E}_n$) terms that
\begin{align}
&\mathbb{E}_{\cD_n}\left[\sup_{f \in \mathcal{F}_{\operatorname{NN}}} \bigl| \mathcal{R}^{(m)}(f) - {\mathcal{R}}_n^{(m)}(f) \bigr| \right]\nonumber\\
&=\mathbb{E}_{\cD_n} \left[ \sup_{f \in \mathcal{F}_{\operatorname{NN}}}  \left| \mathbb{E}_{\cD'_n} \cro{ \frac{1}{n}\sum_{i=1}^n\widehat\cL_m\big(f; X_i', Y_i', \bm{\varepsilon}^{(i)}\big)} - \frac{1}{n}\sum_{i=1}^n\widehat\cL_m\big(f; X_i, Y_i, \bm{\varepsilon}^{(i)}\big) \right| \right]\nonumber\\
&\leq \mathbb{E}_{\cD_n} \left[\sup_{f \in \mathcal{F}_{\operatorname{NN}}} \mathbb{E}_{\cD'_n} \left| \frac{1}{n}\sum_{i=1}^n \cro{ \widehat\cL_m\big(f; X_i', Y_i', \bm{\varepsilon}^{(i)}\big) - \widehat\cL_m\big(f; X_i, Y_i, \bm{\varepsilon}^{(i)}\big)} \right| \right]\nonumber\\
&\leq\mathbb{E}_{\cD_n,\cD'_n} \left[ \sup_{f \in \mathcal{F}_{\operatorname{NN}}} \Bigl| \frac{1}{n}\sum_{i=1}^n \bigl(\widehat\cL_m(f; X_i', Y_i',{\bs{\varepsilon}}^{(i)})-\widehat\cL_m(f; X_i, Y_i,{\bs{\varepsilon}}^{(i)}) \bigr) \Bigr| \right].\label{sym-bound-fm}  
\end{align}

Define $\boldsymbol{\sigma} = (\sigma_1, \dots, \sigma_n)^{\top}$, where $\sigma_1,\ldots,\sigma_n$ are i.i.d. Rademacher random variables. These variables are independent of the data $\mathcal{D}_n$ and $\mathcal{D}'_n$. Swapping $(X_i,Y_i)$ and $(X_i',Y_i')$ does not change the distribution, hence we can further bound \eqref{sym-bound-fm} as
\begin{align}
&\mathbb{E}_{\cD_n,\cD'_n} \left[ \sup_{f \in \mathcal{F}_{\operatorname{NN}}} \Bigl| \frac{1}{n}\sum_{i=1}^n \bigl(\widehat\cL_m(f; X_i', Y_i',{\bs{\varepsilon}}^{(i)})-\widehat\cL_m(f; X_i, Y_i,{\bs{\varepsilon}}^{(i)}) \bigr) \Bigr| \right]\nonumber\\
&\leq 2 \, \mathbb{E}_{\cD_n,{\bs{\sigma}}}\!\left[ \sup_{f \in \mathcal{F}_{\operatorname{NN}}}\Bigl| \frac{1}{n}\sum_{i=1}^n \sigma_i\widehat\cL_m\big(f; X_i, Y_i,{\bs{\varepsilon}}^{(i)}\big) \Bigr| \right].\label{sym-b-fm} 
\end{align}

For any set $\cM \subseteq \mathbb{R}^n$, the Rademacher complexity of $\cM$ is defined as
\begin{equation}\label{comp-def}
\mathfrak{R}(\cM) = \mathbb{E}_{{\bs{\sigma}}}\left[\sup_{(\zeta_1,\ldots,\zeta_n) \in \cM} \frac{1}{n}\sum_{i=1}^n \sigma_i \zeta_i \right].
\end{equation}
Conditionally on the observed data $\cD_n = \{Z_1, \dots, Z_n\}$ and the noise samples $\cE_n$, we denote the set of loss vectors by
\begin{equation*}
\cG_{|\cD_n,\cE_n} = \left\{ \big(\widehat\cL_m(f;Z_1,{\bs{\varepsilon}^{(1)}}), \ldots, \widehat\cL_m(f; Z_n,{\bs{\varepsilon}^{(n)}})\big) : f \in \cF_{\operatorname{NN}} \right\}.
\end{equation*}
Define the symmetric completion $\check{\cG}_{|\cD_n,\cE_n}$ of $\cG_{|\cD_n,\cE_n}$ as $\check{\cG}_{|\cD_n,\cE_n} = \cG_{|\cD_n,\cE_n} \cup \{-\bs{\zeta} : \bs{\zeta} \in \cG_{|\cD_n,\cE_n}\}.$ It follows from \eqref{sym-b-fm} that
\begin{equation}\label{rad-comp-final-fm}
\mathbb{E}_{\cD_n}\left[\sup_{f \in \mathcal{F}_{\operatorname{NN}}} \bigl|\mathcal{R}^{(m)}(f) - {\mathcal{R}}_n^{(m)}(f) \bigr| \right] \leq 2 \, \mathbb{E}_{\cD_n} \big[ \mathfrak{R}(\check{\cG}_{|\cD_n,\cE_n}) \big].
\end{equation}

Under the conditions that $\|f\|_{\infty}\leq \cB$ for all $f\in\cF_{\operatorname{NN}}$ and $\|f^*\|_{\infty}\leq \cB$ (implying $\|y\|_{\infty}\leq \cB$), we apply Proposition~\ref{bound-value-cL} to obtain that for any given point $(x,y)$,
\begin{equation}\label{bound-for-cL-fm}
\big|\widehat\cL_m(f;x,y,{\bs\varepsilon}^{(i)})\big|\leq2\cB.   
\end{equation}

Recall that for any given two vectors ${\bs u}, {\bs v}\in\R^n$, 
$$\|{\bs u}-{\bs v}\|_2=\sqrt{\frac{1}{n}\sum_{i=1}^n(u_i-v_i)^2}.$$ Denote the $\delta$-covering number of the set $\check{\cG}_{|\cD_n,\cE_n}\subseteq\R^n$ with respect to the normalized Euclidean distance $\|\cdot\|_2$ by $\coverN\big(\delta,\check{\cG}_{|\cD_n,\cE_n},\|\cdot\|_2\big)$. Since any $\delta$-cover of $\cG_{|\cD_n,\cE_n}$ can be mirrored to form a $\delta$-cover of $-\cG_{|\cD_n,\cE_n}$, the covering numbers satisfy $$\mathcal{N}\big(\delta,\check{\cG}_{|\cD_n,\cE_n}, \|\cdot\|_2\big) \leq 2\mathcal{N}\big(\delta, \cG_{|\cD_n,\cE_n}, \|\cdot\|_2\big).$$ Observe that from \eqref{bound-for-cL-fm}, we have for any ${\bs{\zeta}},{\bs{\zeta}}'\in\check{\cG}_{|\cD_n,\cE_n}\subseteq\R^n$, $\|{\bs{\zeta}}-{\bs{\zeta}}'\|_2\leq4\cB$. Therefore, applying Lemma~\ref{dudly-int} to the set $\check{\cG}_{|\cD_n,\cE_n}$ with $K=4\cB$, we obtain that for any nonnegative integer $M$,
\begin{align}
\mathfrak{R}(\check{\cG}_{|\cD_n,\cE_n})&\leq 2^{-M+2}\cB+\frac{24\cB}{\sqrt{n}}\sum_{m=1}^{M}2^{-m}\sqrt{\log\cN\big(2^{-m+2}\cB,\check{\cG}_{|\cD_n,\cE_n},\|\cdot\|_2\big)}\nonumber\\
&\leq 2^{-M+2}\cB + \frac{12}{\sqrt{n}} \int_{2^{-M+1}\cB}^{2\cB} \sqrt{\log\cro{2\mathcal{N} \big( \tau,\cG_{|\cD_n,\cE_n}, \|\cdot\|_2 \big)}} \, d\tau.\label{sum-integ}
\end{align}
Observe that for any $\delta\in(0,2\cB)$, there exists an integer $M$ satisfying $2^{-M}\cB\leq\delta<2^{-M+1}\cB$. It then follows that
\begin{equation}\label{bb-1-fm}
\mathfrak{R}(\check{\cG}_{|\cD_n,\cE_n})\leq\inf_{0<\delta<2\cB}\left(4\delta+\frac{12}{\sqrt{n}}\int_{\delta}^{2\cB}\sqrt{\log\cro{2\coverN\big(\tau,\cG_{|\cD_n,\cE_n},\|\cdot\|_2\big)}}\, d\tau\right).
\end{equation}
Moreover, for any two functions $f,g\in\cF_{\operatorname{NN}}$ satisfying $\|f-g\|_{L^{\infty}(\cX\times\cE)}\leq\delta/2$, Proposition~\ref{fm-loss-2} implies that almost surely
\begin{align*}
\left|\widehat\cL_m\big(f;X_i,Y_i,{\bs{\varepsilon}}^{(i)}\big)-\widehat\cL_m\big(g;X_i,Y_i,{\bs{\varepsilon}}^{(i)}\big)\right|\leq2\|f-g\|_{L^{\infty}(\cX\times\cE)}\leq\delta.
\end{align*}
Thus, we have
\begin{equation}\label{l2-sup-cover-conne-fm}
\log\coverN(\delta,\cG_{|\cD_n,\cE_n},\|\cdot\|_2)\leq  \log\coverN(\delta/2,\cF_{\operatorname{NN}},\|\cdot\|_{\infty}).
\end{equation}
Substituting the covering number bound from \eqref{l2-sup-cover-conne-fm} into inequality \eqref{bb-1-fm} implies, almost surely, that
\begin{equation}\label{rader-bb-fm}
\mathfrak{R}(\check{\cG}_{|\cD_n,\cE_n})\leq\inf_{0<\delta<2\cB}\left(4\delta+\frac{12}{\sqrt{n}}\int_{\delta}^{2\cB}\sqrt{\log\cro{2\coverN\big(\tau/2,\cF_{\operatorname{NN}},\|\cdot\|_{\infty}\big)}}\, d\tau\right).    
\end{equation}
Taking the expectation with respect to the sample $\cD_n$ on both sides of \eqref{rader-bb-fm}, and combining this with \eqref{rad-comp-final-fm}, yields
\begin{align*}
&\mathbb{E}_{\cD_n}\left[\sup_{f \in \mathcal{F}_{\operatorname{NN}}}\Bigl| \mathcal{R}^{(m)}(f) - {\mathcal{R}}_n^{(m)}(f) \Bigr| \right]\\
&\leq\inf_{0<\delta<2\cB}\left(8\delta+\frac{24}{\sqrt{n}}\int_{\delta}^{2\cB}\sqrt{\log\cro{2\coverN\big(\tau/2,\cF_{\operatorname{NN}},\|\cdot\|_{\infty}\big)}}\, d\tau\right),    
\end{align*}
which completes the proof.
\end{proof}

\begin{proof}[Proof of Lemma~\ref{lem:expect-mc}]
Recall that for any $Z = (X,Y)\in\cX\times\cY$ and any given $\bm{\varepsilon} = (\varepsilon_1, \ldots, \varepsilon_m)\in\cE^m$, the loss function $\widehat{\mathcal{L}}_m$ defined in \eqref{fm-loss-def} takes the form
\[
\widehat{\mathcal{L}}_m(f; Z, \bm{\varepsilon}) 
= \frac{1}{m}\sum_{j=1}^{m}\|Y - f(X,\varepsilon_j)\|_2 
- \frac{1}{2m(m-1)}\sum_{j\not=j'}^{m}\|f(X,\varepsilon_j) - f(X,\varepsilon_{j'})\|_2,
\]
and its expectation with respect to the noise samples $\bm{\varepsilon} = (\varepsilon_1, \ldots, \varepsilon_m)$ satisfies $$\mathcal{L}(f; Z) = \mathbb{E}_{\bm{\varepsilon}}[\widehat{\mathcal{L}}_m(f; Z, \bm{\varepsilon})].$$ 

Let $\widetilde\cE_n=(\widetilde{\bs\varepsilon}^{(1)},\ldots,\widetilde{\bs\varepsilon}^{(n)})$ be an independent copy of $\cE_n=({\bs\varepsilon}^{(1)},\ldots,{\bs\varepsilon}^{(n)})$ and let ${\bs\sigma}=(\sigma_1,\ldots,\sigma_n)^{\top}$ be i.i.d. Rademacher random variables. By the definition of $\cR^{(m)}(f)$ in \eqref{def-cRm}, and using $$\cR(f)=\E_{\cD_n}\cro{\frac{1}{n}\sum_{i=1}^n\cL(f;Z_i)},$$ it follows that
\begin{align} 
&\mathbb{E}_{\cE_n} \cro{\sup_{f\in\cF_{\operatorname{NN}}}\left|\cR(f)-\cR^{(m)}(f)\right|} \nonumber\\
&= \mathbb{E}_{\cE_n} \cro{\sup_{f\in\cF_{\operatorname{NN}}}\left|\mathbb{E}_{\cD_n} \cro{\frac{1}{n}\sum_{i=1}^n \left( \mathcal{L}(f; Z_i) - \widehat{\mathcal{L}}_m(f; Z_i, \bm{\varepsilon}^{(i)}) \right)} \right|} \nonumber\\
&\leq \mathbb{E}_{\cD_n,\cE_n}\cro{\sup_{f\in\cF_{\operatorname{NN}}}\left|\frac{1}{n}\sum_{i=1}^n \left( \mathcal{L}(f; Z_i) - \widehat{\mathcal{L}}_m(f; Z_i, \bm{\varepsilon}^{(i)}) \right) \right|}.\label{rewrite-monte-carlo}
\end{align}

We now treat $Z_i = (X_i,Y_i)$ as fixed and consider the deviation arising solely from the randomness of the noise samples $\cE_n = (\bm{\varepsilon}^{(1)}, \ldots, \bm{\varepsilon}^{(n)})$. Define
\[ 
\phi_f(x,y,\varepsilon) = \|y - f(x, \varepsilon)\|_2 \quad \text{and} \quad \psi_f(x,\varepsilon, \varepsilon') = \|f(x, \varepsilon) - f(x, \varepsilon')\|_2. 
\]
Using the triangle inequality, we split the linear empirical process and the U-statistic component in \eqref{rewrite-monte-carlo} to obtain the upper bound
\begin{align}
&\mathbb{E}_{\cD_n,\cE_n}\cro{\sup_{f\in\cF_{\operatorname{NN}}}\left|\frac{1}{n}\sum_{i=1}^n \cro{ \mathcal{L}(f; Z_i) - \widehat{\mathcal{L}}_m(f; Z_i, \bm{\varepsilon}^{(i)})}\right|}\label{total-dec-monte}\\
&\leq \underbrace{\mathbb{E}_{\cD_n,\cE_n} \left[ \sup_{f \in \mathcal{F}_{\operatorname{NN}}} \left|\frac{1}{n}\sum_{i=1}^n\cro{ \mathbb{E}_{\varepsilon}[\phi_f(X_i,Y_i,\varepsilon)] - \frac{1}{m}\sum_{j=1}^m \phi_f(X_i,Y_i,{\bs\varepsilon}_j^{(i)})}\right| \right]}_{I_1}\nonumber\\
&\quad+ \frac{1}{2} \underbrace{\mathbb{E}_{\cD_n,\cE_n} \left[ \sup_{f \in \mathcal{F}_{\operatorname{NN}}} \left| \frac{1}{n}\sum_{i=1}^n\cro{\mathbb{E}_{\varepsilon, \varepsilon'}[\psi_f(X_i,\varepsilon, \varepsilon')] - \frac{1}{m(m-1)}\sum_{j \neq k} \psi_f(X_i,{\bs\varepsilon}_j^{(i)}, {\bs\varepsilon}_k^{(i)})} \right| \right]}_{I_2}.\nonumber    
\end{align}
\paragraph{Bounding $I_1$:}
First, fix $\cD_n=\{(X_i,Y_i)\}_{i=1}^n$ and define
$$\zeta_f(x,y,\varepsilon)=\mathbb{E}_{\varepsilon'}[\phi_f(x,y,\varepsilon')] - \phi_f(x,y,\varepsilon).$$ Under the condition that for all $f\in\cF_{\operatorname{NN}}\cup\{f^*\}$, $\|f\|_{L^{\infty}(\cX\times\cE)}\leq\cB$, for any $(x,y)\in\mathcal{X}\times\mathcal{Y}$ generated by model \eqref{eq:statistical-setting}, and any $\varepsilon\in\mathcal{E}$, it holds that
\begin{equation}\label{eta-abso-b}
\left|\phi_f(x,y,\varepsilon)\right|\leq2\cB.    
\end{equation}
Moreover, for any $g\in\cF_{\operatorname{NN}}$, it can be shown, by an argument analogous to that in the proof of Proposition~\ref{loss-Lip-2}, that
\begin{align}
\left|\phi_f(x,y,\varepsilon)-\phi_g(x,y,\varepsilon)\right|\leq\|f-g\|_{L^\infty(\cX\times\cE)}.\label{cover-ralate-I1}
\end{align}

Reformulating $I_1$ as
\begin{align}
I_1=\mathbb{E}_{\cD_n}\cro{\mathbb{E}_{\cE_n} \left[ \sup_{f \in \mathcal{F}_{\operatorname{NN}}} \left|\frac{1}{n}\sum_{i=1}^n\frac{1}{m}\sum_{j=1}^m\zeta_f(X_i,Y_i,{\bs\varepsilon}_j^{(i)})\right|\Big|\cD_n\right]}.
\end{align}
Conditioned on $\cD_n$, the random variables $\{\zeta_f\big(X_i,Y_i,\bm{\varepsilon}_j^{(i)}\big)\}_{i,j}$ are independent. A standard symmetrization argument, as in the proof of Lemma~\ref{lem:emp-stoch-error}, thus yields
\begin{align}
I_1 \leq 2 \, \mathbb{E}_{\cD_n,\cE_n,\bm{\sigma}} \left[ \sup_{f \in \mathcal{F}_{\operatorname{NN}}}\left| \frac{1}{mn}\sum_{i=1}^n\sum_{j=1}^m \sigma_j^{(i)} \phi_f(X_i,Y_i,\bm{\varepsilon}_j^{(i)})\right| \right],    
\end{align}
where $\bm{\sigma} = (\bm{\sigma}_{1}, \ldots, \bm{\sigma}_{m})^{\top}$, and each $\bm{\sigma}_{j}=(\sigma^{(1)}_j,\ldots,\sigma^{(n)}_j)^{\top}$ is a vector of $n$ i.i.d. Rademacher random variables. 

Now consider the set of vectors $$\mathcal{V}_{|\cD_n,\cE_n} =\left\{\big(\phi_f(Z_1,\bm{\varepsilon}_1^{(1)}),\ldots,\phi_f(Z_1,\bm{\varepsilon}_m^{(1)}),\ldots,\phi_f(Z_n,\bm{\varepsilon}_1^{(n)}),\ldots,\phi_f(Z_n,\bm{\varepsilon}_m^{(n)})\big): f \in \mathcal{F}_{\operatorname{NN}}\right\}$$ and define its symmetric completion as $${\mathcal{\check{V}}_{|\cD_n,\cE_n}}=\mathcal{V}_{|\cD_n,\cE_n}\cup\{-{\bs\eta}:{\bs\eta}\in\mathcal{V}_{|\cD_n,\cE_n}\}.$$ It follows that for any $\delta>0$,
\begin{equation}\label{cover-bound-monte}
\coverN(\delta, {\mathcal{\check{V}}_{|\cD_n,\cE_n}}, \|\cdot\|_2) \leq2\coverN(\delta, \mathcal{V}_{|\cD_n,\cE_n}, \|\cdot\|_2). 
\end{equation}
Therefore, using the notation introduced in \eqref{comp-def} and ${\bs\eta}= (\bm{\eta}_{1}, \ldots, \bm{\eta}_{m})^{\top}$, and each $\bm{\eta}_{j}=(\eta^{(1)}_j,\ldots,\eta^{(n)}_j)^{\top}$, we have
\begin{equation}\label{connect-I1-rader}
I_1\leq2  \E_{\cD_n,\cE_n}\cro{\mathbb{E}_{\bm{\sigma}} \left[ \sup_{{\bs\eta}\in {\mathcal{\check{V}}_{|\cD_n,\cE_n}}}\frac{1}{mn}\sum_{i=1}^n\sum_{j=1}^m \sigma_j^{(i)} \eta_j^{(i)}\right]}=2\E_{\cD_n,\cE_n}\cro{\mathfrak{R}(\check{\cV}_{|\cD_n,\cE_n})}.  
\end{equation}

For any ${\bs\eta},{\bs\eta}'\in\check{\cV}_{|\cD_n,\cE_n}\subseteq\R^{mn}$, it follows from \eqref{eta-abso-b} that $\|{\bs\eta}-{\bs\eta}'\|_2\leq4\cB$. Applying Lemma~\ref{dudly-int} to the set ${\mathcal{\check{V}}_{|\cD_n,\cE_n}}$ with $K=4\cB$, we can show that for any nonnegative integer $M$,
\begin{align*}
\mathfrak{R}(\check{\cV}_{|\cD_n,\cE_n})&\leq 2^{-M+2}\cB+\frac{24\cB}{\sqrt{mn}}\sum_{k=1}^{M}2^{-k}\sqrt{\log\cN\big(2^{-k+2}\cB,\check{\cV}_{|\cD_n,\cE_n},\|\cdot\|_2\big)}\\
&\leq 2^{-M+2}\cB + \frac{24}{\sqrt{mn}} \int_{2^{-M+1}\cB}^{2\cB} \sqrt{\log \mathcal{N} \big( \tau, \check{\cV}_{|\cD_n,\cE_n}, \|\cdot\|_2 \big)} \, d\tau\\
&\leq 2^{-M+2}\cB + \frac{24}{\sqrt{mn}} \int_{2^{-M+1}\cB}^{2\cB} \sqrt{\log\cro{2\mathcal{N} \big( \tau,{\cV}_{|\cD_n,\cE_n}, \|\cdot\|_2 \big)}} \, d\tau.
\end{align*} 
Since for any $\delta\in(0,2\cB)$, there exists an integer $M$ satisfying $2^{-M}\cB\leq\delta<2^{-M+1}\cB$. It then follows that
\begin{equation}\label{monte-carlo-initial-bound}
\mathfrak{R}(\check{\cV}_{|\cD_n,\cE_n})\leq\inf_{0<\delta<2\cB}\left(4\delta+\frac{24}{\sqrt{mn}}\int_{\delta}^{2\cB}\sqrt{\log\cro{2\mathcal{N} \big( \tau,{\cV}_{|\cD_n,\cE_n}, \|\cdot\|_2 \big)}}\, d\tau\right).    
\end{equation}
Furthermore, for any two functions $f,g\in\cF_{\operatorname{NN}}$ satisfying $\|f-g\|_{L^{\infty}(\cX\times\cE)}\leq\delta$, \eqref{cover-ralate-I1} implies that almost surely
$$\left|\phi_f(Z_i,{\bs{\varepsilon}}_j^{(i)})-\phi_g(Z_i,{\bs{\varepsilon}}_j^{(i)})\right|\leq\delta.$$
This, together with \eqref{monte-carlo-initial-bound}, yields
\begin{equation*}
\mathfrak{R}(\check{\cV}_{|\cD_n,\cE_n})\leq\inf_{0<\delta<2\cB}\left(4\delta+\frac{24}{\sqrt{mn}}\int_{\delta}^{2\cB}\sqrt{\log\cro{2\coverN\big(\tau,\cF_{\operatorname{NN}},\|\cdot\|_{\infty}\big)}}\, d\tau\right),
\end{equation*}
Combining the above with \eqref{connect-I1-rader} shows that
\begin{equation}\label{bound-monte-I1}
I_1 \leq \inf_{0<\delta<2\cB}\left(8\delta+\frac{48}{\sqrt{mn}}\int_{\delta}^{2\cB}\sqrt{\log\cro{2\coverN\big(\tau,\cF_{\operatorname{NN}},\|\cdot\|_{\infty}\big)}}\, d\tau\right).    
\end{equation}

\paragraph{Bounding $I_2$:}
Define the centered kernel
\[
\kappa_f(x,\varepsilon, \varepsilon') = \psi_f(x,\varepsilon, \varepsilon') - \mathbb{E}_{\tilde\varepsilon,\tilde\varepsilon'}[\psi_f(x,\tilde\varepsilon, \tilde\varepsilon')].
\]
By the triangle inequality, we have
\begin{align}
\left|\psi_f(x,\varepsilon, \varepsilon')-\psi_g(x,\varepsilon, \varepsilon')\right|&=\left|\|f(x,\varepsilon)-f(x,\varepsilon')\|_2-\|g(x,\varepsilon)-g(x,\varepsilon')\|_2\right|\nonumber\\
&\leq\|f(x,\varepsilon)-g(x,\varepsilon)\|_2+\|g(x,\varepsilon')-f(x,\varepsilon')\|_2\nonumber\\
&\leq2\|f-g\|_{L^\infty(\cX\times\cE)}.\label{psi-lip-I2}
\end{align}
Now rewrite $I_2$ as
\begin{equation}\label{I2-sum-U}
I_2 = \mathbb{E}_{\cD_n}\cro{\mathbb{E}_{\cE_n} \left[ \sup_{f \in \mathcal{F}_{\operatorname{NN}}} \left|\frac{1}{n}\sum_{i=1}^n\frac{1}{m(m-1)}\sum_{j \neq k} \kappa_f(X_i,{\bs\varepsilon}_j^{(i)}, {\bs\varepsilon}_k^{(i)}) \right| \Big|\cD_n\right]}.  \end{equation}
Since the summands in a U-statistic of order 2 are dependent, we cannot directly apply symmetrization to \eqref{I2-sum-U} as written. Let $\widetilde\cE_n = (\tilde{\bs\varepsilon}^{(1)}, \ldots, \tilde{\bs\varepsilon}^{(n)})$ be an independent copy of $\cE_n $. Applying Lemma~\ref{delapena-decop} yields the following decoupled bound of \eqref{I2-sum-U}
\begin{equation}\label{decomp-U2-ver}
I_2 \leq 12\; \mathbb{E}_{\cD_n}\cro{\mathbb{E}_{\cE_n,\widetilde\cE_n} \left[ \sup_{f \in \mathcal{F}_{\operatorname{NN}}} \left|\frac{1}{nm(m-1)}\sum_{i=1}^n\sum_{j \neq k} \kappa_f(X_i,{\bs\varepsilon}_j^{(i)}, \tilde{\bs\varepsilon}_k^{(i)}) \right| \Big|\cD_n\right]}.
\end{equation}
The term inside the supremum in \eqref{decomp-U2-ver} is an average over all off-diagonal pairs. The set of off-diagonal indices can be partitioned into $m-1$ cyclic permutations. By the convexity of the supremum norm and the identical distribution of these permutations, the expected supremum over the full off-diagonal sum is bounded by the expected supremum over a single diagonal sum. This reduces the problem to an empirical average of $m$ independent variables
\[
I_2\leq12\; \mathbb{E}_{\cD_n}\cro{\mathbb{E}_{\cE_n,\widetilde\cE_n} \left[ \sup_{f \in \mathcal{F}_{\operatorname{NN}}} \left|\frac{1}{nm}\sum_{i=1}^n\sum_{j=1}^{m} \kappa_f(X_i,{\bs\varepsilon}_j^{(i)}, \tilde{\bs\varepsilon}_j^{(i)}) \right| \Big|\cD_n\right]}.
\]

By doing so, conditionally on $\{X_i\}_{i=1}^n$, the term $\sum_{i=1}^n\sum_{j=1}^m \kappa_f(X_i,{\bs\varepsilon}_j^{(i)}, \tilde{\bs\varepsilon}_j^{(i)})$ is a sum of $mn$ independent zero-mean random variables. Applying standard symmetrization yields
\begin{equation}\label{I2-symm-b}
I_2\leq 24\; \mathbb{E}_{\cD_n}\cro{\mathbb{E}_{\cE_n,\widetilde\cE_n,\bm{\sigma}} \left[ \sup_{f \in \mathcal{F}_{\operatorname{NN}}} \left|\frac{1}{nm}\sum_{i=1}^n\sum_{j=1}^{m}\sigma_j^{(i)}\psi_f(X_i,{\bs\varepsilon}_j^{(i)}, \tilde{\bs\varepsilon}_j^{(i)}) \right| \Big|\cD_n\right]}, \end{equation}
where $\bm{\sigma} = (\bm{\sigma}_{1}, \ldots, \bm{\sigma}_{m})^{\top}$, and each $\bm{\sigma}_{j}=(\sigma^{(1)}_j,\ldots,\sigma^{(n)}_j)^{\top}$ consisting of $n$ i.i.d. Rademacher variables. 

Define the set of vectors $$\mathcal{U}_{|\cD_n,\cE_n,\widetilde{\cE}_n} =\left\{\big(\psi_f(X_1,\bm{\varepsilon}_1^{(1)},\widetilde{\bm{\varepsilon}}_1^{(1)}),\ldots,\psi_f(X_n,\bm{\varepsilon}_m^{(n)},\widetilde{\bm{\varepsilon}}_m^{(n)})\big): f \in \mathcal{F}_{\operatorname{NN}}\right\}$$ and define its symmetric completion as ${\mathcal{\check{U}}_{|\cD_n,\cE_n,\widetilde{\cE}_n}}=\mathcal{U}_{|\cD_n,\cE_n,\widetilde{\cE}_n}\cup\left\{-{\bs\gamma}:{\bs\gamma}\in\mathcal{U}_{|\cD_n,\cE_n,\widetilde{\cE}_n}\right\}$. It follows that for any $\delta>0$,
\begin{equation}\label{cover-bound-monte-I2}
\coverN(\delta, {\mathcal{\check{U}}_{|\cD_n,\cE_n,\widetilde{\cE}_n}}, \|\cdot\|_2) \leq2\coverN(\delta, \mathcal{U}_{|\cD_n,\cE_n,\widetilde{\cE}_n}, \|\cdot\|_2). 
\end{equation}
Using the notation introduced in \eqref{comp-def}, we denote ${\bs\gamma}= (\bm{\gamma}_{1}, \ldots, \bm{\gamma}_{m})^{\top}$, where each $\bm{\gamma}_{j}=(\gamma^{(1)}_j,\ldots,\gamma^{(n)}_j)^{\top}$. We derive from \eqref{I2-symm-b} that\vspace{-5pt}
\begin{align}
I_2&\leq24\mathbb{E}_{\cD_n,\cE_n,\widetilde\cE_n}\cro{\mathbb{E}_{\bm{\sigma}} \left[ \sup_{{\bs\gamma}\in {\mathcal{\check{U}}_{|\cD_n,\cE_n,\widetilde\cE_n}}}\frac{1}{nm}\sum_{i=1}^n\sum_{j=1}^{m}\sigma_j^{(i)}{\gamma}_j^{(i)}\right]}\nonumber\\
&=24\E_{\cD_n,\cE_n,\widetilde\cE_n}\cro{\mathfrak{R}\left(\check{\cU}_{|\cD_n,\cE_n,\widetilde\cE_n}\right)}.  \label{connect-I2-rader}
\end{align}

Applying Lemma~\ref{dudly-int} to the set $\check{\cU}_{|\cD_n,\cE_n,\widetilde\cE_n}$ and using \eqref{psi-lip-I2}, we can show by an argument similar to that used for $I_1$. Specifically,
\begin{equation*} 
\mathfrak{R}(\check{\cU}_{|\cD_n,\cE_n,\widetilde\cE_n})\leq\inf_{0<\delta<2\cB}\left(4\delta+\frac{24}{\sqrt{mn}}\int_{\delta}^{2\cB}\sqrt{\log\cro{2\coverN\big(\tau/2,\cF_{\operatorname{NN}},\|\cdot\|_{\infty}\big)}}\, d\tau\right),
\end{equation*}
which together with \eqref{connect-I2-rader} implies that
\begin{equation}\label{bound-monte-I2}
I_2 \leq 24\inf_{0<\delta<2\cB}\left(4\delta+\frac{24}{\sqrt{mn}}\int_{\delta}^{2\cB}\sqrt{\log\cro{2\coverN\big(\tau/2,\cF_{\operatorname{NN}},\|\cdot\|_{\infty}\big)}}\, d\tau\right).    
\end{equation}

Combining the preceding bounds \eqref{bound-monte-I1}, \eqref{bound-monte-I2} and substituting them into \eqref{total-dec-monte}, we finally obtain that
\begin{align*}
&\mathbb{E}_{\cD_n,\cE_n}\cro{\sup_{f\in\cF_{\operatorname{NN}}}\left|\frac{1}{n}\sum_{i=1}^n \cro{ \mathcal{L}(f; Z_i) - \widehat{\mathcal{L}}_m(f; Z_i, \bm{\varepsilon}^{(i)})}\right|}\\
&\leq I_1 + \frac{1}{2}I_2\\
&\leq14\inf_{0<\delta<2\cB}\left(4\delta+\frac{24}{\sqrt{mn}}\int_{\delta}^{2\cB}\sqrt{\log\cro{2\coverN\big(\tau/2,\cF_{\operatorname{NN}},\|\cdot\|_{\infty}\big)}}\, d\tau\right).
\end{align*}
\end{proof}

\begin{proof}[Proof of Theorem~\ref{thm-rate}]
The proof follows by combining Proposition~\ref{cover-bound} and Lemma~\ref{dnn-approx-lemma} within the framework of Theorem~\ref{thm:expect-main-m}. Specifically, setting $\delta_1 = \delta_2 = 2/n$ in the right-hand side of Theorem~\ref{thm:expect-main-m}, we obtain that
\begin{align}
&\mathbb{E}_{\cD_n,\cE_n}\bigl[\cR(\widehat f_m)-\cR(f^*)\bigr]\nonumber\\&\leq C'\cro{\inf_{f\in\mathcal{F}(L_n,\bs{p}_n, s_n,\cB)}\|f-f^*\|_{L^{\infty}(\cX\times\cE)}+\sqrt{\frac{\log\cN\big(1/n,\mathcal{F}(L_n,\bs{p}_n, s_n,\cB),\|\cdot\|_{\infty}\big)}{n}}},\label{rate-b-1}  \end{align}
where $C'>0$ is a numerical constant. Applying Lemma~\ref{dnn-approx-lemma} and Proposition~\ref{cover-bound} to the right-hand side of \eqref{rate-b-1} yields
\begin{align*}
\mathbb{E}_{\cD_n,\cE_n}\bigl[\cR(\widehat f_m)-\cR(f^*)\bigr]\leq C'\max_{i=0,\ldots,q}\cro{Cn^{-\frac{\alpha_i^*}{2\alpha_i^*+t_i}}+C''n^{-\frac{\alpha_i^*}{2\alpha_i^*+t_i}}(\log n)^{3/2}},   
\end{align*}
which completes the proof.
\end{proof}

\subsection{Performance under the original energy loss}
In this subsection, we present the theoretical performance guarantees for the estimator $\widehat f_{\infty}$, using an analysis that parallels the argument for $\widehat f_m$. Recall that $\mathcal{F}_{\operatorname{NN}}$ denotes a class of ReLU networks mapping from $\mathcal{X} \times \mathcal{E}$ to $\mathcal{Y}$. The next result shows a decomposition of the excess risk of $\widehat f_{\infty}$ into stochastic and approximation errors.
\begin{lemma}\label{decom-1}
Let $\widehat f_{\infty}$ be the estimator defined in \eqref{loss-1} over the class $\cF_{\operatorname{NN}}$ of functions mapping $\cX\times\cE$ to $\cY$. Then, for any $f \in \mathcal{F}_{\operatorname{NN}}$,
\[
\mathcal{R}(\widehat f_{\infty}) - \mathcal{R}(f^*) \leq 2\sup_{g \in \mathcal{F}_{\operatorname{NN}}}\big|\mathcal{R}(g)-\mathcal{R}_n(g)\big| + \mathcal{R}(f)-\mathcal{R}(f^*).
\]  
\end{lemma}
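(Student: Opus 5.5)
The plan is to mirror the argument of Lemma~\ref{prop:error-decomp-m}, but without the Monte Carlo layer. Because $\mathcal{R}_n$ uses the exact energy loss $\mathcal{L}$, only one intermediate risk is needed. For an arbitrary $f\in\mathcal{F}_{\operatorname{NN}}$, I will write the telescoping identity
\begin{align*}
\mathcal{R}(\widehat f_{\infty}) - \mathcal{R}(f^*)
&= \bigl(\mathcal{R}(\widehat f_{\infty}) - \mathcal{R}_n(\widehat f_{\infty})\bigr)
+ \bigl(\mathcal{R}_n(\widehat f_{\infty}) - \mathcal{R}_n(f)\bigr)\\
&\quad + \bigl(\mathcal{R}_n(f) - \mathcal{R}(f)\bigr)
+ \bigl(\mathcal{R}(f) - \mathcal{R}(f^*)\bigr).
\end{align*}
This is an exact algebraic identity: each intermediate quantity appears once with a plus sign and once with a minus sign.

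Next I will handle the terms one at a time. Since $\widehat f_{\infty}$ minimizes $\mathcal{R}_n$ over $\mathcal{F}_{\operatorname{NN}}$ by \eqref{loss-1} and $f\in\mathcal{F}_{\operatorname{NN}}$, the second bracket is non-positive and can be dropped. The first bracket is bounded by $\sup_{g\in\mathcal{F}_{\operatorname{NN}}}|\mathcal{R}(g)-\mathcal{R}_n(g)|$ because $\widehat f_{\infty}\in\mathcal{F}_{\operatorname{NN}}$. The third bracket is bounded by the same supremum because $f\in\mathcal{F}_{\operatorname{NN}}$. Adding these two bounds gives the factor $2$, and the last bracket is exactly the approximation error $\mathcal{R}(f)-\mathcal{R}(f^*)$.

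There is no genuine obstacle. The only point needing care is that all quantities must be finite so that the telescoping is legitimate. In the paper's setting this is guaranteed by Proposition~\ref{bound-value-cL} under the boundedness assumption, or more generally by integrability of $\|Y\|_2$ and $\|f(X,\varepsilon)\|_2$. The lemma as stated needs no measurability or attainment issues beyond the existence of the minimizer, which is assumed in \eqref{loss-1}.
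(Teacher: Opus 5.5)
Your proposal is correct and follows the paper's proof exactly. Both write the same four-term telescoping identity, discard $\mathcal{R}_n(\widehat f_{\infty})-\mathcal{R}_n(f)\le 0$ using the empirical-risk-minimizer property, and bound the two remaining deviation terms by $\sup_{g\in\mathcal{F}_{\operatorname{NN}}}|\mathcal{R}(g)-\mathcal{R}_n(g)|$ to obtain the factor $2$. Your added remark on finiteness of the risks is a harmless extra that the paper leaves implicit.
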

\begin{proof}\label{proof-decom-1}
Since $\widehat f_{\infty}$ is the empirical risk minimizer over $\cF_{\operatorname{NN}}$, we have $\mathcal{R}_n(\widehat f_{\infty})\leq\mathcal{R}_n(f)$ for any fixed $f \in \mathcal{F}_{\operatorname{NN}}$. Therefore,
\begin{align*}
&\mathcal{R}(\widehat f_{\infty})-   \mathcal{R}(f^*)\\
&=\mathcal{R}(\widehat f_{\infty})-\mathcal{R}_n(\widehat f_{\infty})+\mathcal{R}_n(\widehat f_{\infty})-\mathcal{R}_n(f)+\mathcal{R}_n(f)- \mathcal{R}(f)  +\mathcal{R}(f)-\mathcal{R}(f^*)\\
&\leq\mathcal{R}(\widehat f_{\infty})-\mathcal{R}_n(\widehat f_{\infty})+\mathcal{R}_n(f)- \mathcal{R}(f)  +\mathcal{R}(f)-\mathcal{R}(f^*)\\
&\leq2\sup_{g\in\mathcal{F}_{\operatorname{NN}}}\big|\mathcal{R}(g)-\mathcal{R}_n(g)\big|+\mathcal{R}(f)-\mathcal{R}(f^*).
\end{align*}
\end{proof}
Next, we bound the stochastic error term in Lemma~\ref{decom-1} using the Rademacher complexity of the network class $\cF_{\operatorname{NN}}$. 
\begin{lemma}\label{stochastic-error}
Suppose there exists a finite constant $\cB$ such that $\|f\|_{L^\infty(\cX\times\cE)}\leq \cB$ for all $f \in \cF_{\operatorname{NN}} \cup \{f^*\}$. Then, 
\begin{align*}
\mathbb{E}_{\cD_n}\left[\sup_{f \in \mathcal{F}_{\operatorname{NN}}} \bigl| \mathcal{R}(f) - {\mathcal{R}}_n(f) \bigr| \right]\leq\inf_{0<\delta<2\cB}\left(8\delta+\frac{24}{\sqrt{n}}\int_{\delta}^{2\cB}\hspace{-8pt}\sqrt{\log\cro{2\coverN\big(\tau/2,\cF_{\operatorname{NN}},\|\cdot\|_{\infty}\big)}}\, d\tau\right).   
\end{align*}
\end{lemma}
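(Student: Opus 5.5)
The argument will closely track the proof of Lemma~\ref{lem:emp-stoch-error}. The only change is that the Monte Carlo loss $\widehat\cL_m(f;Z_i,\bs\varepsilon^{(i)})$ is replaced by the exact energy loss $\cL(f;Z_i)$, so there is no conditioning on $\cE_n$ and no decoupling step.

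First, symmetrize. Let $\cD_n'$ be an independent copy of $\cD_n$ and write $\cR(f)=\E_{\cD_n'}[n^{-1}\sum_i\cL(f;Z_i')]$. By Jensen's inequality,
\[
\E_{\cD_n}\Bigl[\sup_{f\in\cF_{\rm NN}}|\cR(f)-\cR_n(f)|\Bigr]\leq\E_{\cD_n,\cD_n'}\Bigl[\sup_{f\in\cF_{\rm NN}}\Bigl|\frac1n\sum_{i=1}^n\bigl(\cL(f;Z_i')-\cL(f;Z_i)\bigr)\Bigr|\Bigr].
\]
Introducing i.i.d.\ Rademacher signs, which is valid because swapping $Z_i$ and $Z_i'$ leaves the distribution unchanged, the right-hand side is at most $2\,\E_{\cD_n}[\mathfrak R(\check\cG_{|\cD_n})]$. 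Here $\cG_{|\cD_n}=\{(\cL(f;Z_1),\ldots,\cL(f;Z_n)):f\in\cF_{\rm NN}\}$, $\check\cG_{|\cD_n}$ is its symmetric completion, and $\mathfrak R$ is as in \eqref{comp-def}. The absolute value is absorbed by the symmetric completion, exactly as in \eqref{rad-comp-final-fm}.

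Next, apply chaining. Since $\|f^*\|_\infty\le\cB$, we have $\|Y\|_\infty\le\cB$, so \eqref{cl-bound} of Proposition~\ref{bound-value-cL} gives $|\cL(f;Z_i)|\le 2\cB$. Consequently any two vectors in $\check\cG_{|\cD_n}$ are within normalized Euclidean distance $4\cB$ of each other. Apply Lemma~\ref{dudly-int} with $K=4\cB$, bound the dyadic sum by an integral, and use $\cN(\tau,\check\cG,\|\cdot\|_2)\le 2\cN(\tau,\cG,\|\cdot\|_2)$. Then, for $\delta\in(0,2\cB)$, choose $M$ with $2^{-M}\cB\le\delta<2^{-M+1}\cB$. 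This reproduces \eqref{bb-1-fm}:
\[
\mathfrak R(\check\cG_{|\cD_n})\le 4\delta+\frac{12}{\sqrt n}\int_\delta^{2\cB}\sqrt{\log[2\cN(\tau,\cG_{|\cD_n},\|\cdot\|_2)]}\,d\tau.
\]

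Finally, convert to sup-norm covers. By Proposition~\ref{loss-Lip-2}, $|\cL(f;x,y)-\cL(g;x,y)|\le 2\E_\varepsilon\|f(x,\varepsilon)-g(x,\varepsilon)\|_2\le 2\|f-g\|_{L^\infty(\cX\times\cE)}$. The second inequality holds because the normalized $\|\cdot\|_2$ is dominated by $\|\cdot\|_\infty$. Hence a $\tau/2$-cover of $\cF_{\rm NN}$ in sup-norm induces a $\tau$-cover of $\cG_{|\cD_n}$, so $\cN(\tau,\cG_{|\cD_n},\|\cdot\|_2)\le\cN(\tau/2,\cF_{\rm NN},\|\cdot\|_\infty)$. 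Substituting this bound, taking the infimum over $\delta$ (which is deterministic, so the expectation over $\cD_n$ is trivial), and multiplying by $2$ gives the stated constants $8\delta$ and $24/\sqrt n$.

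There is no serious obstacle. The only point needing care is the symmetrization step: the absolute value must be handled via the symmetric completion, and the dyadic-sum-to-integral comparison must give the factor $12$ rather than $24$. That comparison uses $2^{-m}\cdot 4\cB=2\int_{2^{-m+1}\cB}^{2^{-m+2}\cB}d\tau/\ldots$, matching \eqref{sum-integ}.
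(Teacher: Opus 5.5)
Your proposal is correct and follows the paper's proof essentially step for step: ghost-sample symmetrization with Rademacher signs and the symmetric completion, Lemma~\ref{dudly-int} with $K=4\cB$ and the dyadic-to-integral comparison giving the factor $12$, and the conversion $\cN(\tau,\cG_{|\cD_n},\|\cdot\|_2)\le\cN(\tau/2,\cF_{\operatorname{NN}},\|\cdot\|_\infty)$ via Proposition~\ref{loss-Lip-2}, followed by multiplication by $2$. One small point, which the paper also glosses over: for $\delta\in[\cB,2\cB)$ your choice of $M$ is $M=0$, which Lemma~\ref{dudly-int} (stated for $M>0$) does not formally cover, but there the trivial bound $\mathfrak R\le K=4\cB\le 4\delta$ gives the same conclusion.
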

\begin{proof}
The proof follows an argument similar to that of Lemma~\ref{lem:emp-stoch-error}; hence, we only provide a sketch here. Let $\mathcal{D}'_n = \{(X_i', Y_i')\}_{i=1}^n$ be an independent copy of $\mathcal{D}_n = \{(X_i, Y_i)\}_{i=1}^n$. It follows that
\begin{align}
&\mathbb{E}_{\cD_n}\left[\sup_{f \in \mathcal{F}_{\operatorname{NN}}} \bigl| \mathcal{R}(f) - {\mathcal{R}}_n(f) \bigr| \right]\nonumber\\
&= \mathbb{E}_{\cD_n} \left[ \sup_{f \in \mathcal{F}_{\operatorname{NN}}}  \left| \mathbb{E}_{\cD'_n} \cro{ \frac{1}{n}\sum_{i=1}^n \cL(f; X_i', Y_i')} - \frac{1}{n}\sum_{i=1}^n \cL(f; X_i, Y_i) \right| \right]\nonumber\\
&\leq \mathbb{E}_{\cD_n,\cD'_n} \left[ \sup_{f \in \mathcal{F}_{\operatorname{NN}}} \Bigl| \frac{1}{n}\sum_{i=1}^n \bigl( \cL(f; X_i', Y_i') - \cL(f; X_i, Y_i) \bigr) \Bigr| \right].\label{sym-bound}
\end{align}

Let $\sigma_1, \dots, \sigma_n$ be i.i.d. Rademacher random variables, where $\mathbb{P}(\sigma_i = 1) = \mathbb{P}(\sigma_i = -1) = 1/2$. These variables are independent of the data $\mathcal{D}_n$ and $\mathcal{D}'_n$. Define $\boldsymbol{\sigma} = (\sigma_1, \dots, \sigma_n)^{\top}$. Swapping $(X_i,Y_i)$ and $(X_i',Y_i')$ does not change the distribution, hence
\begin{align}
&\mathbb{E}_{\cD_n,\cD'_n}\!\left[ \sup_{f \in \mathcal{F}_{\operatorname{NN}}} \Bigl| \frac{1}{n}\sum_{i=1}^n \bigl( \cL(f; X_i', Y_i') - \cL(f; X_i, Y_i) \bigr) \Bigr| \right]\nonumber\\
&= \mathbb{E}_{\cD_n,\cD'_n,{\bs\sigma}}\!\left[ \sup_{f \in \mathcal{F}_{\operatorname{NN}}}  \Bigl| \frac{1}{n}\sum_{i=1}^n \sigma_i \bigl( \cL(f; X_i', Y_i') - \cL(f; X_i, Y_i) \bigr) \Bigr| \right]\nonumber\\  &\leq 2 \, \mathbb{E}_{\cD_n,{\bs{\sigma}}}\!\left[ \sup_{f \in \mathcal{F}_{\operatorname{NN}}}\Bigl| \frac{1}{n}\sum_{i=1}^n \sigma_i \cL(f; X_i, Y_i) \Bigr| \right].\label{sym-b} 
\end{align}

Define $$\mathcal{G} = \{ (x, y) \mapsto \cL(f; x, y) : f \in \mathcal{F}_{\operatorname{NN}} \}$$ as the class of loss functions induced by $\mathcal{F}_{\operatorname{NN}}$ and let $-\cG=\{-g: g\in\cG\}$. It follows from \eqref{sym-bound} and \eqref{sym-b} that
\begin{equation}\label{union-bound}
\mathbb{E}_{\cD_n}\left[\sup_{f \in \mathcal{F}_{\operatorname{NN}}} \bigl| \mathcal{R}(f) - {\mathcal{R}}_n(f) \bigr| \right]\leq  2\mathbb{E}_{\cD_n,{\bs{\sigma}}}\!\left[ \sup_{g \in \mathcal{G} \cup -\mathcal{G}} \frac{1}{n}\sum_{i=1}^n \sigma_i g(X_i, Y_i)\right].  
\end{equation}

We now bound the right-hand side of \eqref{union-bound}. For any set $\cM \subseteq \mathbb{R}^n$, define the Rademacher complexity of $\cM$ as
\begin{equation*}
\mathfrak{R}(\cM) = \mathbb{E}_{{\bs{\sigma}}}\left[\sup_{(\zeta_1,\ldots,\zeta_n) \in \cM} \frac{1}{n}\sum_{i=1}^n \sigma_i \zeta_i \right].
\end{equation*}
Conditionally on the sample $\cD_n = \{Z_1, \dots, Z_n\}$ with $Z_i=(X_i,Y_i)$, we denote the set of loss vectors by
\begin{equation*}
\cF_{|\cD_n} = \left\{ \big(\cL(f; Z_1), \ldots, \cL(f; Z_n)\big) : f \in \cF_{\operatorname{NN}} \right\}.
\end{equation*}
To account for the union class $\mathcal{G} \cup -\mathcal{G}$, we define the symmetric completion of this set as $\check{\cF}_{|\cD_n} = \cF_{|\cD_n} \cup \{-\bs{\zeta} : \bs{\zeta} \in \cF_{|\cD_n}\}$. Consequently, \eqref{union-bound} can be rewritten as
\begin{equation}\label{rad-comp-final}
\mathbb{E}_{\cD_n}\left[\sup_{f \in \mathcal{F}_{\operatorname{NN}}} \bigl| \mathcal{R}(f) - {\mathcal{R}}_n(f) \bigr| \right] \leq 2 \, \mathbb{E}_{\cD_n} \big[ \mathfrak{R}(\check{\cF}_{|\cD_n}) \big].
\end{equation}

Observe that for any $f\in\cF_{\operatorname{NN}}$ and any fixed $(x,y)$, Proposition~\ref{bound-value-cL} shows
\begin{equation}\label{bound-for-cL}
|\cL(f;x,y)|\leq2\cB.    
\end{equation}
Recall that for any given two vectors ${\bs u}, {\bs v}\in\R^n$, 
$$\|{\bs u}-{\bs v}\|_2=\sqrt{\frac{1}{n}\sum_{i=1}^n(u_i-v_i)^2}.$$ Denote the $\delta$-covering number of the set $\check{\cF}_{|\cD_n}\subseteq\R^n$ with respect to the normalized Euclidean distance $\|\cdot\|_2$ by $\coverN(\delta,\check{\cF}_{|\cD_n},\|\cdot\|_2)$. Since any $\delta$-cover of $\cF_{|\cD_n}$ can be mirrored to form a $\delta$-cover of $-\cF_{|\cD_n}$, the covering numbers satisfy $\mathcal{N}(\delta, \check{\cF}_{|\cD_n}, \|\cdot\|_2) \leq 2\mathcal{N}(\delta, \cF_{|\cD_n}, \|\cdot\|_2)$. Applying Lemma~\ref{dudly-int} to the set $\check{\cF}_{|\cD_n}$ and using \eqref{bound-for-cL}, we obtain that for any nonnegative integer $M$,
\begin{align*}
\mathfrak{R}(\check{\cF}_{|\cD_n})&\leq 2^{-M+2}\cB+\frac{24\cB}{\sqrt{n}}\sum_{m=1}^{M}2^{-m}\sqrt{\log\cN\big(2^{-m+2}\cB,\check{\cF}_{|\cD_n},\|\cdot\|_2\big)}\\
&\leq 2^{-M+2}\cB + \frac{12}{\sqrt{n}} \int_{2^{-M+1}\cB}^{2\cB} \sqrt{\log\cro{2\mathcal{N} \big( \tau,\cF_{|\cD_n}, \|\cdot\|_2 \big)}} \, d\tau.
\end{align*} 
Observe that for any $\delta\in(0,2\cB)$, there exists an integer $M$ satisfying $2^{-M}\cB\leq\delta<2^{-M+1}\cB$. It then follows that
\begin{equation}\label{bb-1}
\mathfrak{R}(\check{\cF}_{|\cD_n})\leq\inf_{0<\delta<2\cB}\left(4\delta+\frac{12}{\sqrt{n}}\int_{\delta}^{2\cB}\sqrt{\log\cro{2\coverN\big(\tau,\cF_{|\cD_n},\|\cdot\|_2\big)}}\, d\tau\right).
\end{equation}
Moreover, note that for any two functions $f,g\in\cF_{\operatorname{NN}}$ satisfying $\|f-g\|_{L^{\infty}(\cX\times\cE)}\leq\delta/2$, Proposition~\ref{loss-Lip-2} implies that almost surely
\begin{align*}
\left|\cL(f;X,Y)-\cL(g;X,Y)\right|\leq2\E_{\varepsilon}\cro{\|f(X,\varepsilon)-g(X,\varepsilon)\|_2}\leq\delta.
\end{align*}
Thus, we have
\begin{equation}\label{l2-sup-cover-conne}
\log\coverN(\delta,\cF_{|\cD_n},\|\cdot\|_2)\leq  \log\coverN(\delta/2,\cF_{\operatorname{NN}},\|\cdot\|_{\infty}).
\end{equation}
Substituting the covering number bound from \eqref{l2-sup-cover-conne} into inequality \eqref{bb-1} implies, almost surely, that\vspace{-10pt}
\begin{equation}\label{rader-bb}
\mathfrak{R}(\check{\cF}_{|\cD_n})\leq\inf_{0<\delta<2\cB}\left(4\delta+\frac{12}{\sqrt{n}}\int_{\delta}^{2\cB}\sqrt{\log\cro{2\coverN\big(\tau/2,\cF_{\operatorname{NN}},\|\cdot\|_{\infty}\big)}}\, d\tau\right).    
\end{equation}
Taking the expectation over the sample $\cD_n=\{(X_i,Y_i)\}_{i=1}^n$ on both sides of \eqref{rader-bb} and combining \eqref{rad-comp-final} yields
\begin{align*}
&\mathbb{E}_{\cD_n}\left[\sup_{f \in \mathcal{F}_{\operatorname{NN}}} \bigl| \mathcal{R}(f) - {\mathcal{R}}_n(f) \bigr| \right]\\ &\leq\inf_{0<\delta<2\cB}\left(8\delta+\frac{24}{\sqrt{n}}\int_{\delta}^{2\cB}\sqrt{\log\cro{2\coverN\big(\tau/2,\cF_{\operatorname{NN}},\|\cdot\|_{\infty}\big)}}\, d\tau\right). 
\end{align*}
\end{proof}
With the results established above, we obtain the convergence rate of the estimator defined in \eqref{loss-1} based on deep sparse neural networks.
\begin{theorem}
Suppose $f^{*}\in\cG(q,{\bs{d}},{\bs{t}},{\bm{\alpha}},K)$ with $K\geq1$. Set $N_i=\lceil n^{t_i/(t_i+2\alpha_i^*)}\rceil$, $N=\max_{i=0,\ldots,q}N_i$, and let $\widehat f_{\infty}$ denote the estimator obtained from \eqref{loss-1} over the network class $\mathcal{F}(L_n,\bs{p}_n, s_n,\cB)$, whose architecture satisfies
\begin{enumerate}[label=(\roman*), itemsep=0pt, topsep=2pt, parsep=0pt]
\item $L_n=C_{1} \log_2 n$;
\item ${\bs{p}}_n=\left(d+k,C_{2}N,\ldots,C_{2}N,p\right)$;
\item $s_n=C_{3} N\log_2 n$;
\item $\cB\geq K$.
\end{enumerate}
Then, for all sufficiently large $n$, 
\begin{align*}
\mathbb{E}_{\cD_n}\bigl[\cR(\widehat f_{\infty})-\cR(f^*)\bigr]\leq C_{5}\max_{i=0,\ldots,q}n^{-\frac{\alpha_i^*}{2\alpha_i^*+t_i}}\log^{3/2} n.
\end{align*} 
Here, $C_1,C_2,C_3$ are the constants stated in Theorem~\ref{thm-rate}, and $C_5$ is a positive numerical constant independent of $n$.    
\end{theorem}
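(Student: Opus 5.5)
The argument mirrors the proof of Theorem~\ref{thm-rate}, now using the two-term decomposition of Lemma~\ref{decom-1} in place of Lemma~\ref{prop:error-decomp-m}, so there is no Monte Carlo term.

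\textbf{Step 1 (oracle inequality).} We first check that the boundedness hypothesis of Lemma~\ref{stochastic-error} holds. Every element of $\mathcal{F}(L_n,\bs{p}_n,s_n,\cB)$ is truncated by $\sigma_\cB$. Since $f^*\in\cG(q,\bs d,\bs t,\bm\alpha,K)$, its last layer maps into $[a_{q+1},b_{q+1}]^p$ with $|a_{q+1}|,|b_{q+1}|\le K\le\cB$, so $\|f^*\|_\infty\le \cB$. Taking expectations in Lemma~\ref{decom-1}, with an arbitrary fixed $f$ in the class, and applying \eqref{approx-error} gives
\begin{equation*}
\mathbb{E}_{\cD_n}\bigl[\cR(\widehat f_\infty)-\cR(f^*)\bigr]\le 2\inf_{f}\|f-f^*\|_{L^\infty(\cX\times\cE)}+2\,\mathbb{E}_{\cD_n}\Bigl[\sup_{g}|\cR(g)-\cR_n(g)|\Bigr].
\end{equation*}
We then bound the second term by Lemma~\ref{stochastic-error} with $\delta=2/n$. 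Bounding the integrand by its value at the lower limit, the entropy integral is at most
\[
2\cB\sqrt{\log\bigl(2\cN(1/n,\cF_{\rm NN},\|\cdot\|_\infty)\bigr)}.
\]
The stochastic term is therefore $\lesssim 1/n+\sqrt{\log\cN(1/n,\cF_{\rm NN},\|\cdot\|_\infty)/n}$.

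\textbf{Step 2 (entropy).} Apply Proposition~\ref{cover-bound} with $\delta=1/n$, $L=L_n\asymp\log n$, widths $\asymp N$ and $s=s_n\asymp N\log n$. This gives $\log V\lesssim L_n\log N\lesssim\log^2 n$, and hence
\[
\log\cN\lesssim s_n\bigl(\log n+\log L_n+\log V\bigr)\lesssim N\log^3 n .
\]
Because $N=\max_i n^{t_i/(t_i+2\alpha_i^*)}$ (up to ceilings), we get
\[
\sqrt{N\log^3 n/n}\lesssim \max_i n^{-\alpha_i^*/(2\alpha_i^*+t_i)}\log^{3/2}n .
\]

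\textbf{Step 3 (approximation).} Apply Lemma~\ref{dnn-approx-lemma} with $N_i=\lceil n^{t_i/(t_i+2\alpha_i^*)}\rceil$; the condition $N_i\ge(\alpha_i+1)^{t_i}\vee(Q_i+1)e^{t_i}$ holds for large $n$. Both terms $N_i^{-\alpha_i/t_i}$ and $N_in^{-(\alpha_i+t_i)/(2\alpha_i^*+t_i)}$ are of order $n^{-\alpha_i/(2\alpha_i^*+t_i)}$. Raising to the power $\prod_{\ell>i}(\alpha_\ell\wedge1)$ gives $n^{-\alpha_i^*/(2\alpha_i^*+t_i)}$.

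The main care point is architectural. The lemma produces an untruncated network with depth $\overline L$, width $6rN$ and sparsity $\sum_i d_{i+1}(s_iN+4)$, each of the orders $\log n$, $N$ and $N\log n$. The constants $C_1,C_2,C_3$ must be chosen large enough that this network embeds into $\mathcal{F}(L_n,\bs{p}_n,s_n,\infty)$: pad the depth with identity layers $x=\mathrm{ReLU}(x)-\mathrm{ReLU}(-x)$, which costs only $O(p\,L_n)$ extra nonzeros, and pad the widths with zeros. Applying $\sigma_\cB$ does not increase the sup error, since $f^*$ takes values in $[-\cB,\cB]^p$ and truncation is $1$-Lipschitz towards that box. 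Combining Steps 1–3 gives the bound with a constant $C_5$, where the $\log^{3/2}n$ factor from the stochastic term dominates.
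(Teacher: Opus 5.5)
Your proposal is correct and follows the paper's proof: Lemma~\ref{decom-1} with \eqref{approx-error}, then Lemma~\ref{stochastic-error} at $\delta=2/n$ with the entropy integrand bounded by its value at the lower limit, then Proposition~\ref{cover-bound} for the $N\log^3 n$ entropy bound and Lemma~\ref{dnn-approx-lemma} for the approximation term. Your boundedness check, depth and width padding, and truncation remarks fill in details the paper leaves implicit, and they are sound. One small correction: padding at the output with the $\pm$ trick also requires duplicating the last affine map, so the extra cost is not just $O(pL_n)$; plain identity layers at the input, where $\bs y\in[0,1]^{d+k}$ is nonnegative, avoid this. Either way the extra sparsity is $O(N\log n)$ and is absorbed into $s_n$.
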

\begin{proof}
The proof is similar to that of Theorem~\ref{thm-rate}. Combining \eqref{approx-error}, Lemma~\ref{decom-1} and Lemma~\ref{stochastic-error}, we have
\begin{align*}
&\mathbb{E}_{\cD_n}\cro{ \mathcal{R}(\widehat f_{\infty}) - \mathcal{R}(f^*)}\\
&\leq 2\inf_{f\in\mathcal{F}_{\mathrm{NN}}}\hspace{-3pt}\|f-f^*\|_{L^\infty(\cX\times\cE)}+16\inf_{0<\delta<2\cB}\left(\delta+\frac{3}{\sqrt{n}}\int_{\delta}^{2\cB}\hspace{-7pt}\sqrt{\log\cro{2\coverN\big(\tau/2,\cF_{\operatorname{NN}},\|\cdot\|_{\infty}\big)}}\, d\tau\right). \end{align*}
Then, by setting $\delta=2/n$ and applying Proposition~\ref{cover-bound} and Lemma~\ref{dnn-approx-lemma}, we complete the proof.
\end{proof}
Comparing the above result with Theorem~\ref{thm-rate} in the main text, we conclude that, after taking expectation with respect to the sampled noise variables, the estimator $\widehat f_m$ achieves the same convergence rate as the ideal estimator $\widehat f_{\infty}$.

\bibliographystyle{apalike}
\bibliography{bib}  
\end{document}

%% file: begin0.tex
\usepackage[utf8]{inputenc}
\usepackage[T1]{fontenc}
\usepackage{lmodern}
\usepackage{yhmath}

\usepackage{amsthm,amsmath,amssymb,amsbsy,bbm,mathrsfs,supertabular,
eurosym,graphicx,enumitem,xcolor}
\usepackage{mathtools}

\newtheoremstyle{BBstyle0}  {}{}{\itshape}{}{\bfseries}{}{6pt}{}
\newtheoremstyle{BBstyle1}  {3pt}{3pt}{\rmfamily}{}{\itshape}{: }{3pt}{}
\newtheoremstyle{BBstyle2}  {3pt}{3pt}{\itshape}{}{\bfseries\large}{}{0pt}{}
\newtheoremstyle{BBstyle3}  {}{}{\itshape}{}{\bfseries}{: }{3pt}{}
\newtheoremstyle{BBstyle4}  {}{}{\rmfamily}{}{\bfseries}{}{6pt}{}
  
\usepackage[normalem]{ulem}

%% file: core.tex
\newcommand{\cro}[1]{\left[{#1}\right]}

\newcommand{\argmin}{\mathop{\rm argmin}}

\newcommand{\R}{{\mathbb{R}}}

\DeclareMathAlphabet{\mathscrbf}{OMS}{mdugm}{b}{n}

\newcommand{\cA}{{\mathcal{A}}}
\newcommand{\cB}{{\mathcal{B}}}

\newcommand{\cD}{{\mathcal{D}}}
\newcommand{\cE}{{\mathcal{E}}}
\newcommand{\cF}{{\mathcal{F}}}
\newcommand{\cG}{{\mathcal{G}}} 
\newcommand{\cH}{{\mathcal{H}}}

\newcommand{\cL}{{\mathcal{L}}} 
\newcommand{\cM}{{\mathcal{M}}}
\newcommand{\cN}{{\mathcal{N}}}

\newcommand{\cR}{{\mathcal{R}}}

\newcommand{\cU}{{\mathcal{U}}}
\newcommand{\cV}{{\mathcal{V}}}

\newcommand{\cX}{{\mathcal{X}}}
\newcommand{\cY}{{\mathcal{Y}}}

\newcommand{\bs}[1]{\boldsymbol{#1}}

%% file: bib.bib
@article{Schmidt-Hieber,
author = {Johannes Schmidt-Hieber},
title = {{Nonparametric regression using deep neural networks with ReLU activation function}},
journal = {Annals of Statistics},
year = {2020},
volume = {48},
number = {4},
pages = {1875-1897},
}

@book{shalevshwartz2014understanding,
  title     = {Understanding Machine Learning: From Theory to Algorithms},
  author    = {Shalev-Shwartz, Shai and Ben-David, Shai},
  year      = {2014},
  month     = {07},
  day       = {05},
  publisher = {Cambridge University Press},
  address   = {Cambridge},
  isbn      = {9781107298019}
}

@article{baringhaus2004new,
  author  = {Baringhaus, Ludwig and Franz, Claudia},
  title   = {On a new multivariate two-sample test},
  journal = {Journal of Multivariate Analysis},
  year    = {2004},
  volume  = {88},
  number  = {1},
  pages   = {190--206}
}

@article{shen2024engression,
    author = {Shen, Xinwei and Meinshausen, Nicolai},
    title = {Engression: extrapolation through the lens of distributional regression},
    journal = {Journal of the Royal Statistical Society Series B: Statistical Methodology},
    volume = {87},
    number = {3},
    pages = {653-677},
    year = {2024},
    month = {11},
    issn = {1369-7412},
    doi = {10.1093/jrsssb/qkae108},
    url = {https://doi.org/10.1093/jrsssb/qkae108},
    eprint = {https://academic.oup.com/jrsssb/article-pdf/87/3/653/60827977/qkae108.pdf},
}

@book{goodfellow2016deep,
  author    = {Goodfellow, Ian and Bengio, Yoshua and Courville, Aaron},
  title     = {Deep Learning},
  publisher = {MIT Press},
  year      = {2016}
}

@article{peter1,
  author={Bartlett, P. and Maiorov, V. and Meir, R.},
  journal={Neural Computation}, 
  title={Almost Linear {VC} Dimension Bounds for Piecewise Polynomial Networks}, 
  year={1998},
  volume={10},
  number={8},
  pages={2159--2173}
}

@article{peter2,
    author = "Bartlett, P. and Harvey, N. and Liaw, C. and Mehrabian, A.",
    title ="Nearly-tight {VC}-dimension and Pseudodimension Bounds for Piecewise Linear Neural Networks" ,
    journal = "Journal of Machine Learning Research" ,
    year = "2019",
    volume="20",
    pages="1--17"
}

@article{2014dropout,
  author    = {Srivastava, Nitish and Hinton, Geoffrey and Krizhevsky, Alex and Sutskever, Ilya and Salakhutdinov, Ruslan},
  title     = {Dropout: A Simple Way to Prevent Neural Networks from Overfitting},
  journal   = {Journal of Machine Learning Research},
  volume    = {15},
  pages     = {1929--1958},
 year  = {2014},
}

@article{szekely2013energy,
  author    = {Sz{\'e}kely, G{\'a}bor J. and Rizzo, Maria L.},
  title     = {Energy statistics: A class of statistics based on distances},
  journal   = {Journal of Statistical Planning and Inference},
  year      = {2013},
  volume    = {143},
  number    = {8},
  pages     = {1249--1272}
}

@book{delapena,
  author    = {de la Peña, Victor H. and Giné, Evarist},
  title     = {Decoupling: From Dependence to Independence},
  series    = {Probability and Its Applications},
  publisher = {Springer},
  address   = {New York},
  year      = {1999},
  isbn      = {978-0-387-98616-6},
  pages     = {392}
}

@article{Bartlett_2005,
   title={Local {R}ademacher complexities},
   volume={33},
   number={4},
   journal={Annals of Statistics},
   publisher={Institute of Mathematical Statistics},
   author={Bartlett, Peter L. and Bousquet, Olivier and Mendelson, Shahar},
   pages={1497-1537},
   year={2005}
}

@book{van-exp,
  author    = {Aad W. van der Vaart and Jon A. Wellner},
  title     = {Weak Convergence and Empirical Processes: With Applications to Statistics},
  edition   = {Second},
  year      = {2023},
  publisher = {Springer},
  series    = {Springer Series in Statistics}
}

@inproceedings{sharma2023bayesian,
  author    = {Mudit Sharma and Sebastian Farquhar and Eric Nalisnick and Tom Rainforth},
  title     = {Do {B}ayesian Neural Networks Need to Be Fully Stochastic?},
  booktitle = {Proceedings of the 26th International Conference on Artificial Intelligence and Statistics},
  series    = {Proceedings of Machine Learning Research},
  pages     = {7694--7722},
  year      = {2023}
}

@article{Vu2024Correction,
  author  = {Schmidt-Hieber, Johannes and Vu, Don},
  title   = {Correction to ``{N}onparametric regression using deep neural networks with {ReLU} activation function''},
  journal = {Annals of Statistics},
  year    = {2024},
  volume  = {52},
  number  = {1},
  pages   = {413--414},
  note    = {Correction to the 2020 article}
}

@article{chai2026,
  title = {Neural generative distributional regression},
  author = {Chai, Jinhang and Fan, Jianqing and Gu, Yihong},
  year = {2026},
 journal = {arXiv preprint, arXiv:2605.02062}
}

@article{sophie2021,
  author  = {Kohler, Michael and Langer, Sophie},
  title   = {On the Rate of Convergence of Fully Connected Deep Neural Network Regression Estimates},
  journal = {Annals of Statistics},
  year    = {2021},
  volume  = {49},
  number  = {4},
  pages   = {2231--2249}
}

@article{li1991sliced,
  title={Sliced inverse regression for dimension reduction},
  author={Li, Ker-Chau},
  journal={Journal of the American Statistical Association},
  volume={86},
  number={414},
  pages={316--327},
  year={1991},
  publisher={Taylor \& Francis}
}

@article{michigan2026,
    author = {Jiaqi Huang and Gongjun Xu and Ji Zhu},
    title = {Theoretical Analysis of Engression and Reverse {M}arkov Engression},
    journal = {arXiv preprint, arXiv:2606.01002},
    year = {2026}
}

@article{pic2023,
  author    = {Romain Pic and Cl{\'e}ment Dombry and Philippe Naveau and Maxime Taillardat},
  title     = {Distributional regression and its evaluation with the {CRPS}: Bounds and convergence of the minimax risk},
  journal   = {International Journal of Forecasting},
  volume    = {39},
  number    = {4},
  pages     = {1564--1572},
  year      = {2023},
  doi       = {10.1016/j.ijforecast.2022.11.001},
  publisher = {Elsevier}
}

@article{peter2025,
      title={Generative modelling via quantile regression}, 
      author={Schmidt-Hieber, Johannes and Zamolodtchikov, Petr},
      year={2024},
journal={arXiv preprint, arXiv:2409.04231}
}

@book{cook1998regression,
  title={Regression Graphics: Ideas for Studying Regressions through Graphics},
  author={Cook, R. Dennis},
  year={1998},
  publisher={John Wiley \& Sons},
  address={New York}
}

@article{bengio2013representation,
  title={Representation learning: A review and new perspectives},
  author={Bengio, Yoshua and Courville, Aaron and Vincent, Pascal},
  journal={IEEE Transactions on Pattern Analysis and Machine Intelligence},
  volume={35},
  number={8},
  pages={1798--1828},
  year={2013},
  publisher={IEEE}
}

@book{koenker2005quantile,
  title={Quantile Regression},
  author={Koenker, Roger},
  year={2005},
  publisher={Cambridge University Press},
  address={Cambridge}
}

@article{hall2004cross,
  title={Cross-validation and the estimation of conditional probability densities},
  author={Hall, Peter and Racine, Jeffrey S and Li, Qi},
  journal={Journal of the American Statistical Association},
  volume={99},
  number={468},
  pages={1015--1026},
  year={2004},
  publisher={Taylor \& Francis}
}

@article{Huang2022GANerror,
  title   = {An Error Analysis of Generative Adversarial Networks for Learning Distributions},
  author  = {Huang, Jian and Jiao, Yuling and Li, Zhen and Liu, Shiao and Wang, Yang and Yang, Yunfei},
  journal = {Journal of Machine Learning Research},
  volume  = {23},
  pages   = {1--43},
  year    = {2022}
}

@article{Stone1982,
  title   = {Optimal global rates of convergence for nonparametric regression},
  author  = {Stone, Charles J.},
  journal = {Annals of Statistics},
  year    = {1982},
  volume  = {10},
  number  = {4},
  pages   = {1040--1053}
}

@book{Tsybakov2009,
  title     = {Introduction to Nonparametric Estimation},
  author    = {Tsybakov, Alexandre B.},
  year      = {2009},
  publisher = {Springer},
  address   = {New York}
}

@article{HuangHorowitzWei2010,
  title   = {Variable Selection in Nonparametric Additive Models},
  author  = {Huang, Jian and Horowitz, Joel L. and Wei, Feng},
  journal = {Annals of Statistics},
  year    = {2010},
  volume  = {38},
  number  = {4},
  pages   = {2282--2313}
}

@article{chen2026semi,
  title={Semi-Supervised Learning on Graphs using Graph Neural Networks},
  author={Chen, Juntong and Donnat, Claire and Klopp, Olga and Schmidt-Hieber, Johannes},
  journal={arXiv preprint arXiv:2602.17115},
  year={2026}
}

@inproceedings{goodfellow2014generative,
  author    = {Ian J. Goodfellow and Jean Pouget-Abadie and Mehdi Mirza and Bing Xu and David Warde-Farley and Sherjil Ozair and Aaron Courville and Yoshua Bengio},
  title     = {Generative Adversarial Nets},
  booktitle = {Advances in Neural Information Processing Systems},
  volume    = {27},
  pages     = {2672--2680},
  year      = {2014}
}

@article{zhou2023deep,
  author    = {Xingyu Zhou and Yuling Jiao and Jin Liu and Jian Huang},
  title     = {A Deep Generative Approach to Conditional Sampling},
  journal   = {Journal of the American Statistical Association},
  volume    = {118},
  number    = {543},
  pages     = {1837--1848},
  year      = {2023},
  doi       = {10.1080/01621459.2021.2016424}
}

@article{austin2015,
  author    = {Tim Austin},
  title     = {Exchangeable Random Measures},
  journal   = {Annales de l'Institut Henri Poincar\'e, Probabilit\'es et Statistiques},
  volume    = {51},
  number    = {3},
  pages     = {842--861},
  year      = {2015},
  doi       = {10.1214/13-AIHP584}
}

@article{song2026wasserstein,
  author = {Song, Shanshan and Wang, Tong and Shen, Guohao and Lin, Yuanyuan and Huang, Jian},
  title = {Wasserstein generative regression},
  journal = {Journal of the Royal Statistical Society Series B: Statistical Methodology},
  year = {2026},
  volume = {88},
  number = {1},
  pages = {330--351},
  doi = {10.1093/jrsssb/qkaf053},
  publisher = {Oxford University Press}
}

@book{kallenberg,
  author    = {Olav Kallenberg},
  title     = {Foundations of Modern Probability},
  series    = {Probability and Its Applications},
  year      = {1997},
  publisher = {Springer},
  address   = {New York},
  edition   = {1st},
  doi       = {10.1007/978-1-4757-4015-8}
}

@article{2017regularity,
  title={A regularity class for the roots of nonnegative functions},
  author={Ray, Kolyan and Schmidt-Hieber, Johannes},
  journal={Annali di Matematica Pura ed Applicata},
  year={2017},
  volume={196},
  number={6},
  pages={2101--2113}
}

@article{baraud2014estimating,
  title={Estimating composite functions by model selection},
  author={Baraud, Yannick and Birg{\'e}, Lucien},
  journal={Annales de l'Institut Henri Poincar\'e, Probabilit\'es et Statistiques},
  volume={50},
  number={1},
  pages={285--314},
  year={2014}
}

@article{juditsky2009nonparametric,
  title={Nonparametric estimation of composite functions},
  author={Juditsky, Anatoli and Lepski, Oleg and Tsybakov, Alexandre},
  journal={Annals of Statistics},
  volume={37},
  number={3},
  pages={1360--1404},
  year={2009}
}
